\title{Mix and Match: An Optimistic Tree-Search Approach for Learning
Models from Mixture Distributions}
\author{
Matthew Faw\thanks{University of Texas at Austin \dotfill
    \texttt{matthewfaw@utexas.edu}}
\and 
Rajat Sen\thanks{Amazon \dotfill
    \texttt{rajat.sen@utexas.edu}}
\and 
Karthikeyan Shanmugam\thanks{IBM Research NY \dotfill 
    \texttt{karthikeyan.shanmugam2@ibm.com}}
\and
Constantine Caramanis\thanks{University of Texas at Austin \dotfill 
    \texttt{constantine@utexas.edu}}
\and
Sanjay Shakkottai\thanks{University of Texas at Austin \dotfill 
    \texttt{sanjay.shakkottai@utexas.edu}}
}
\begin{document}

\maketitle

\begin{abstract}

We consider a covariate shift problem where one has access to several different
training datasets for the same learning problem and a small validation set
which possibly differs from all the individual training distributions. This
covariate shift is caused, in part, due to unobserved features in the datasets.
The objective, then, is to find the best mixture distribution over the training
datasets (with only observed features) such that training a learning algorithm
using this mixture has the best validation performance. Our proposed algorithm,
\match{}, combines stochastic gradient descent (SGD) with optimistic tree search and model re-use (evolving partially trained models with samples from different mixture distributions) over the space of mixtures, for this task. We prove simple regret guarantees for our algorithm with respect to recovering the optimal mixture, given a total budget of SGD evaluations. Finally, we validate our algorithm on two real-world datasets.

\end{abstract}
 
\newpage
\tableofcontents
\newpage

\section{Introduction}
\label{sec:intro}

The problem of {\em covariate shift} -- where the distribution on the covariates is different across the training and validation datasets -- has long been appreciated as an issue of central importance for real-world problems (e.g., \cite{shimodaira2000improving,gretton2009covariate} and references therein). Covariate shift is often ascribed to a changing population, bias in selection, or imperfect, noisy or missing measurements. Across these settings, a number of approaches to mitigate covariate shift attempt to re-weight the samples of the training set in order to match the target set distribution \cite{shimodaira2000improving,zadrozny2004learning,huang2007correcting,gretton2009covariate}. For example, \cite{huang2007correcting,gretton2009covariate} use unlabeled data to compute a good kernel reweighting. 

We consider a setting where covariate shift is due to {\em unobserved variables} in different populations (datasets). A motivating example is the setting of predictive health care in different regions of the world. Here, the unobserved variables may represent, for example, prevalence and expression of different conditions, genes, etc. in the makeup of the population. Another key example, and one for which we have real-world data (see Section \ref{sec:sims}), is predicting what insurance plan a customer will purchase, in a given state. The unobserved variables in this setting might include employment information (security at work), risk-level of driving, or state-specific factors such as weather or other driving-related features.
Motivated by such applications, we consider the setting where the joint distribution  (of observed, unobserved variables and labels) may differ across various populations, but the conditional distribution of the label ({\em conditioned on both observed and unobserved variables}) remains invariant. 
The goal, then, is to determine a mixture distribution over the input datasets (training populations) in order to optimize performance on the validation set.

The contributions in this paper are as follows:

\textbf{(i) Search based methods for covariate shift:} With latent/unobserved features, we  show in Section~\ref{sec:pformulation} that traditional methods such as moment matching cannot learn the best mixture distribution (over input datasets) that optimizes performance with respect to a validation set.
Instead, we show that searching over input mixture distributions using validation loss results in the recovery of the true model (with respect to the validation, Proposition~\ref{prop:validationLoss}). This motivates our tree search based approach. 

\textbf{(ii) \match{} -- Optimistic tree search over models:} We propose
\match{} -- an algorithm that is built on SGD and a variant of optimistic
tree-search (closely related to Monte Carlo Tree Search). Given a budget
(denoted as $\Lambda$) on the
total number of online SGD iterations, \match{} adaptively allocates this
budget to different population reweightings (mixture distributions over input
datasets) through an iterative tree-search procedure (Section~\ref{sec:algo}).
Importantly, \match{} expends a majority of the SGD iteration
budget on reweightings that are "close" to the optimal reweighting mixture by using two important ideas:\\
{\em (a) Parsimony in expending iterations:} For a reweighting distribution
that we have low confidence of being ``good,'' \match{} expends only a small number of SGD iterations to train the model; doing so, however, results in biased and noisy evaluation of this model, due to early stopping in training. \\
{\em (b) Re-use of models:} Rather than train a model from scratch, \match{} reuses and updates a partially trained model from past reweightings that are ``close'' to the currently chosen reweighting (effectively re-using SGD iterations from the past).

\textbf{(iii) SGD concentrations without global gradient bounds:} The analysis
of \match{} requires a new concentration bound on the error of the final iterate
of SGD. 
{\color{black}
Instead of assuming a uniform bound on the norm of the stochastic
gradient over the domain, as is typical in the stochastic optimization
literature, we directly exploit
properties of the {\em averaged} loss (strong convexity) and individual loss
(smoothness) combined with a bound on the norm of the stochastic gradient
\textit{at a single point} to bound the norm of the stochastic gradient at each
iterate. 
Using a single parameter ($\Lambda$, the budget allocated to \match{}),
we are able to balance the worst-case 
growth of the norm of the
stochastic gradient with the probability of failure of the SGD concentration.
}
This new result
(Theorem~\ref{thm:SGD_improved}) provides tighter high-probability guarantees
on the error of the final SGD iterate in settings where the diameter of the
domain is large and/or cannot be controlled. 
 \section{Related Work}
\label{sec:rwork}
Transfer learning has assumed an increasingly important role, especially in settings where we are either computationally limited or data-limited but can leverage significant computational and data resources on domains that differ slightly from the target domain \cite{raina2007self,pan2009survey,dai2009eigentransfer}. This has become an important paradigm in neural networks and other areas \cite{bengio2011deep,yosinski2014transferable,oquab2014learning,kornblith2018better}. 
A related problem is that of covariate shift \cite{shimodaira2000improving,zadrozny2004learning,gretton2009covariate}, where the target population distribution may differ from that of the training distribution. 
Some recent works have considered addressing this problem by reweighting samples from the training dataset so that the distribution better matches the test set, for example by using unlabelled data \cite{huang2007correcting, gretton2009covariate} or variants of importance sampling \cite{sugiyama2007covariate,sugiyama2008direct}.
The authors in~\cite{mohri2019agnostic} study a related problem of learning from different datasets, but provide minimax bounds in terms of an agnostically chosen test distribution.

Our work is related to, but differs from all the above.  As we explain in Section \ref{sec:psetting}, we share the goal of transfer learning: we have access to enough data for training from a family of distributions that are different than the validation distribution (from which we have only enough data to validate). However, to address the effects of  latent features, we adopt an optimistic tree search approach -- something that, as far as we know, has not been undertaken.

A key component of our tree-search based approach to the covariate shift
problem is the computational budget.
We use a single SGD iteration as the currency denomination of our budget, which requires us to minimize the number of SGD steps in total that our algorithm computes, and thus to understand the final-iterate optimization error of SGD in high probability. There are many works deriving error bounds on the final SGD iterate in expectation (e.g. \cite{bubeck15, bottou2018optimization, sgdHogwild}) and in high probability (e.g. \cite{rakhlin2012making,harvey2018tight} and references therein). However, to our knowledge, optimization error bounds on the final iterate of SGD when the stochastic gradient is assumed bounded only at the optimal solution exist only in expectation \cite{sgdHogwild}. We prove a similar result in high probability.

Optimistic tree search makes up the final important ingredient in our algorithm. These ideas have been used in a number of settings \cite{bubeck2011x,grill2015black}. Most relevant to us is a recent extension of these ideas to a setting with biased search \cite{sen2018multi,sen2019noisy}. 
 \section{Problem Setting and Model}
\label{sec:psetting}
 
\subsection{Data model}
Each dataset $\DD$ consists of samples of the form 
$z=(\xx,y)\in\RR^{d}\times \RR$, where $\xx$ corresponds to the \textit{observed} 
feature vector, and $y$ is the corresponding label.
Traditionally, we would regard dataset $\DD$ as governed by a distribution 
$p(\xx,y)$. However, we consider the setting where each sample $z$ is a 
\textit{projection} from some higher dimensional vector $\hat{z}=(\xx,\uu,y)$,
where $\uu\in\RR^{\hat{d}}$ is the unobserved feature vector. The corresponding
distribution function describing the dataset is thus $p(\xx,\uu,y)$.
This viewpoint allows us to model, for example, predictive healthcare
applications where the unobserved features $\uu$ could represent uncollected,
region specific information that is potentially useful in the prediction task 
(e.g., dietary preferences, workday length, etc.).

We assume access to \textbf{$K$ training datasets} $\{\DD_i\}_{i=1}^K$
(e.g., data for a predictive healthcare task collected in $K$ 
different countries)
with corresponding p.d.f.'s $\{p_i(\xx,\uu,y)\}_{i=1}^K$
through a \textit{sample oracle} to be described shortly.
Taking $\AA := \{\aa\in\RR^K : \aa \succeq \0,\|\aa\|_1 = 1\}$ as the 
$(K-1)$-dimensional mixture simplex, for any $\aa\in\AA$, we denote the
mixture distribution over the training datasets as 
$\pa(\xx,\uu,y):=\sum_{i=1}^K \aa_i p_i(\xx,\uu,y)$. 
Samples from these datasets may be obtained through a \textit{sample oracle} which,
given a mixture $\aa$, returns an independent sample from the corresponding mixture
distribution.
In the healthcare example, sampling from $\pa$ would mean first sampling an 
index $i$ from the multinomial distribution represented by $\aa$ 
and then drawing a new medical record from the database of the $i$th country.
Additionally, we have access to a \textit{small} (see Remark \ref{rem:smallValidationRegime}) validation dataset $\DD_{te}$ with corresponding 
distribution $\pte(\xx,\uu,y)$, for example, from a new country where only
limited data 
has been collected. We are interested in training a predictive model for the validation distribution, but we \textit{do not} have oracle sampling access 
to this distribution -- if we did, we could simply train a model through SGD directly on this
dataset. Instead, we only assume \textit{oracle access to evaluating the 
validation loss of a constrained set of models} (we define our loss model and the constrained set shortly).
We make the following assumptions on the validation distribution:

\begin{assumption}
\label{assump:validationInConvHull}
We assume that the marginal distribution of observed and
unobserved features for the validation 
distribution lies in the convex hull
of the corresponding training distributions -- that is, there exists some $\aa^* \in \AA$ such that 
$\pte(\xx,\uu)=\pas(\xx,\uu)$.
\end{assumption}

Additionally, we make the following assumption on the conditional distribution of
the validation labels, which we refer to as \textbf{conditional shift invariance}:
\begin{assumption}[Conditional Shift Invariance]
\label{assump:condShiftInvariance}
We assume that the distribution of labels \textit{conditioned on both observed
and unobserved features} is fixed for all training and validation distributions. That 
is, for each $i\in[K]$
\begin{align*}
    p_i(y | \xx,\uu) = p(y | \xx, \uu) = \pte(y | \xx, \uu)
\end{align*}
for some fixed distribution $p$.
\end{assumption}
We note that Assumption \ref{assump:condShiftInvariance} generalizes the covariate
shift assumption of \cite{heckman1977sample,shimodaira2000improving} ($p(y | x)$ is fixed) to account for \textit{latent variables}.
 
\subsection{Loss function model} 
We denote the loss for a particular sample
$z$ and model $\ww\in \WW :=\RR^m$ as $f(\ww; z).$ For any mixture distribution
$\aa\in\AA$, we denote $\Fa(\ww) := \EE_{z\sim \pa}[f(\ww; z)]$ as the
\textit{averaged} loss function over distribution $\pa$.
Note that when $\aa$ is clear from context, we write $F(\ww)$.
Similarly, we
denote $\Fte(\ww) := \EE_{z\sim \pte}[f(\ww; z)]$ as the averaged validation loss.
We place the following assumptions on our loss function, similar 
to~\cite{sgdHogwild} (refer to Appendix \ref{sec:defs} for these standard
definitions):
\begin{assumption}
\label{assump1}
For each
loss function $f(\cdot; z)$ corresponding to a \textbf{sample}
$z\in\mathcal{Z}$,
we assume that $f(\cdot; z)$ is: \emph{(i)}~\textbf{$\beta$-smooth}
and \emph{(ii)}~\textbf{convex}.

Additionally, we assume that, for each $\aa\in\AA$, the \textbf{averaged}
loss function
$\Fa(\cdot)$ is: \emph{(i)}~\textbf{$\mu$-strongly convex}
and \emph{(ii)}~\textbf{$L$-Lipschitz}.
\end{assumption}

Notice that Assumption \ref{assump1} requires only the \textit{averaged} loss function $\Fa(\cdot)$
-- \textit{not} each individual loss
function $f(\cdot; z)$ -- to be strongly convex.
We additionally assume the following bound on the gradient of $f$ at $\bb^*$ along every sample path:

\begin{assumption}[A \textit{weaker} gradient norm bound]\label{assumGrad}
For all $\aa\in\AA$, there exists constants $\GG(\aa)\in\RR_+$ such that $\| \nabla f(\bb^*(\aa); z) \|_2^2 \leq \GG{(\aa)}.$ { When $\aa$ is clear from context, we write $\GG.$}
\end{assumption}

We note that Assumption \ref{assumGrad} is \textit{weaker} than the typical universal bound on the norm of the stochastic gradient assumed in, for example, \cite{rakhlin2012making, harvey2018tight}, and is taken from
\cite{sgdHogwild}.
  \section{Problem Formulation}
\label{sec:pformulation}

Given $K$ training datasets 
$\{\DD_i\}_{i=1}^K$ (e.g., healthcare data from $K$ countries) and a small (see Remark \ref{rem:smallValidationRegime}),
labeled validation dataset 
$\DD_{te}$ (e.g., preliminary data collected in a new country), we wish to find a model $\hat{\ww}$ such that the loss averaged
over the validation \textit{distribution}, $\pte$, is as small
as possible, using a computational budget to be described shortly. Under the notation introduced in Section 
\ref{sec:psetting}, we wish to approximately solve the optimization problem:
\begin{align}
\label{eq:goalW}
    \min_{\ww\in\WW} \Fte(\ww),
\end{align}
subject to a \textbf{computational budget of $\Lambda$ SGD iterations}.
A computational budget is often used in online optimization
as a model for constraints on the number of
i.i.d.\ samples available to the algorithm (see for example the introduction to Chapter 6 in
\cite{bubeck15}).

Note that one \textit{could} run SGD directly
on the validation dataset, $\Dte$, in order to minimize the expected loss on 
this population,
as long as the number of SGD steps is \textit{linear} 
in the size of $\Dte$ \cite{hardt2015train}.
When the number of validation samples is \textit{small relative to the
computational budget $\Lambda$}, such as in the
predictive healthcare
example where little data from the new target country is available,
the resulting error guarantees
of such a procedure will be correspondingly weak.
Thus, we hope to leverage
both training data \textit{and} validation data in order to solve (\ref{eq:goalW}).

Though we cannot train a model using $\Dte$,
we will assume $\Dte$ is sufficiently large to
obtain an accurate estimate of the validation loss. We model evaluations of
validation loss through oracle access to $\Fte(\cdot)$, which may be queried \textbf{only on models
trained by running at least one SGD iteration on some mixture distribution over the training datasets.}

\begin{remark}[Small validation dataset regime]
\label{rem:smallValidationRegime}
Under no assumptions on the usage of the (size $n$) validation set,
only $k~=~O(n^2)$ queries can be made while maintaining nontrivial generalization
guarantees \cite{bassily2016algorithmic, mania2019model}. When tracking only
the best model, as in \cite{blum2015ladder,hardt2017climbing}, $k$ can be
roughly \textbf{exponential} in the size of the validation set. While our
setting is more similar to this latter setting, a precise characterization of
the sample complexity, and thus of the precise bounds on the size of the
validation set, is important. Here we focus on the computational aspects, and
leave the formalization of generalization guarantees in our setting to future
work.
\end{remark}

Let $\ww^*(\aa) := \argmin_{\ww\in\WW} \Fa(\ww)$ be the optimal model for
training mixture distribution $\pa$. Similarly, let us denote $\hat{\ww}(\aa)$
as the model obtained after running $1\leq T \leq \Lambda$ steps of online SGD on $\pa$.
Then we can \textit{minimize validation 
loss}
$\Fte(\cdot)$ by \textbf{(i)}~\textit{iteratively selecting mixtures} $\aa\in\AA$,
\textbf{(ii)}~using a
\textit{portion} of the SGD budget to \textit{solve for} $\hat{\ww}(\aa)$, and 
\textbf{(iii)}~\textit{evaluating the
quality of the selected mixture} by obtaining the validation loss
$\Fte(\hat{\ww}(\aa))$ (through oracle access, as discussed earlier).
That is, using $\Lambda$ total SGD iterations, we can find a \textit{mixture
distribution} $\aa(\Lambda)$ and \textit{model} $\hat{\ww}(\aa(\Lambda))$ so that
$\Fte(\hat{\ww}(\Lambda))$ is as close as possible to
\begin{align}
\label{eq:goalAlpha}
     \min_{\aa\in\AA} G(\aa) = \min_{\aa\in\AA} \Fte(\ww^*(\aa)),
\end{align}
where $G(\aa) := \Fte(\ww^*(\aa))$ is the test loss evaluated at the
\textit{optimal model for $\pa$}. 

Under our Assumptions \ref{assump:validationInConvHull}
and \ref{assump:condShiftInvariance}, we have the following 
connection between training and validation loss, which establishes 
that solving (\ref{eq:goalW}) and (\ref{eq:goalAlpha}) are \textbf{equivalent}:

\begin{proposition}
\label{prop:validationLoss}
By Assumptions \ref{assump:validationInConvHull} and \ref{assump:condShiftInvariance}, the validation loss can be written in terms of mixtures of training loss:
\begin{align}
    \Fte(\ww) = \Fas(\ww)
\end{align}
for each $\ww\in\WW$, where $\aa^*$ is the mixture specified by Assumption
\ref{assump:validationInConvHull}.
As a consequence, finding $\ww^*$ which solves (\ref{eq:goalW}) is
\textbf{equivalent} to finding the mixture $\aa^*$ and corresponding model
$\ww^*(\aa^*)$ which solves (\ref{eq:goalAlpha}), since
$\Fte(\ww^*) = \Fas(\ww^*) = \Fas(\ww^*(\aa^*))$.
\end{proposition}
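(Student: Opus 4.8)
The plan is to prove the stronger fact that the full joint densities $\pte(\xx,\uu,y)$ and $\pas(\xx,\uu,y)$ coincide, and then deduce the statement about the losses by integrating out everything, noting that $f(\ww;z)$ depends on $z=(\xx,y)$ only (not on $\uu$). First I would factor each joint density into conditional-times-marginal form: $\pte(\xx,\uu,y)=\pte(y\mid\xx,\uu)\,\pte(\xx,\uu)$, where by Assumption~\ref{assump:condShiftInvariance} the first factor equals the common $p(y\mid\xx,\uu)$ and by Assumption~\ref{assump:validationInConvHull} the second equals $\pas(\xx,\uu)=\sum_{i=1}^K \aa^*_i\,p_i(\xx,\uu)$. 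On the other side I would expand $\pas(\xx,\uu,y)=\sum_{i=1}^K \aa^*_i\, p_i(\xx,\uu,y)=\sum_{i=1}^K \aa^*_i\, p_i(y\mid\xx,\uu)\,p_i(\xx,\uu)$ and apply Assumption~\ref{assump:condShiftInvariance} once more to replace each $p_i(y\mid\xx,\uu)$ with $p(y\mid\xx,\uu)$; pulling this common factor out of the sum leaves $p(y\mid\xx,\uu)\sum_{i=1}^K \aa^*_i\, p_i(\xx,\uu)$, which is exactly the expression obtained for $\pte$. Hence $\pte\equiv\pas$ as densities on $\RR^{d}\times\RR^{\hat{d}}\times\RR$.

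Next, I would integrate the (integrable, by the smoothness/Lipschitz assumptions that make $\Fte,\Fas$ finite) loss $f(\ww;\cdot)$ against these equal densities. Because $f(\ww;z)=f(\ww;(\xx,y))$ does not involve $\uu$, the $\uu$-integral merely marginalizes either density down to $\pte(\xx,y)=\pas(\xx,y)$, so $\Fte(\ww)=\EE_{z\sim\pte}[f(\ww;z)]=\EE_{z\sim\pas}[f(\ww;z)]=\Fas(\ww)$ for every $\ww\in\WW$, which is the first display of the proposition.

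For the ``as a consequence'' part I would argue purely from this pointwise identity of objectives. Since $\Fte\equiv\Fas$ on $\WW$, the problems $\min_{\ww}\Fte(\ww)$ and $\min_{\ww}\Fas(\ww)$ have the same value and the same set of minimizers, and $\ww^*(\aa^*)=\argmin_{\ww\in\WW}\Fas(\ww)$ by definition; hence $\Fte(\ww^*)=\Fas(\ww^*)=\Fas(\ww^*(\aa^*))=G(\aa^*)$. To see that $\aa^*$ actually solves \eqref{eq:goalAlpha}, note that for any $\aa\in\AA$ we have $G(\aa)=\Fte(\ww^*(\aa))\ge \min_{\ww\in\WW}\Fte(\ww)=G(\aa^*)$, so $\min_{\aa\in\AA}G(\aa)=G(\aa^*)=\min_{\ww\in\WW}\Fte(\ww)$ and the pair $(\aa^*,\ww^*(\aa^*))$ attains both objectives; this is the asserted equivalence.

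I do not expect an analytic obstacle: the argument is essentially a marginalization/change-of-measure bookkeeping exercise. The one point that requires care — and the reason the proof must route through the joint $(\xx,\uu,y)$ distribution rather than the observed marginals — is the role of the unobserved variable $\uu$: as emphasized in Section~\ref{sec:pformulation}, matching only the $\xx$-marginal (or its moments) is insufficient, so one must invoke conditional shift invariance of the label given \emph{both} $\xx$ and $\uu$ and only afterwards integrate $\uu$ out. A fully rigorous write-up would also record measurability/integrability of $f(\ww;\cdot)$, but that is routine given Assumptions~\ref{assump1} and the finiteness of $\Fte,\Fas$.
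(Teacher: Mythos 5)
Your proposal is correct and follows essentially the same route as the paper's own (one-line) argument: both assumptions combine to give $\pte(\xx,\uu,y)=\pas(\xx,\uu,y)$, hence the observed marginals agree and $\Fte(\ww)=\Fas(\ww)$ for all $\ww$, from which the equivalence of (\ref{eq:goalW}) and (\ref{eq:goalAlpha}) follows. Your write-up simply spells out the factorization and the marginalization over $\uu$ that the paper leaves implicit, which is a fine elaboration rather than a different approach.
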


Proposition \ref{prop:validationLoss} follows immediately by noting that, from Assumptions \ref{assump:validationInConvHull} and \ref{assump:condShiftInvariance},
$\pte(\xx,\uu,y) = \pas(\xx,\uu,y)$, 
and thus $\pte(\xx,y)=\pas(\xx,y)$, and using the definition of $\Fte$.

We take as our objective to minimize simple regret with
respect to the optimal model $\ww^*(\aa^*)$:
\begin{align}
    \label{eq:simpleRegret}
    R(\Lambda) := G(\aa(\Lambda)) - \min_{\aa\in\AA} G(\aa).
\end{align}
That is, we measure the performance of our algorithm by the difference in validation
loss between \textit{the best model corresponding to our final selected mixture, 
$\ww^*(\aa(\Lambda))$} and the best model for the validation loss, $\ww^*(\aa^*)$.

\begin{remark}[\textbf{Difficulties with moment matching and domain invariant representations}] Note that we cannot learn $\aa^*$ simply by matching the mixture distribution over the  training sets to that of the validation set (both with only the observed features and labels). This is because $p_k(x,u)$ decomposes as $p_k(x,u) = p_k(x) p_k(u|x),$ where $p_k(u|x)$ is unknown and potentially differs across datasets. Thus, in a setting with unobservable features, approaches that try to directly learn the mixture weights by comparing with the validation set (e.g., using an MMD distance or moment matching) learns the wrong mixture weights. Further, our scenario also admits cases where the observed $p(y|x)$ (label distribution conditioned on observed variables) can shift which is non-trivial. In fact, when observed conditional distribution of labels differ between training and validation, strong lower bounds exist on many variants of another popular method called domain invariant representation (see Corollary $4.1$ in \cite{zhao2019learning}).
\end{remark}
 \section{Algorithm}
\label{sec:algo}

We now present \match{} (Algorithm \ref{algo:mixnmatch}), our proposed algorithm for minimizing 
$G(\aa) = \Fte(\ww^*(\aa))$ over the mixture simplex $\AA$ using $\Lambda$ total
SGD iterations.
To solve this minimization problem, our algorithm must search over the mixture simplex,
and for each $\aa\in\AA$ selected by the algorithm, \textit{approximately} evaluate
$G(\aa)$ by obtaining an approximate minimizer $\hat{\ww}(\aa)$ of $\Fa(\cdot)$
and evaluating $\widehat{G}(\aa) = \Fa(\hat{\ww}(\aa))$.
Two main ideas underlie our algorithm: \textbf{parsimony in expending
SGD iterations} -- using a small number of iterations for mixture distributions that we have a low
confidence are ``good'' -- and \textbf{model reuse} -- using models trained on nearby
mixtures as a starting point for training a model on a new mixture distribution. We now
outline why and how the algorithm utilizes these two ideas. In Section \ref{sec:results},
we formalize these ideas.

\textbf{Warming up: model search with optimal mixture.}
By Proposition \ref{prop:validationLoss}, 
$G(\aa) = \Fas(\ww^*(\aa))$ for all $\aa\in\AA.$ Therefore,
if we were given $\aa^*$ a priori, then we could run stochastic gradient descent
to minimize the loss over this mixture distribution on the training datasets,
$\Fas(\cdot)$, in order to find an $\varepsilon-$approximate solution to $\ww^*(\aa^*)$,
the desired optimal model for the validation distribution. In our experiments (Sections \ref{sec:sims} and Appendix \ref{sec:expers-appendix}), we
will refer to this algorithm as the \texttt{Genie}.
\textit{Our algorithm, thus, will be tasked to find a mixture close to $\aa^*$.}

\textbf{Close mixtures imply close optimal models.}
Now, suppose that instead of being given $\aa^*$,    
we were given some other $\hat{\aa}\in\AA$ which is close to $\aa^*$ in $\ell_1$ distance.
Then, as we will prove in Corollary~\ref{cor:alphaGeometricDecay}, 
we know that the optimal model for this alternate distribution
$\ww^*(\hat{\aa})$ is close to $\ww^*(\aa^*)$ in $\ell_2$ distance, and additionally,
$G(\hat{\aa})$ is close to $G(\aa^*)$. In fact, this property is not special to mixtures
close to $\aa^*$, but holds more generally for any two mixtures that are close in
$\ell_1$ distance.
\textit{Thus, our algorithm needs only to find a mixture $\hat{\aa}$ sufficiently close
to $\aa^*$.}

\textbf{Smoothness of $G(\cdot)$ and existence of ``good'' simplex partitioning 
implies applicability of optimistic tree search algorithms.}
This notion of smoothness of $G(\aa)$ immediately implies that we can use the optimistic 
tree search framework similar to \cite{bubeck2011x,grill2015black} in order to minimize
$G(\aa)$ by performing a tree search procedure over hierarchical partitions of the mixture 
simplex $\AA$ -- indeed, in this literature, such smoothness conditions are directly assumed. 
Additionally, the existence of a hierarchical partitioning such that the diameter of 
each partition cell decays exponentially with tree height is also assumed. In
our work, however, we prove in Corollary~\ref{cor:alphaGeometricDecay} that the smoothness
condition on $G(\cdot)$ holds, and by using the simplex bisection strategy described in
\cite{simplexBisection}, the cell diameter decay condition also holds.
\textit{Thus, it is natural to design our algorithm in the tree search framework.}

\textbf{Tree search framework.}
\match{} proceeds by constructing a binary partitioning tree $\TT$ over the space of mixtures $\AA$.
Each node $(h,i)\in\TT$ is indexed by the height (i.e. distance from the root node) $h$ and 
the node's index $i \in [2^h]$ in the layer of nodes at height $h$.
The set of nodes $V_h = \{(h,i) : i \in [2^h]\}$ at height $h$ are associated with a 
partition $\mathcal{P}_h = \{\mathcal{P}_{h,i} : i\in[2^h]\}$ of the mixture simplex 
$\triangle$ into $2^h$ disjoint partition cells whose union is $\AA$.
The root node $(0,1)$ is associated with the entire simplex $\AA$, and two children of 
node $(h,i)$, $\{(h+1, 2i-1), (h+1, 2i)\}$ correspond to the two partition cells of the parent
node's partition. The resulting hierarchical partitioning will be denoted $\mathcal{P} = \cup_h 
\mathcal{P}_h$,
and can be implemented using the simplex bisection strategy of \cite{simplexBisection}.
\textit{Combined with the smoothness results on our objective function, $\TT$ gives
a natural structure to search for $\aa^*$.}

\textbf{Multi-fidelity evaluations of $G(\cdot)$ -- associating $\TT$ with
mixtures and models.}
We note that, in our setting, $G(\aa)=\Fte(\ww^*(\aa))$ cannot be directly evaluated, since we cannot
obtain $\ww^*(\aa)$ explicitly, but only an approximate minimizer $\hat{\ww}(\aa)$.
Thus, we take inspiration from recent works in \textit{multi-fidelity} tree-search 
\cite{sen2018multi,sen2019noisy}.
Specifically, using a height-dependent SGD budget function
$\lambda(h)$, the algorithm takes $\lambda(h; \delta)$ SGD steps using some selected mixture 
$\aa_{h,i}\in\mathcal{P}_{h,i}$ to obtain 
an approximate minimizer $\hat{\ww}(\aa_{h,i})$ and evaluates the validation loss
$\Fte(\hat{\ww}(\aa_{h,i}))$ to obtain an estimate for $G(\aa_{h,i})$. \textit{$\lambda(\cdot)$ is 
designed so that estimates
of $G(\cdot)$ are ``crude'' early during the tree-search procedure and more refined deeper in 
the search tree.}

\textbf{Warm starting with the parent model.}
When our algorithm, \match{} selects node $(h,i)$, it creates child nodes $\{(h+1,2i-1), (h+1,2i)\}$,
and runs SGD on the associated mixtures $\aa_{h+1,2i-1}$ and $\aa_{h+1,2i}$,
starting each SGD run with initial model $\hat{\ww}(\aa_{h,i})$, \textit{the final iterate
of the parent node's SGD run.}
Since $\aa_{h,i}$ and $\aa_{h+1,j}$ ($j\in\{2i-1,2i\}$) are exponentially close
as a function of $h$ (as a consequence of our simplex partitioning strategy),
so too are $\ww^*(\aa_{h,i})$ and $\ww^*(\aa_{h+1,j})$ (since close mixtures implies 
close models). Thus, as long as the parent's final iterate is exponentially
close to $\ww^*(\aa_{h,i})$, then the initial iterate for the SGD runs associated
to the child nodes will also be exponentially close to their associated solution,
$\ww^*(\aa_{h+1,j})$. 
\textit{This implies that that a good initial condition of 
weights for a child node's model is that resulting from the final iterate of the parent's model.}

\textbf{Constant SGD steps suffice for exponential error improvement.}
In a noiseless setting (e.g., the setting of Theorem 3.12 in \cite{bubeck15}), optimization error
scales linearly in the squared distance between the initial model and the optimal model, and thus, 
in this setting, we could simply take a constant number of gradient descent steps to obtain a model
with error exponential in $h+1$. However, in SGD, optimization error depends not only
on the initial distance to the optimal model, but also on the noise of the stochastic gradient.
Under our $\beta$-smoothness assumption and Assumption \ref{assumGrad}, however, we can show
that, until we hit the noise floor of $\GG(\aa)$ (the bound on the norm of the gradient \textit{only}
at the optimal model $\ww^*(\aa)$), the noise of the stochastic gradient \textit{also decays 
exponentially with tree height} (see e.g. Lemma \ref{lem:centeredLem2} in the Appendix for a proof).
\textit{As a consequence, until we hit this noise floor, we may take a constant number of SGD
steps to exponentially improve the optimization error as we descend our search tree. In fact, all of
our experiments (Section \ref{sec:sims} and Appendix \ref{sec:expers-appendix}) use a height-independent 
budget function $\lambda$.}

{\bf Growing the search tree.} \match{} constructs the search tree in the same manner as MFDOO from \cite{sen2018multi}.
Initially, $\TT=\{(0,1)\}$, and until the SGD budget $\Lambda$ has been exhausted,
the algorithm proceeds by selecting the node $(h,i)$ from the set of leaf nodes
that has the smallest estimate (denoted $b_{h,i}$) 
for $G(\cdot)$ \textit{for any mixture} 
within the leaf's corresponding partition cell, $\mathcal{P}_{h,i}$.
\textit{In this manner, we can expect to obtain similar simple regret guarantees
as those obtained for MFDOO.}

\begin{algorithm}[tb] 
    \small
    \caption{\match{}: Tree-Search over the mixtures of training datasets }          
    \label{algo:mixnmatch}                           
    \begin{algorithmic}[1]
        \REQUIRE{Real numbers $\nu_2>0$, $\rho_2 \in (0,1)$ as specified in 
            Corollary~\ref{cor:alphaGeometricDecay}, hierarchical partition 
            $\mathcal{P}$ of $\AA$, SGD budget $\Lambda\geq 0$.}
        \STATE Expand the root node using Algorithm~\ref{algo:query} 
            and form two leaf nodes $\mathcal{T}_t = \{(1,1),(1,2)\}$. 
        \STATE  Cost (Number of SGD steps used): $C = 2\lambda(0)$
        \WHILE{$C \leq \Lambda$}
            \STATE Select the leaf $(h,j) \in \mathrm{Leaves}(\TT_t)$ with 
                minimum $b_{h,j} := F^{(te)}(\hat{\ww}(\aa_{h,j})) - 2\nu_2\rho_2^h$. 
            \STATE Add to $\TT_t$ the $2$ children of $(h,j)$ (as determined by
                $\mathcal{P})$ by querying them using Algorithm~\ref{algo:query}.
            \STATE $C = C + 2\lambda(h+1)$. 
        \ENDWHILE
        \STATE Let $h(\Lambda)$ be the height of $\TT_t$
        \STATE  Let $ i^* := \argmin_{{i}} F^{(te)}(\hat{\ww}(\aa_{h(\Lambda), i}))$. 
        \STATE Return $\aa_{h(\Lambda), i^*}$ and $\hat{\bb}(\aa_{h(\Lambda), i^*})$. 
    \end{algorithmic}
\end{algorithm}

\begin{algorithm}[tb]   
\small
    \caption{\eval{}: Optimize over the current mixture and evaluate}          
    \label{algo:query}                           
    \begin{algorithmic}[1]
        \REQUIRE{Parent node $(h,i)$ with model $\hat{\bb}(\aa_{h,i})$,
            $\nu_2 > 0$, $\rho_2~\in~(0,1)$}
        \STATE{// Iterate over new child node indices}
        \FOR {$(h', i') \in \{(h+1, 2i-1), (h+1, 2i)\}$} 
\STATE Let $\aa := \aa_{h',i'}\in\mathcal{P}_{h',i'}$ and $\bb_0 := \hat{\bb}(\aa_{h,i})$.
            \FOR {$t = 1,...,T := \lambda(h)$ (see Corollary~\ref{cor:step})} 
                \STATE $\bb_t = \bb_{t-1} - \eta_t \nabla f(\bb_{t-1};z_t)$ for $z_t \sim p^{(\aa)}$. 
            \ENDFOR
            \STATE Obtain test error $F^{(te)}(\bb_T)$
            \STATE Set node estimate: $b_{h',i'} = F^{(te)}(\bb_T) - 2\nu_2\rho_2^{h'}.$
            \STATE Set final model: $\hat{\ww}(\aa_{h',i'}) = \ww_T$.
        \ENDFOR
    \end{algorithmic}
\end{algorithm}
 \section{Theoretical Results}
\label{sec:results}

We now present the theoretical results which formalize the intuition outlined
in Section \ref{sec:algo}. All proofs can be found in the Appendix.
 
\subsection{Close mixtures imply close solutions}
Our first result shows that the optimal weights with respect to the two distributions $p^{(\aa_1)}$ and $p^{(\aa_2)}$ are close, if the mixture weights $\aa_1$ and $\aa_2$ are close. This is the crucial observation upon which Corollary \ref{cor:alphaGeometricDecay} relies.

\begin{theorem}
\label{thm:claim1}
Consider a loss function $f(\bb; z)$ which satisfies Assumptions \ref{assump1} and \ref{assumGrad},
and a convex body $\mathcal{X} = \mathrm{Conv}\{\bb^*(\aa) \in \mathcal{W} \mid \aa \in \mathcal{A}\}$.
Then for any $\aa_1,\aa_2\in\AA$,
$\norm{\bb^*(\aa_1) - \bb^*(\aa_2)}_2  \leq \frac{2\sigma \norm{\aa_1 - \aa_2}_1}{\mu}. $
where $\sigma^2 = \sup_{\bb, \bb'\in\mathcal{X}}\sup_{\aa\in\mathcal{A}} \beta^2 \|\bb - \bb'\|^2 + \GG(\aa).$

\end{theorem}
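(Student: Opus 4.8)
The plan is a textbook first-order sensitivity argument for the minimizer of a strongly convex objective; the only wrinkle is that the ``gradient mismatch'' must be controlled using the \emph{weak} gradient bound of Assumption~\ref{assumGrad} rather than a uniform one. Write $\bb_1 := \bb^*(\aa_1)$ and $\bb_2 := \bb^*(\aa_2)$. Since $\WW = \RR^m$ is unconstrained and each $F^{(\aa)}$ is differentiable (indeed $\beta$-smooth), first-order optimality gives $\nabla F^{(\aa_1)}(\bb_1) = \nabla F^{(\aa_2)}(\bb_2) = \0$. The monotonicity bound $\langle \nabla F^{(\aa_1)}(\bb_2) - \nabla F^{(\aa_1)}(\bb_1),\,\bb_2 - \bb_1\rangle \ge \mu\norm{\bb_2 - \bb_1}_2^2$ (for $\mu$-strongly convex $F^{(\aa_1)}$, Assumption~\ref{assump1}), combined with Cauchy--Schwarz and $\nabla F^{(\aa_1)}(\bb_1) = \0$, yields
\[
  \mu\,\norm{\bb_1 - \bb_2}_2 \;\le\; \norm{\nabla F^{(\aa_1)}(\bb_2)}_2 ,
\]
so it suffices to bound $\norm{\nabla F^{(\aa_1)}(\bb_2)}_2$ by $2\sigma\,\norm{\aa_1 - \aa_2}_1$.

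Next I would exploit the mixture structure. Setting $g_i(\ww) := \EE_{z\sim p_i}[\nabla f(\ww;z)]$ (interchanging gradient and expectation in the usual way), linearity of $p^{(\aa)} = \sum_i \aa_i p_i$ in $\aa$ gives $\nabla F^{(\aa)}(\ww) = \sum_{i=1}^K \aa_i\, g_i(\ww)$. Using $\nabla F^{(\aa_2)}(\bb_2) = \0$,
\[
  \nabla F^{(\aa_1)}(\bb_2) \;=\; \nabla F^{(\aa_1)}(\bb_2) - \nabla F^{(\aa_2)}(\bb_2) \;=\; \sum_{i=1}^K (\aa_{1,i} - \aa_{2,i})\, g_i(\bb_2),
\]
so the triangle inequality and H\"older give $\norm{\nabla F^{(\aa_1)}(\bb_2)}_2 \le \norm{\aa_1 - \aa_2}_1 \max_{i\in[K]} \norm{g_i(\bb_2)}_2$. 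The $\ell_1$-distance between the mixtures thus appears for free, and all that remains is the per-component bound $\norm{g_i(\bb_2)}_2 \le 2\sigma$.

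That last bound is the crux, and it is where the definition of $\sigma$ enters. Let $e_i \in \AA$ be the $i$-th vertex of the simplex, so $\bb^*(e_i)$ is the minimizer of $\ww\mapsto \EE_{z\sim p_i}[f(\ww;z)]$; both $\bb^*(e_i)$ and $\bb_2 = \bb^*(\aa_2)$ lie in $\mathcal{X}$ by definition of $\mathcal{X}$. For every sample $z\sim p_i$, $\beta$-smoothness of $f(\cdot;z)$ gives $\norm{\nabla f(\bb_2;z) - \nabla f(\bb^*(e_i);z)}_2 \le \beta\norm{\bb_2 - \bb^*(e_i)}_2$, while Assumption~\ref{assumGrad} at $\bb^*(e_i)$ gives $\norm{\nabla f(\bb^*(e_i);z)}_2^2 \le \GG(e_i)$; hence, by the triangle inequality and Jensen (to pull the norm inside $\EE_{z\sim p_i}$),
\[
  \norm{g_i(\bb_2)}_2 \;\le\; \beta\,\norm{\bb_2 - \bb^*(e_i)}_2 + \sqrt{\GG(e_i)} .
\]
Since $\bb_2, \bb^*(e_i)\in\mathcal{X}$ we have $\norm{\bb_2 - \bb^*(e_i)}_2^2 \le \sup_{\bb,\bb'\in\mathcal{X}}\norm{\bb-\bb'}_2^2$, so each of the two terms on the right is at most $\sqrt{\,\sup_{\bb,\bb'\in\mathcal{X}}\sup_{\aa\in\AA}\big(\beta^2\norm{\bb-\bb'}_2^2 + \GG(\aa)\big)\,} = \sigma$, giving $\norm{g_i(\bb_2)}_2 \le 2\sigma$. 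Combining this with the previous two displays gives $\mu\norm{\bb_1 - \bb_2}_2 \le 2\sigma\norm{\aa_1-\aa_2}_1$, which is the claim. One should also note that $\sigma<\infty$: $\AA$ is compact and $\aa\mapsto\bb^*(\aa)$ is continuous by $\mu$-strong convexity, so $\mathcal{X}$ is compact with finite diameter.

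The only genuine obstacle is this last step: with a uniform gradient bound the quantity $\max_i\norm{g_i(\bb_2)}_2$ would be immediate, but under the weaker Assumption~\ref{assumGrad} one must transport the gradient bound from the per-dataset optima $\bb^*(e_i)$ to the common evaluation point $\bb_2$ via smoothness, paying the diameter of $\mathcal{X}$ --- which is precisely what $\sigma$ encodes. The strong-convexity inequality, the mixture expansion, and H\"older are all routine bookkeeping.
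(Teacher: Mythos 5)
Your proof is correct, but it takes a genuinely different route from the paper's. The paper proves Theorem~\ref{thm:claim1} via an algorithmic-stability argument in the spirit of Theorem 3.9 of \cite{hardt2015train}: it constructs a total-variation coupling of the two mixture distributions, runs two coupled \emph{projected} SGD sequences on $p^{(\aa_1)}$ and $p^{(\aa_2)}$ from a common start, shows (Lemma~\ref{lem:alphaOmegaRelation}) that $\EE\|\bb_n(\aa_1)-\bb_n(\aa_2)\|\le \tfrac{2\sigma\|\aa_1-\aa_2\|_1}{\mu}$ using contraction on the coupled steps and Lemma~\ref{lem:samplePathLem2} on the uncoupled ones, and then sandwiches $\|\bb^*(\aa_1)-\bb^*(\aa_2)\|$ by the triangle inequality together with the expected convergence of SGD (Theorem 2 of \cite{sgdHogwild}), letting the two convergence terms vanish. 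You instead argue deterministically from the optimality conditions: strong-convexity monotonicity of $F^{(\aa_1)}$ at its minimizer gives $\mu\|\bb_1-\bb_2\|\le\|\nabla F^{(\aa_1)}(\bb_2)\|$, linearity of $\nabla F^{(\aa)}$ in $\aa$ plus $\nabla F^{(\aa_2)}(\bb_2)=\0$ extracts the $\|\aa_1-\aa_2\|_1$ factor, and the weak gradient bound is transported from the vertex optima $\bb^*(e_i)\in\mathcal{X}$ to $\bb_2$ via $\beta$-smoothness and Jensen, which is exactly where the diameter term inside $\sigma$ enters and why $2\sigma$ appears. Your route is more elementary and self-contained: it needs no coupling, no step-size schedule, no projection operator, and no limiting argument through SGD convergence, while landing on the same constant $2\sigma/\mu$; the paper's route has the advantage of reusing machinery (projected SGD, the \cite{sgdHogwild} expectation bound, the sample-path gradient bounds) that it needs elsewhere anyway, and of making the connection to stability explicit. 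One small caveat: your closing remark that $\sigma<\infty$ because $\aa\mapsto\bb^*(\aa)$ is continuous is somewhat circular (that continuity is essentially what the theorem quantifies) and also does not address $\sup_{\aa}\GG(\aa)$; but this is immaterial to the stated inequality, which is vacuously true if $\sigma=\infty$, and the paper does not establish finiteness either.
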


The above theorem is essentially a generalization of Theorem 3.9 in~\cite{hardt2015train} to the case when only $\EE[f],$ not $f$, is strongly convex. Theorem~\ref{thm:claim1} implies that, if the partitions are such that for any cell $(h,i)$ at height $h$, $\norm{\aa_1 - \aa_2}_1 \leq \nu'\rho^{h}$ for all $\aa_1,\aa_2 \in (h,i)$, where $\rho \in (0,1)$, then we have that $\norm{\bb^*(\aa_1) - \bb^*(\aa_2)}_2 \leq \nu_1\rho^h$, for some $\nu_1 \geq 0$. We note that such a partitioning  does indeed exist:

\begin{corollary}[of Theorem \ref{thm:claim1}]\label{cor:alphaGeometricDecay}
There exists a hierarchical partitioning $\mathcal{P}$ of the simplex of mixture weights $\mathcal{A}$ 
(namely, the simplex bisection strategy described in \cite{simplexBisection})
such that, for any cell $(h,i)\in\mathcal{P},$ and any $\aa_1, \aa_2\in (h,i),$
\begin{align}
    \|\aa_1~-~\aa_2\|_1~\leq~\sqrt{2 K}\left(\frac{\sqrt{3}}{2}\right)^{\frac{h}{K-1} - 1},
\end{align}
where $K-1 = \mathrm{dim}(\AA)$. 
Combined with Theorem \ref{thm:claim1}, this implies
\begin{align}
    \| \ww^*(\aa_1) - \ww^*(\aa_2) \|_2^2 \leq \nu_1 \rho^h 
\end{align}
and
\begin{align}
    | G(\aa_1) - G(\aa_2) | \leq \nu_2 \rho_2^h,
\end{align}
where
$\nu_1 = \left(\frac{4\sigma\sqrt{2K}}{\sqrt{3}\mu}\right)^2$, 
$\rho = \left(\frac{\sqrt{3}}{2}\right)^{\frac{2}{K-1}}$,
$\nu_2 = L \sqrt{\nu_1}$, and
$\rho_2 = \sqrt{\rho}.$
\end{corollary}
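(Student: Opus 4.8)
The plan is to separate the purely geometric claim — the $\ell_1$-diameter decay of the partition cells — from its two analytic consequences, which then follow immediately from Theorem~\ref{thm:claim1} and the Lipschitz hypothesis in Assumption~\ref{assump1}.

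For the diameter bound I would invoke the longest-edge simplex bisection scheme of \cite{simplexBisection}. The fact I need is that, for an $n$-dimensional simplex, $n$ successive longest-edge bisections shrink every edge of every resulting sub-simplex to at most $\frac{\sqrt3}{2}$ times the longest edge of the original simplex; iterating this over $m$ rounds of $n = K-1$ bisections gives a contraction factor $(\sqrt3/2)^m$. A node at height $h$ of $\TT$ is produced by exactly $h$ bisections, hence by at least $\lfloor h/(K-1)\rfloor \geq h/(K-1)-1$ complete rounds, so — since $\sqrt3/2<1$ — every edge of the cell $(h,i)$, and therefore (the Euclidean diameter of a polytope being attained between two of its vertices) the $\ell_2$-diameter of $(h,i)$, is at most $(\sqrt3/2)^{h/(K-1)-1}$ times the $\ell_2$-diameter of $\AA$. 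The mixture simplex $\AA$ has vertices $e_1,\dots,e_K$ with $\|e_i-e_j\|_2 = \sqrt2$ for $i\neq j$, so its $\ell_2$-diameter is $\sqrt2$; combining this with $\|v\|_1\leq\sqrt K\,\|v\|_2$ for $v\in\RR^K$ gives $\|\aa_1-\aa_2\|_1\leq\sqrt{2K}\,(\sqrt3/2)^{h/(K-1)-1}$ for all $\aa_1,\aa_2\in(h,i)$, which is the first displayed inequality.

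Next I would feed this into Theorem~\ref{thm:claim1}, obtaining $\|\ww^*(\aa_1)-\ww^*(\aa_2)\|_2 \leq \frac{2\sigma}{\mu}\sqrt{2K}\,(\sqrt3/2)^{h/(K-1)-1}$; squaring, and using $(\sqrt3/2)^{-2} = 4/3$ together with $(\sqrt3/2)^{2h/(K-1)} = \big((\sqrt3/2)^{2/(K-1)}\big)^h = \rho^h$, collects the constants into $\nu_1 = \big(\frac{4\sigma\sqrt{2K}}{\sqrt3\,\mu}\big)^2$ and gives $\|\ww^*(\aa_1)-\ww^*(\aa_2)\|_2^2\leq\nu_1\rho^h$. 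For the bound on $G$, I would use that $\Fte$ is $L$-Lipschitz: by Proposition~\ref{prop:validationLoss}, $\Fte = \Fas$ for some $\aa^*\in\AA$, and by Assumption~\ref{assump1} the averaged loss $\Fas$ is $L$-Lipschitz; hence $|G(\aa_1)-G(\aa_2)| = |\Fte(\ww^*(\aa_1))-\Fte(\ww^*(\aa_2))| \leq L\,\|\ww^*(\aa_1)-\ww^*(\aa_2)\|_2 \leq L\sqrt{\nu_1}\,\rho^{h/2} = \nu_2\rho_2^h$ with $\nu_2 = L\sqrt{\nu_1}$ and $\rho_2 = \sqrt\rho$, which is the last displayed inequality.

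The only genuinely delicate step is the first: pinning down the precise edge-length (equivalently, $\ell_2$-diameter) contraction rate of the bisection scheme of \cite{simplexBisection} and, in particular, getting the exponent $h/(K-1)-1$ right. The factor $\frac{\sqrt3}{2}$ is guaranteed only after a \emph{complete} round of $K-1$ bisections, so mid-round one must concede an extra factor $(\sqrt3/2)^{-1}$ — exactly what the $-1$ in the exponent, and the factor $4/3 = (\sqrt3/2)^{-2}$ absorbed into $\nu_1$, record. Everything afterward — the $\ell_2$-to-$\ell_1$ conversion, the application of Theorem~\ref{thm:claim1}, and the Lipschitz estimate for $G$ — is routine bookkeeping.
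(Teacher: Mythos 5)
Your proposal is correct and takes essentially the same route as the paper's proof: both rely on the diameter-decay guarantee of the simplex bisection scheme of \cite{simplexBisection} (with $\mathrm{diam}(\AA)=\sqrt{2}$ and the floor bound $\lfloor h/(K-1)\rfloor > h/(K-1)-1$), convert to $\ell_1$ via Cauchy--Schwarz, and then combine Theorem~\ref{thm:claim1} with the $L$-Lipschitzness of $\Fte=\Fas$ to obtain the bounds on $\ww^*$ and $G$ with the same constants $\nu_1,\rho,\nu_2,\rho_2$. The only cosmetic difference is that you sketch the per-round edge-contraction argument behind the bisection scheme, whereas the paper simply cites its diameter bound.
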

Refer to Appendix \ref{sec:claim1} for the proofs of these claims.
 
\subsection{High probability SGD bounds without the uniform gradient bound yields a budget allocation strategy}
We now show how to allocate our SGD budget as we explore new nodes
in the search tree.
To begin, let us consider how an approximately optimal
model $\hat{\ww}(\aa_{h,i})$ associated with some node $(h,i)\in\TT$
could be used to find $\hat{\ww}(\aa_{h',i'})$ a child node $(h',i')$.
By Corollary~\ref{cor:alphaGeometricDecay}, $\aa_{h,i}$ and $\aa_{h',i'}$
are exponentially (in $h$) close in
$\ell_1$ distance, so $\ww^*(\aa)$ and $\ww^*(\aa')$ are correspondingly close
in $\ell_2$ distance.
This leads us to hope that, if we were to obtain a good enough estimate to the 
problem at the parent node and used that final iterate as the starting point
for solving the optimization problem at the child node, 
we might only have to pay a constant number of SGD steps in order to 
find a solution sufficiently close to $\bb(\aa')$, instead of an exponentially
increasing (with tree height) number of SGD steps.

To formalize this intuition, and thus to design
our budget allocation strategy, we need to understand how the error of the
final SGD iterate depends on the initial distance from the optimal $\bb^*$. 
Theorem~\ref{thm:SGD_improved} is a general high probability bound on SGD iterates 
\textbf{without} assuming a global bound on
the norm of the stochastic gradient as usually done in the
literature~\cite{duchi2010composite,bubeck15,bottou2018optimization}. 
The concentration results in Theorem~\ref{thm:SGD_improved} are under similar
assumptions to the recent work in~\cite{sgdHogwild}. That work, however,
only bounds \textit{expected} error of the final iterate, not a \textit{high
probability} guarantee that we desire. 
Our bound precisely captures the dependence on the initial diameter
$d_0=\|\ww_0-\ww^*\|_2$, the global diameter bound $D$, and the noise 
floor $\GG$. This is key in designing $\lambda(h)$.
Since we are interested primarily in the scaling of error with respect to the
initial diameter, we do not emphasize the scaling of this bound with respect
to the condition number of the problem 
(our error guarantee has polynomial dependence on the condition number).
The proof of Theorem~\ref{thm:SGD_improved} is given in Appendix
\ref{sec:sgd-updated}. 

\begin{theorem}\label{thm:SGD_improved}
Consider a sequence of random samples $\{z_t\}_{t=1}^T$ drawn 
from a distribution $p(z)$
and an associated sequence
of random variables by the SGD update:
$\{\bb_{t+1} = \bb_t - \eta_t \nabla f(\bb_t; z_t)\}_{t=0}^T$, where 
$\bb_0$ is a fixed vector in $\RR^d.$
If we use the step size schedule $\eta_t = \frac{2}{\mu(t + E)}$ (where 
$E = 4096 \kappa^2 \log \Lambda^8$, $\kappa=\frac{\beta}{\mu}$, and
$\Lambda\geq t+1$),
then, under Assumptions \ref{assump1} and \ref{assumGrad}, with probability 
at least $1-\frac{t+1}{\Lambda^8}$,
the final iterate of SGD satisfies:
\begin{align*}
    \|\ww_{t+1}-\ww_0\|_2^2 &\leq \underbrace{\frac{G(d_0^2, \GG)}{t + E +
        1}}_\text{$=\EE[d_{t+1}^2]$ \cite{sgdHogwild}}\nonumber\\
    &~~~~+ \underbrace{\frac{8 (t+1)\tilde{C}(D^2, D\sqrt{\GG})}{\mu(t + 1 +
            E)\Lambda^7}}_{\substack{\text{Global diameter bound dependent,}
            \text{ controlled by $\Lambda$}}} \nonumber\\
    &~~~~+  \underbrace{\frac{4 \sqrt{2
    \log(\Lambda^8)}\sqrt{\hat{C}(k)}}{\mu(t + E +
    1)^{\alpha_{k+1}}}}_{\substack{\text{Term to control martingale deviations}
    \\ \text{Scaling is $\tilde{O}_{\varepsilon}(1/t^{\max\{1/2,
    1-\varepsilon\}})$}\text{ for any $\varepsilon>0$ }}}
\end{align*}
where $G(d_0^2, \GG) = \max\left\{ E d_0^2, \frac{8\GG}{\mu^2} \right\}$,
$\tilde{C}(D^2, D\sqrt{\GG}) = D \sqrt{8 \beta^2 D^2 + 2\GG}$,
$\hat{C}(k) = O_k(\log \Lambda^8)$, and
$\alpha_{k+1} = \sum_{i=1}^{k+1} \frac{1}{2^i}$, and $k\in\mathbb{Z}_{\geq 0}$
can be chosen as \textbf{any} nonnegative integer, and controls the scaling of 
the third term in
the above expression. Corollary  
\ref{cor:VtBoundTighter} in the Appendix gives an exact expression for 
the term $\hat{C}(k).$
\end{theorem}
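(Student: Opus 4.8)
Fix the mixture $\aa$; write $F := \Fa$, let $\ww^*$ be its minimizer, $\kappa := \beta/\mu$, and $d_t^2 := \|\ww_t-\ww^*\|_2^2$ (the quantity whose expectation labels the first term), and let $\mathcal{F}_t$ be the $\sigma$-algebra generated by $z_0,\dots,z_{t-1}$. The plan is to derive one stochastic recursion for $d_t^2$, unroll it to isolate a rescaled martingale, and upgrade the known in-expectation bound of \cite{sgdHogwild} to a high-probability bound by bootstrapping a Freedman-type inequality on that martingale. (I write $\le$ loosely below, suppressing universal constants and factors polynomial in $\kappa$, as the statement does.) First I would expand the SGD update and bound the stochastic gradient \emph{without} a uniform bound, using $\beta$-smoothness of the \emph{per-sample} loss $f(\cdot;z_t)$ (Assumption~\ref{assump1}) and Assumption~\ref{assumGrad}:
\begin{align*}
\|\nabla f(\ww_t;z_t)\|_2 \le \|\nabla f(\ww_t;z_t)-\nabla f(\ww^*;z_t)\|_2 + \|\nabla f(\ww^*;z_t)\|_2 \le \beta d_t + \sqrt{\GG},
\end{align*}
so the gradient norm is controlled by the current distance to the optimum. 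Taking conditional expectations, using $\mu$-strong convexity of $F$ (so $\langle\nabla F(\ww_t),\ww_t-\ww^*\rangle\ge\mu d_t^2$, as $\nabla F(\ww^*)=0$) and $\|\nabla f(\ww_t;z_t)\|_2^2\le 2\beta^2 d_t^2+2\GG$, I get $\EE[d_{t+1}^2\mid\mathcal{F}_t]\le(1-2\mu\eta_t+2\beta^2\eta_t^2)d_t^2+2\GG\eta_t^2$. Since $\eta_t=2/(\mu(t+E))$ and $E=4096\kappa^2\log\Lambda^8\ge 4\kappa^2$ (for $\Lambda\ge 2$), one has $2\beta^2\eta_t\le\mu$, hence with $\xi_t:=d_{t+1}^2-\EE[d_{t+1}^2\mid\mathcal{F}_t]$ a martingale-difference sequence,
\begin{align*}
d_{t+1}^2 \le \Big(1-\tfrac{2}{t+E}\Big)d_t^2 + 2\GG\eta_t^2 + \xi_t .
\end{align*}

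Next I would unroll. The contraction factors telescope, $\prod_{s=a}^{b}\big(1-\tfrac{2}{s+E}\big)=\tfrac{(a+E-2)(a+E-1)}{(b+E-1)(b+E)}$, so with $M_t:=\sum_{s=0}^{t}(s+E-1)(s+E)\xi_s$,
\begin{align*}
d_{t+1}^2 \le \frac{(E-2)(E-1)\,d_0^2}{(t+E-1)(t+E)} + \frac{2\GG\sum_{s=0}^{t}(s+E-1)(s+E)\eta_s^2 + M_t}{(t+E-1)(t+E)} .
\end{align*}
Elementary estimates on the first term and the deterministic sum recover the in-expectation bound $G(d_0^2,\GG)/(t+E+1)$ with $G(d_0^2,\GG)=\max\{Ed_0^2,8\GG/\mu^2\}$ of \cite{sgdHogwild} --- the first term of the claim --- so everything reduces to a high-probability bound on the rescaled martingale $M_t$.

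The key structural fact is that $M_t$ is \emph{self-bounding}: reusing the gradient estimate, $\mathrm{Var}(\xi_s\mid\mathcal{F}_s)\le\eta_s^2 d_s^2(\beta^2 d_s^2+\GG)$ and the increment $|(s+E-1)(s+E)\xi_s|\le\mu^{-1}(s+E)\,d_s(\beta d_s+\sqrt{\GG})$ are both controlled by $\{d_s\}$ itself --- which is what makes a \emph{bootstrap} possible. By induction on a round index $j=0,\dots,k$ I would establish a bound $d_s^2\le B_s^{(j)}$ holding \emph{simultaneously for all} $s\le t$, where $B_s^{(j)}\approx(s+E)^{-\alpha_j}$ up to the bias/noise-floor and polylog factors, $\alpha_0=0$ (the crude bound $d_s\le D$ from the global diameter), and $\alpha_{j+1}=\tfrac{1+\alpha_j}{2}$, i.e.\ $\alpha_j=1-2^{-j}=\sum_{i=1}^{j}2^{-i}$, matching the $\alpha_{k+1}$ of the statement. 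In round $j+1$ I would plug $B_s^{(j)}$ into the variance estimate, giving a predictable quadratic variation $\langle M\rangle_t\le\mu^{-2}\GG\,(t+E)^{3-\alpha_j}$, then apply a Freedman / martingale-Bernstein inequality at failure level $\Lambda^{-8}$ to get $|M_t|\le\mu^{-1}\sqrt{\GG\log\Lambda^8}\,(t+E)^{(3-\alpha_j)/2}+(\text{range})\cdot\log\Lambda^8$, and divide by $(t+E-1)(t+E)$ so the variance part becomes the rate $(t+E)^{-(1+\alpha_j)/2}=(t+E)^{-\alpha_{j+1}}$; a maximal inequality makes it uniform over $s\le t$ for the next round. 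After $k+1$ rounds the variance part is the third term, with $\hat{C}(k)=O_k(\log\Lambda^8)$ the $\log$-factor accumulated across rounds (Corollary~\ref{cor:VtBoundTighter} gives its exact form). The range (Bernstein-correction) part carries the global diameter through $\tilde{C}(D^2,D\sqrt{\GG})=D\sqrt{8\beta^2 D^2+2\GG}$; here bounding $\eta_s\le 2/(\mu E)$ and using the calibration $E\propto\kappa^2\log\Lambda^8$ to balance worst-case gradient growth against the $\Lambda^{-8}$ failure level collapses this contribution to the $\Lambda^{-7}$ scale, giving the second term.

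Finally, the only failure events are at most $t+1$ martingale deviations (one per step, at level $\Lambda^{-8}$) across the $k+1$ rounds, so a union bound gives total failure probability $\le(t+1)/\Lambda^8$, as claimed. Everything up through the unrolling is a careful but routine upgrade of the in-expectation argument of \cite{sgdHogwild}, with the per-sample-smoothness gradient estimate grafted in. The real difficulty is the bootstrap: carrying bounds that hold \emph{uniformly} over $s\le t$ rather than at a single time, controlling the blow-up of constants across the $k+1$ rounds (exactly the content of $\hat{C}(k)$), and tuning $E$, the deviation levels, and the telescoping weights so that the diameter-dependent Bernstein correction drops to the $\Lambda^{-7}$ scale while the martingale-variance term still attains the near-optimal rate $(t+E)^{-\alpha_{k+1}}$. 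I expect that last calibration to be the main obstacle.
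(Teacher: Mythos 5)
Your setup (per-sample smoothness plus the gradient bound at $\bb^*$ to get $\|\nabla f(\bb_t;z)\|\le \beta d_t+\sqrt{\GG}$, the recursion $d_{t+1}^2\le(1-\mu\eta_t)d_t^2+2\GG\eta_t^2+\text{martingale}$, unrolling, and a self-bounding bootstrap generating the exponents $\alpha_{k+1}=\sum_{i\le k+1}2^{-i}$) matches the paper. But there is a genuine gap in how you handle the global diameter $D$, and it is exactly the point the theorem is engineered around. You attribute the second term to the Bernstein/Freedman \emph{range} correction and claim that "calibrating $E\propto\kappa^2\log\Lambda^8$" collapses it to the $\Lambda^{-7}$ scale. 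It does not: $E$ only buys polylog factors. In your round $j=0$ you must run Freedman with the almost-sure range bound, which involves $D$, and after normalization this produces a term of order $D\sqrt{\beta^2D^2+\GG}\,\mathrm{polylog}(\Lambda)/\mathrm{poly}(t)$ in $B^{(1)}_s$ with \emph{no} polynomial-in-$\Lambda$ suppression; this $D$-dependence then propagates through the variance proxies of every later round and never disappears. Since in this paper $D$ is allowed to grow polynomially in $t$ (Remark~\ref{remark:unifDiamBound}, $D=O(t^{2\sqrt{2}\kappa^{3/2}})$ without projection), such a bound can be vacuous, whereas the theorem confines all $D$-dependence to a term divided by $\Lambda^7$. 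The paper achieves this not through a range correction but through a \emph{conditional} Azuma--Hoeffding inequality (Lemma~\ref{lem:ah-cond}): conditioning on the good events $\mathcal{E}_t=\bigcap_i\{d_i^2\le r_i\}$ makes the effective increment bounds $r_i$-dependent (not $D$-dependent), and the worst-case quantity $\tilde{C}(D^2,D\sqrt{\GG})$ enters only as a conditioning-bias term multiplied by $\Pr(\mathcal{E}_t^c)/\Pr(\mathcal{E}_t)\le 2/\Lambda^7$ (Lemmas~\ref{lem:highProb} and~\ref{lem:rtBound}). Your plan is missing this conditioning (or an equivalent stopping/truncation of the martingale on the first exit from the good event), and without it the stated second term cannot be obtained.

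A secondary structural difference: in the paper the index $k$ is purely deterministic bookkeeping --- the concentration inequality is applied once per time step against the variance process $V_t$ defined by a recursion in the deterministic thresholds $r_i$, and the improvement from $\alpha_k$ to $\alpha_{k+1}$ comes from repeatedly applying the deterministic recursion lemma (Lemma 8 of \cite{sgdHogwild}) to that recursion (Corollary~\ref{cor:VtBoundTighter}), so the failure probability $(t+1)/\Lambda^8$ is independent of $k$. In your scheme each of the $k+1$ rounds re-invokes a martingale inequality uniformly over $s\le t$, so an honest union bound gives roughly $(k+1)(t+1)/\Lambda^8$, not $(t+1)/\Lambda^8$; this is fixable by rescaling the per-event level, but as written the probability accounting does not support the claim for arbitrary $k$.
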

 
\subsubsection{Choosing number of steps for tree search} 
Theorem \ref{thm:SGD_improved} guides our 
design of $\lambda(h)$, the budget function used by \match{} to allocate SGD
steps to nodes at height $h$.
We give full specifications of this function in Corollary \ref{cor:step} in the 
Appendix.
This Corollary shows that, as one might expect, as long as the noise of the stochastic
gradient at $\bb^*$ is sufficiently small relative to the initial distance to $\bb^*,$
then the number of steps at each node in the search tree may be chosen 
\textit{independently of tree height}. This Corollary follows immediately 
from Theorem \ref{thm:SGD_improved} and the fact that
$\| \bb_0 - \bb_{h+1,2i}^*\|_2^2 \leq 2 \|\bb_0 - \bb_{h,i}^*\|^2
+ 2\|\bb_{h,i}^* - \bb_{h+1,2i}\|_2^2 \leq 4 \nu_1 \rho^h$
by Theorem \ref{thm:claim1}.
Thus, by using a crude parent model to solve a \textit{related, but
different} optimization problem at the child node, \match{} is able to be
parsimonious with its SGD budget while still obtaining increasingly more
refined models as the search tree $\TT$ grows. 
 
\subsection{Putting it together -- bounding simple regret}
Now we present our final bound that characterizes the performance of
Algorithm~\ref{algo:mixnmatch} as Theorem~\ref{thm:tree}.
In the deterministic black-box optimization 
literature~\cite{munos2011optimistic,sen2018multi}, the quantity of interest
is generally \textit{simple regret}, $R(\Lambda)$, as defined in
(\ref{eq:simpleRegret}). 
In this line of work, the simple regret scales as a function of 
\textit{near-optimality dimension},
which is defined as follows:
\begin{definition}
The near-optimality dimension of $G(\cdot)$ with respect to parameters $(\nu_2,\rho_2)$ is given by:
$d(\nu_2,\rho_2) =  \inf \bigg\{ d'~\in~\mathbb{R}^+ : \exists~C(\nu_2,\rho_2),
    \text{ s.t. } \forall h~\geq~0,
    \mathcal{N}_{h}(3\nu_2 \rho_2^h)~\leq~C(\nu_2,\rho_2) \rho_2^{-d' h} \bigg\},$
where $\mathcal{N}_{h}(\epsilon)$ is the number of cells $(h,i)$ such that
    $\inf_{\aa \in (h,i)} G(\aa) \leq G(\aa^*) + \varepsilon $. 
\end{definition}
The near-optimality dimension intuitively states that there are 
\textit{not too many} cells which contain a point whose function values are 
\textit{close} to optimal \textit{at any tree height}.
The lower the near-optimality dimension,
the easier is the black-box optimization problem~\cite{grill2015black}. 
Theorem~\ref{thm:tree} provides a similar simple regret bound on $R(\Lambda) =
G(\aa(\Lambda)) - G(\aa^*)$, where $\aa(\Lambda)$ is the mixture weight vector
returned by the algorithm given a total SGD steps budget of $\Lambda$ and
$\aa^*$ is the optimal mixture. The proof of Theorem~\ref{thm:tree} is in
Appendix~\ref{sec:tree}.

\begin{theorem}
     \label{thm:tree}
     Let $h'$ be the smallest number $h$ such that
     $\sum_{l = 0}^{h}
     2C(\nu_2,\rho_2) \lambda(l) \rho_2^{-d(\nu_2, \rho_2)l}~>~\Lambda -
     2\lambda(h+1).$
     Then, with probability at least $1 - \frac{1}{\Lambda^3}$,
     the tree in Algorithm~\ref{algo:mixnmatch} grows to a height of at least 
     $h(\Lambda) = h'+1$ and returns a mixture weight $\aa(\Lambda)$ 
     such that
     \begin{align}
         R(\Lambda) \leq 4\nu_2\rho_2^{h(\Lambda) - 1} 
     \end{align}
\end{theorem}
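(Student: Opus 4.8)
The plan is to follow the standard template for simple-regret analysis of optimistic partition-tree search (as in \cite{munos2011optimistic,sen2018multi}), specialized to our multi-fidelity setting where each node's function estimate comes from warm-started SGD rather than an exact evaluation. There are four ingredients: a high-probability ``good event'' on which every SGD run in the tree produces an accurate model; on that event, the node scores $b_{h,i}$ are valid lower bounds on the cell optima, so only near-optimal cells get expanded; a budget-accounting argument bounding the tree depth; and a conversion of the depth into a regret bound. For the first ingredient, fix the event $\mathcal{E}$ on which, for every node $(h,i)$ that \match{} ever queries, the SGD run succeeds in the sense of Theorem~\ref{thm:SGD_improved} / Corollary~\ref{cor:step}. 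A single run of $\lambda(h)\le\Lambda$ iterations fails with probability at most $\lambda(h)/\Lambda^8\le 1/\Lambda^7$; since every expansion consumes at least one SGD step, at most $\Lambda$ nodes are queried, so a union bound gives $\Pr[\mathcal{E}^c]\le 1/\Lambda^6\le 1/\Lambda^3$. On $\mathcal{E}$ one argues by induction down the tree that $\|\hat\ww(\aa_{h,i})-\ww^*(\aa_{h,i})\|_2\le\sqrt{\nu_1\rho^h}$: the root case is Corollary~\ref{cor:step}, and for the inductive step Corollary~\ref{cor:alphaGeometricDecay} gives $\|\ww^*(\aa_{h,i})-\ww^*(\aa_{h+1,j})\|_2\le\sqrt{\nu_1\rho^h}$, so the warm start $\bb_0=\hat\ww(\aa_{h,i})$ satisfies $\|\bb_0-\ww^*(\aa_{h+1,j})\|_2^2\le 4\nu_1\rho^h$, which $\lambda(h)$ SGD steps contract (on $\mathcal{E}$) to the height-$(h+1)$ bound. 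By $L$-Lipschitzness of $\Fte$ and the definitions $\nu_2=L\sqrt{\nu_1}$, $\rho_2=\sqrt\rho$, this yields $|\Fte(\hat\ww(\aa_{h,i}))-G(\aa_{h,i})|\le\nu_2\rho_2^h$ for every queried node.

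For the second ingredient, combining the previous estimate with the within-cell smoothness of $G$ from Corollary~\ref{cor:alphaGeometricDecay}, every queried leaf $(h,i)$ obeys
\[ b_{h,i}=\Fte(\hat\ww(\aa_{h,i}))-2\nu_2\rho_2^h\;\le\;G(\aa_{h,i})-\nu_2\rho_2^h\;\le\;\inf_{\aa\in(h,i)}G(\aa), \]
so $b_{h,i}$ never overestimates the best value in its cell. In particular the leaf whose cell contains $\aa^*$ always has score at most $G(\aa^*)$, and since \match{} expands the leaf of minimum score, every expanded leaf $(h,j)$ has $b_{h,j}\le G(\aa^*)$; running the same two estimates in the opposite direction gives $b_{h,j}\ge G(\aa_{h,j})-3\nu_2\rho_2^h$, hence $\inf_{\aa\in(h,j)}G(\aa)\le G(\aa_{h,j})\le G(\aa^*)+3\nu_2\rho_2^h$. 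Thus every node expanded at height $h$ is one of the $\mathcal{N}_h(3\nu_2\rho_2^h)\le C(\nu_2,\rho_2)\rho_2^{-d(\nu_2,\rho_2)h}$ near-optimal cells.

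For the third and fourth ingredients, expanding a node at height $h$ costs $2\lambda(h)$ SGD steps (two child runs of $\lambda(h)$ iterations each, per Algorithm~\ref{algo:query}), so by the count above the algorithm cannot consume more than $\sum_{l=0}^{h-1}2C(\nu_2,\rho_2)\lambda(l)\rho_2^{-d(\nu_2,\rho_2)l}$ steps before it first expands a node at height $h$. By the definition of $h'$ this quantity is at most $\Lambda-2\lambda(h')$ when $h=h'$, so enough budget always remains to expand a further node of height $\le h'$, forcing $h(\Lambda)\ge h'+1$ (the accounting of \cite{sen2018multi}). The last expansion that creates a depth-$h(\Lambda)$ node is of a height-$(h(\Lambda)-1)$ node $(h(\Lambda)-1,p)$ selected for minimum score, so by the second ingredient $G(\aa_{h(\Lambda)-1,p})\le G(\aa^*)+3\nu_2\rho_2^{h(\Lambda)-1}$, and its child $(h(\Lambda),q)\subset\mathcal{P}_{h(\Lambda)-1,p}$ obeys $G(\aa_{h(\Lambda),q})\le G(\aa^*)+4\nu_2\rho_2^{h(\Lambda)-1}$ by within-cell smoothness. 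The returned $\aa(\Lambda)$ minimizes $\Fte(\hat\ww(\cdot))$ over the deepest nodes, so $\Fte(\hat\ww(\aa(\Lambda)))\le\Fte(\hat\ww(\aa_{h(\Lambda),q}))$; converting both sides back to $G$ via the first ingredient and chaining gives $R(\Lambda)=G(\aa(\Lambda))-G(\aa^*)\le 4\nu_2\rho_2^{h(\Lambda)-1}$, with the lower-order $\rho_2^{h(\Lambda)}$ terms absorbed because the offset $2\nu_2\rho_2^h$ in $b_{h,i}$ is calibrated precisely to the worst-case SGD error at two consecutive levels.

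The main obstacle I anticipate is the first ingredient: because of model re-use, the SGD guarantee at a child node is valid only conditionally on the parent's iterate already being close to the parent's $\ww^*$, so the ``good event'' is a chain of dependent events along every root-to-leaf path. One must simultaneously propagate the inductive closeness invariant, keep the total union-bound failure probability at $1/\Lambda^3$ despite up to $\Lambda$ node queries, and pin down the constants in $\lambda(h)$ (via Theorem~\ref{thm:SGD_improved}, whose $D^2$- and $\GG$-dependent terms are controlled by $\Lambda$) so that the per-node error is genuinely at most $\nu_2\rho_2^h$ and not merely $O(\nu_2\rho_2^h)$ --- otherwise the clean constant $4$ in the final bound degrades.
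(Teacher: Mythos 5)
Your overall architecture matches the paper's proof: a high-probability event under which every warm-started SGD run satisfies the guarantee of Corollary~\ref{cor:step}, the resulting conclusion that the scores $b_{h,i}$ are valid lower bounds so that every expanded node lies in the near-optimal set $\{(h,i): G(\aa_{h,i}) \leq G(\aa^*) + 3\nu_2\rho_2^h\}$, the budget-accounting argument (as in Lemma~\ref{lem:height}) forcing the tree to reach height $h'+1$, and a final smoothness step. The probabilistic bookkeeping you propose (at most $\Lambda$ queried nodes, each failing with probability at most $1/\Lambda^7$, chained along root-to-leaf paths) is if anything cleaner than the paper's double union bound in Lemma~\ref{lem:union}, and the conditioning subtlety you flag is present, and handled at the same level of rigor, in the paper as well.

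The genuine gap is in your last step. You bound the returned mixture by comparing $\Fte(\hat{\ww}(\aa(\Lambda)))$ to $\Fte(\hat{\ww}(\aa_{h(\Lambda),q}))$ for one particular child $q$ and then converting both estimates back to $G$ at depth $h(\Lambda)$; this necessarily costs an extra $2\nu_2\rho_2^{h(\Lambda)}$ (one $\nu_2\rho_2^{h(\Lambda)}$ on each side), so your chain only yields $R(\Lambda) \leq 4\nu_2\rho_2^{h(\Lambda)-1} + 2\nu_2\rho_2^{h(\Lambda)}$. The claim that this term is ``absorbed because the offset $2\nu_2\rho_2^h$ is calibrated precisely'' is not an argument --- the offset in $b_{h,i}$ plays no role at the return step, and no cancellation occurs --- so as written you prove a constant $4+2\rho_2$ rather than the stated $4$. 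The fix, which is exactly the paper's route, is that the minimizer property of the returned node is not needed at all: on the good event, \emph{every} node at the final height $h(\Lambda)$ is a child of an expanded node at height $h(\Lambda)-1$, which by your second ingredient satisfies $G(\aa_{h(\Lambda)-1,p}) \leq G(\aa^*) + 3\nu_2\rho_2^{h(\Lambda)-1}$; since the child's representative mixture lies in the parent's cell, Corollary~\ref{cor:alphaGeometricDecay} gives $G(\aa_{h(\Lambda),q}) \leq G(\aa^*) + 4\nu_2\rho_2^{h(\Lambda)-1}$ for all such children, and in particular for whichever one the algorithm returns --- no conversion through the depth-$h(\Lambda)$ SGD estimates is required, and the constant $4$ survives intact.
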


Theorem~\ref{thm:tree} shows that, given a total budget of $\Lambda$ SGD
steps, \match{} recovers a mixture $\aa(\Lambda)$ with test error at most
$4\nu_2\rho_2^{h(\Lambda) - 1} $ away from the optimal test error
if we perform optimization using that mixture. The parameter $h(\Lambda)$ depends on the number of steps needed for a node expansion at different heights and crucially makes use of the fact that the starting iterate for each new node can be borrowed from the parent's last iterate. The tree search also progressively allocates more samples to deeper nodes, as we get closer to the optimum. Similar simple regret scalings have been recently shown in the context of deterministic multi-fidelity black-box optimization~\cite{sen2018multi}. We comment further on the regret scaling in 
Appendix~\ref{sec:lambdaHScaling}, ultimately noting that Theorem~\ref{thm:tree}
roughly corresponds to a regret scaling on the order of
$\tilde{O}\left(\frac{1}{\Lambda^c}\right)$ for some constant $c$ (dependent on
$d(\nu_2,\rho_2)$). 
Thus, when $|\DD_{te}|$ is much
smaller than the total computational budget $\Lambda$, our algorithm gives a
significant improvement over training only on the validation dataset.
In our experiments in Section \ref{sec:sims}
and Appendix \ref{sec:expers-appendix}, we observe that our algorithm indeed
outperforms the algorithm which trains only on the validation dataset
for several different real-world datasets.
  \section{Empirical Results}
\label{sec:sims}

We evaluate Algorithm \ref{algo:mixnmatch} against various baselines on two real-world datasets. The code used to 
create the testing infrastructure can be found at 
\url{https://github.com/matthewfaw/mixnmatch-infrastructure}, and the code (and
data) used to run experiments can be found at 
\url{https://github.com/matthewfaw/mixnmatch}.
For the simulations considered below, we divide the data into training,
validation, and testing datasets. 
 
\subsection{Experiment preliminaries}
\textbf{Algorithms compared:} We compare the following 
algorithms: (a)~\texttt{Uniform}, which trains on samples from each 
data source uniformly, (b)~\texttt{Genie}, which samples from 
training data sources according to $\aa^*$ in those cases when $\aa^*$ is 
known explicitly, (c)~\texttt{Validation}, which trains only on 
samples from the validation dataset (that is corresponding to $\aa^*$), 
(d)~\texttt{Mix\&MatchCH}, which corresponds to running \match{} by 
partitioning the $\aa$ simplex using a random coordinate halving strategy,
and (e)~\texttt{OnlyX}, which trains on samples only from data source 
\texttt{X}. We describe results with other \match{} algorithm variants in the 
supplement. 

\begin{remark}
Note that the \texttt{Genie} algorithm can be viewed as the best-case
comparison
for our algorithm in our setting.
Indeed, any algorithm which aims to find the data distribution for the
validation dataset will, in the best case, find the true mixture $\aa^*$ by
Proposition~\ref{prop:validationLoss}. Given $\aa^*$, the model minimizing
validation loss may be obtained by running SGD on this mixture distribution
over the training datasets.
Thus, the \texttt{Genie} AUROC scores can be viewed as an upper bound
for the achievable scores in our setting.
\end{remark}

\textbf{Models and metrics:} We use fully connected 3-layer neural networks
with ReLU activations for all our experiments, training
with cross-entropy loss on the categorical labels. We use the test AUROC
as the metric for comparison between the above mentioned algorithms.
For multiclass problems, we use multiclass AUROC metric described in
\cite{Hand2001}.
The reason for using AUROC is due to the label imbalances due to covariate
shifts between the training sources  and our test and validation sets.
In all the figures displayed, each data point is a result of averaging over
10 experiments with the error bars of $1$ standard deviation.
Note that while all error bars are displayed
for all experiments, some error bars are too small to see in the plots. 
 
\subsection{Allstate Purchase Prediction Challenge:}
The Allstate Purchase 
Prediction Challenge Kaggle dataset \cite{kaggle} has
entries from customers across different states in the US. The goal is to
predict what option a customer would choose for an insurance plan in a specific
category (Category G with $4$ options ). The dataset features include 
(a)~demographics and details regarding vehicle ownership of a customer and 
(b)~timestamped information about insurance plan selection across seven
categories (A-G) used by customers to obtain price quotes.
There are multiple timestamped category selections and corresponding price
quotes for a customer. We collapse the selections and the price quote to a
single set of entries using summary statistics of the time stamped features.
\begin{figure}[ht]
    \centering
    \includegraphics[scale=0.4]{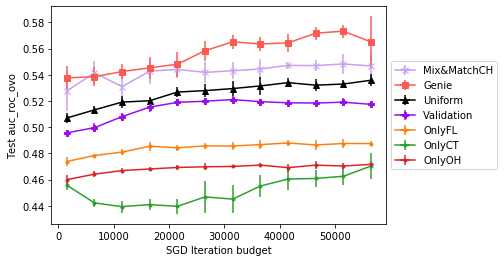}
    \caption{\small Test AUROC for predicting insurance plan for a mixture
    of FL, CT, and OH data}
    \label{fig:allstate3auc}
\end{figure}

In this experiment, we split the Kaggle dataset into
$K=3$ training datasets correspond to customer data from three states:
Florida (FL),
Connecticut (CT), and Ohio (OH). The validation and test datasets 
also consist of customers from these states, but the proportion of
customers from various states is fixed. Details about the test and validation
set formation is in the Appendix. 
In this case, $\aa^*$ is explicitly known for the \texttt{Genie} algorithm.

As shown in Figure \ref{fig:allstate3auc},
with respect to the AUROC metric, \texttt{Mix\&MatchCH} is competitive with
the \texttt{Genie} algorithm and has superior performance to all other 
baselines. The \texttt{Validation} algorithm has performance inferior to the
uniform sampling scheme. Therefore, we are operating in a regime in which
training on the validation set alone is not sufficient for good performance.
 
\subsection{Amazon Employee Access Challenge:}
\begin{figure}[ht]
    \centering
    \includegraphics[scale=0.4]{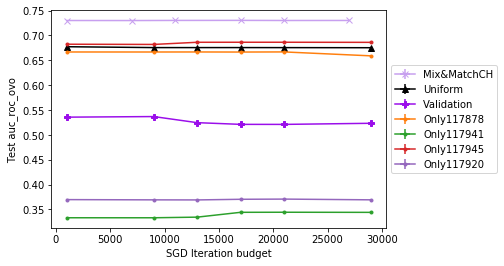}
    \caption{\small Test AUROC for predicting employee access in a new department, using training data from 4 departments}
    \label{fig:amazon}
\end{figure}
We evaluate our algorithms on the Amazon Employee Access Challenge Dataset 
\cite{amazon}. The goals is to whether or not the employee is allowed to
access a resource given details about the employees role in the organization.
We split the training data into different sources based on departments.
The validation and test set has data from a new department unseen in the 
training data sources (In this case we don't know $\aa^*$ explicitly to evaluate
the \texttt{Genie} Algorithm). Additional details about the formation
of datasets is in the Appendix.

We find that \texttt{Mix\&MatchCH}
outperforms the other baselines, and 
training solely on validation is insufficient to obtain a good AUROC score.

\section{Acknowledgements}

This work was partially supported by NSF Grant SATC 1704778, 
ARO grant W911NF-17-1-0359 and the WNCG Industrial Affiliated Program.

\bibliography{MixnMatch}

\clearpage

\appendix

\section{A More Detailed Discussion on Prior Work}
\label{sec:rwork-long}
Transfer learning has assumed an increasingly important role, especially in settings where we are either computationally limited, or data-limited, and yet we have the opportunity to leverage significant computational and data resources yet on domains that differ slightly from the target domain \cite{raina2007self,pan2009survey,dai2009eigentransfer}. This has become an important paradigm in neural networks and other areas \cite{yosinski2014transferable,oquab2014learning,bengio2011deep,kornblith2018better}. 

An important related problem is that of covariate shift \cite{shimodaira2000improving,zadrozny2004learning,gretton2009covariate}. The problem here is that the target distribution may be different from the training distribution. A common technique for addressing this problem is by reweighting the samples in the training set, so that the distribution better matches that of the training set. There have been a number of techniques for doing this. An important recent thread has attempted to do this by using unlabelled data \cite{huang2007correcting,gretton2009covariate}. Other approaches have considered a related problem of solving a weighted log-likelihood maximization \cite{shimodaira2000improving}, or by some form of importance sampling \cite{sugiyama2007covariate,sugiyama2008direct} or bias correction \cite{zadrozny2004learning}. In~\cite{mohri2019agnostic}, the authors study a related problem of learning from different datasets, but provide mini-max bounds in terms of an agnostically chosen test distribution.

Our work is related to, but differs from all the above.  As we explain in Section \ref{sec:psetting}, we share the goal of transfer learning: we have access to enough data for training, but from a family of distributions that are different than the validation distribution (from which we have only enough data to validate). Under a model of covariate shift due to unobserved variables, we show that a target goal is finding an optimal reweighting of populations rather than data points. We use optimistic tree search to address precisely this problem -- something that, as far as we know, has not been undertaken. 

A key part of our work is working under a computational budget, and then designing an optimistic tree-search algorithm under uncertainty. We use a single SGD iteration as the currency denomination of our budget -- i.e., our computational budget requires us to minimize the number of SGD steps in total that our algorithm computes. Enabling MCTS requires a careful understanding of SGD dynamics, and the error bounds on early stopping. There have been important SGD results studying early stopping, e.g., \cite{hardt2015train,bottou2018optimization} and generally results studying error rates for various versions of SGD and recentered SGD \cite{sgdHogwild,defazio2014saga,roux2012stochastic}. Our work requires a new high probability bound, which we obtain in the Supplemental material, Section \ref{sec:sgd-updated}. In \cite{sgdHogwild}, the authors have argued that a uniform norm bound on the stochastic gradients is not the best assumption, however the results in that paper are in expectation. In this paper, we derive our SGD high-probability bounds under the mild assumption that the SGD gradient norms are bounded only at the optimal weight $\bb^*$. 

There are several papers \cite{harvey2018tight,rakhlin2012making} 
which derive high probability bounds on the 
suffix averaged and final
iterates returned by SGD for non-smooth strongly convex functions. However,
both papers operate under the assumption of uniform bounds on the stochastic
gradient. Although these papers do not directly report a dependence on
the diameter of the space, since they both consider projected gradient descent,
one could easily translate their constant dependence to a sum of a diameter
dependent term and a stochastic noise term 
(by using the bounded gradient assumption from \cite{sgdHogwild}, for example).
However, as the set into which the algorithm would project is unknown to our 
algorithm (i.e., it would require knowing $\bb^*$), we cannot use projected
gradient descent in our analysis. As we see in later sections, we need a high-probability SGD guarantee
which characterizes the dependence on diameter of the space and noise of the
stochastic gradient. It is not immediately clear how the analysis in
\cite{harvey2018tight, rakhlin2012making} could be extended in this setting
under the gradient bounded assumption in \cite{sgdHogwild}. In 
Section~\ref{sec:results}, we instead 
develop the high probability bounds that are needed in our setting.

Optimistic tree search makes up the final important ingredient in our algorithm. These ideas have been used in a number of settings \cite{bubeck2011x,grill2015black}. Most relevant to us is a recent extension of these ideas to a setting with biased search \cite{sen2018multi,sen2019noisy}. 
 \section{Standard Definitions from Convex Optimization}
\label{sec:defs}

Recall that we assume throughout the paper that our loss functions satisfy the
following assumptions similar to \cite{sgdHogwild}:
\begin{repassumption}{assump1}[Restated from main text]
For each
loss function $f(\cdot; z)$ corresponding to a \textbf{sample}
$z\in\mathcal{Z}$,
we assume that $f(\cdot; z)$ is: \emph{(i)}~\textbf{$\beta$-smooth} (Definition
\ref{def:smooth})
and \emph{(ii)}~\textbf{convex} (Definition \ref{def:convex}).

Additionally, we assume that, for each $\aa\in\AA$, the \textbf{averaged}
loss function
$\Fa(\cdot)$ is: \emph{(i)}~\textbf{$\mu$-strongly convex} (Definition
\ref{def:strongConvex})
and \emph{(ii)}~\textbf{$L$-Lipschitz} (Definition \ref{def:lips}).
\end{repassumption}

We now state the definitions of these notions, which are standard in the
optimization literature (see, for example, \cite{bubeck15}).

\begin{definition}[$L$-Lipschitz]\label{def:lips}
    We call a function $g(\cdot)$ $L$-Lipschitz if, for all $\ww,\ww'\in\WW$,
    \begin{align*}
        |g(\ww)-g(\ww')| \leq L\|\ww-\ww'\|_2.
    \end{align*}
\end{definition}
\begin{definition}[$\beta$-smooth]\label{def:smooth}
    We call a function $g(\cdot)$ $\beta$-smooth when, for all
    $\ww,\ww'\in\WW$ when the gradient of $f$ is $\beta$-Lipschitz, i.e.,
    \begin{align*}
        \|\nabla g(\ww) - \nabla g(\ww')\|_2 \geq \beta\|\ww-\ww'\|_2.
    \end{align*}
\end{definition}
\begin{definition}[Convex]\label{def:convex}
    We call a function $g(\cdot)$ convex when, for all $\ww,\ww'\in\WW$,
    \begin{align*}
        g(\ww) \geq g(\ww') + \<\nabla g(\ww'), \ww-\ww'\>.
    \end{align*}
\end{definition}
\begin{definition}[$\mu$-strongly convex]\label{def:strongConvex}
    We call a function $g(\cdot)$ $\mu$-strongly convex if, for all
    $\ww,\ww'\in\WW$,
    \begin{align*}
        g(\ww) \geq g(\ww') + \<\nabla g(\ww'), \ww-\ww'\> +
        \frac{\mu}{2}\|\ww-\ww'\|_2^2.
    \end{align*}
\end{definition}

 \section{Smoothness with Respect to \texorpdfstring{$\aa$}{}}
\label{sec:claim1}
In this section we prove Theorem~\ref{thm:claim1}. The analysis is an interesting generalization of Theorem 3.9 in~\cite{hardt2015train}. The key technique is to create a total variational coupling between $\aa_1$ and $\aa_2$. Then using this coupling we prove that SGD iterates from the two distributions cannot be too far apart in expectation. Therefore, because the two sets of iterates converge to their respective optimal solutions, we can conclude that the optimal weights $\bb^*(\aa_1)$ and $\bb^*(\aa_2)$ are close. 

\begin{lemma}\label{lem:alphaOmegaRelation}
 Under conditions of Theorem~\ref{thm:claim1}, let $\bb_n(\aa_1)$ and $\bb_n(\aa_2)$ be the random variables representing the weights after performing $n$ steps of online projected SGD onto a convex body $\mathcal{X} = \mathrm{Conv}\{\bb^*(\aa) \mid \aa\in\mathcal{A}\}$ using the data distributions represented by the mixtures $\aa_1$ and $\aa_2$ respectively, starting from the same initial weight $\bb_0$, and using the step size sequence described in Theorem \ref{thm:SGD_improved}. Then we have the following bound,
\begin{align*}
\EE \left[ \norm{\bb_n(\aa_1) - \bb_n(\aa_2)}\right] \leq \frac{2 \sigma \norm{\aa_1 - \aa_2}_1}{\mu}.
\end{align*}
where $\sigma^2 = \sup_{\bb, \bb'\in\mathcal{X}}\sup_{\aa\in\mathcal{A}} \beta^2 \|\bb - \bb'\|^2 + \GG(\aa).$
\end{lemma}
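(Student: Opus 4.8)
\emph{Proof plan.} The plan is to realize the two trajectories $\{\bb_t(\aa_1)\}$ and $\{\bb_t(\aa_2)\}$ on a common probability space by coupling their sample streams, and to control the expected squared distance $\rho_t := \EE\big[\norm{\bb_t(\aa_1) - \bb_t(\aa_2)}^2\big]$ (write $\Delta_t := \norm{\bb_t(\aa_1)-\bb_t(\aa_2)}$). At each step I would draw $(z_t^{(1)}, z_t^{(2)})$ from a maximal coupling of $p^{(\aa_1)}$ and $p^{(\aa_2)}$, so the two samples coincide except on an event $E_t$ with $\Pr[E_t \mid \mathcal{F}_{t-1}] = \mathrm{TV}\big(p^{(\aa_1)}, p^{(\aa_2)}\big) \le \frac12\norm{\aa_1-\aa_2}_1$, the last bound holding because $p^{(\aa)} = \sum_i \aa_i p_i$ and total variation contracts under the mixture map. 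Two deterministic facts on $\mathcal{X}$ then feed the analysis: (i) since every SGD iterate and every $\bb^*(\aa)$ lies in $\mathcal{X}$, $\beta$-smoothness of each $f(\cdot;z)$ together with Assumption~\ref{assumGrad} gives the \emph{pointwise} bound $\norm{\nabla f(\bb;z)} \le \sqrt{\GG(\aa)} + \beta\,\mathrm{diam}(\mathcal{X})$, hence $\norm{\nabla f(\bb;z)}^2 \le 2\sigma^2$ for all $\bb\in\mathcal{X}$; and (ii) writing $F^{(\aa)} = \sum_i \aa_i F^{(e_i)}$ (with $e_i$ the $i$th vertex of $\AA$, so $p^{(e_i)} = p_i$) and using $\beta$-smoothness of each $F^{(e_i)}$ around its own minimizer in $\mathcal{X}$, one gets $\norm{\nabla F^{(\aa_1)}(\bb) - \nabla F^{(\aa_2)}(\bb)} = \norm{\sum_i(\aa_{1,i}-\aa_{2,i})\nabla F^{(e_i)}(\bb)} \le \sigma\norm{\aa_1-\aa_2}_1$ for $\bb\in\mathcal{X}$.

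\emph{The one-step recursion.} By non-expansiveness of the projection onto $\mathcal{X}$, $\Delta_t^2 \le \norm{(\bb_{t-1}(\aa_1)-\bb_{t-1}(\aa_2)) - \eta_t(g_t^{(1)}-g_t^{(2)})}^2$ with $g_t^{(j)} := \nabla f(\bb_{t-1}(\aa_j); z_t^{(j)})$; I would expand the square and take $\EE[\,\cdot \mid \mathcal{F}_{t-1}]$. For the linear term, $\EE[g_t^{(j)}\mid\mathcal{F}_{t-1}] = \nabla F^{(\aa_j)}(\bb_{t-1}(\aa_j))$, so inserting $\nabla F^{(\aa_1)}(\bb_{t-1}(\aa_2))$ and combining $\mu$-strong convexity of the \emph{averaged} loss $F^{(\aa_1)}$ (monotonicity of its gradient) with fact (ii) lower-bounds it by $\mu\Delta_{t-1}^2 - \sigma\norm{\aa_1-\aa_2}_1\Delta_{t-1}$. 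For the quadratic term the coupling is exactly what keeps it small: on $E_t^c$ the shared sample and $\beta$-smoothness give $\norm{g_t^{(1)}-g_t^{(2)}} \le \beta\Delta_{t-1}$, while on $E_t$ fact (i) bounds $\norm{g_t^{(1)}-g_t^{(2)}}^2 \le 8\sigma^2$, so $\EE[\norm{g_t^{(1)}-g_t^{(2)}}^2\mid\mathcal{F}_{t-1}] \le \beta^2\Delta_{t-1}^2 + 4\sigma^2\norm{\aa_1-\aa_2}_1$. Assembling, taking full expectations, and applying AM--GM to the $\Delta_{t-1}$ cross term yields
\[ \rho_t \;\le\; \big(1 - \mu\eta_t + \beta^2\eta_t^2\big)\,\rho_{t-1} \;+\; \frac{\eta_t\sigma^2}{\mu}\norm{\aa_1-\aa_2}_1^2 \;+\; 4\,\eta_t^2\sigma^2\,\norm{\aa_1-\aa_2}_1 . \]
With the schedule $\eta_t = \frac{2}{\mu(t+E)}$ and $E = \Theta(\kappa^2\log^8\Lambda)$ of Theorem~\ref{thm:SGD_improved}, the offset is large enough that $\beta^2\eta_t^2 \le \frac{\mu\eta_t}{2}$, so the recursion contracts at rate $1-\frac{\mu\eta_t}{2}$ with forcing $\Theta(\eta_t\sigma^2\norm{\aa_1-\aa_2}_1^2/\mu)$ plus a lower-order $O(\eta_t^2\sigma^2\norm{\aa_1-\aa_2}_1)$ piece. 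Unrolling (equivalently, an induction with hypothesis $\rho_t \le (2\sigma\norm{\aa_1-\aa_2}_1/\mu)^2$) keeps $\rho_t \le 4\sigma^2\norm{\aa_1-\aa_2}_1^2/\mu^2$, so $\EE[\Delta_n] \le \sqrt{\rho_n} \le \frac{2\sigma\norm{\aa_1-\aa_2}_1}{\mu}$; the $\eta_t^2$ forcing contributes only a $\tilde O_\Lambda(1/\mathrm{poly})$ correction that vanishes as $n\to\infty$, which is the regime in which Theorem~\ref{thm:claim1} invokes this bound (there the dominant fixed point is in fact $\sqrt{\rho_\infty} = \frac{\sqrt2\,\sigma}{\mu}\norm{\aa_1-\aa_2}_1$).

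\emph{Main obstacle.} The crux — and the source of the generalization of Theorem~3.9 in~\cite{hardt2015train} — is that the individual losses $f(\cdot;z)$ are only convex, so the shared-sample SGD map is merely non-expansive and provides no per-step contraction; the $\mu$-contraction resides solely in the \emph{averaged} loss and therefore only surfaces after taking the conditional expectation, which is why the argument must track the second moment $\rho_t$ rather than proceed pathwise as in~\cite{hardt2015train}. The total-variation coupling is essential precisely so that the leftover noise term in the recursion scales with $\norm{\aa_1-\aa_2}_1$ — and vanishes when $\aa_1=\aa_2$ — instead of with $\sigma^2$. The remaining work (the uniform gradient bounds on $\mathcal{X}$, the AM--GM split, and dominating the $\eta_t^2$ terms via the large offset $E$) is routine bookkeeping.
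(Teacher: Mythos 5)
Your coupling device is exactly the paper's (a total-variation/maximal coupling of the two sample streams, with mismatch probability $\tfrac12\norm{\aa_1-\aa_2}_1$), and your two auxiliary facts on $\mathcal{X}$ are sound. The genuine gap is in the last step. Because you track the second moment, the mismatch event enters your recursion through the variance term $4\eta_t^2\sigma^2\norm{\aa_1-\aa_2}_1$, which is \emph{linear} in $\norm{\aa_1-\aa_2}_1$, while your target $4\sigma^2\norm{\aa_1-\aa_2}_1^2/\mu^2$ is quadratic. The induction you invoke therefore does not close: with contraction factor $1-\tfrac{1}{t+E}$ and forcing $\tfrac{2\sigma^2\norm{\aa_1-\aa_2}_1^2}{\mu^2(t+E)}+\tfrac{16\sigma^2\norm{\aa_1-\aa_2}_1}{\mu^2(t+E)^2}$, the inductive step requires $\norm{\aa_1-\aa_2}_1\geq 8/(t+E)$, which fails precisely in the regime the lemma is needed for (nearby mixtures, i.e.\ deep cells of the tree). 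Unrolling instead gives $\rho_n \leq \tfrac{2\sigma^2}{\mu^2}\norm{\aa_1-\aa_2}_1^2 + O\bigl(\tfrac{\sigma^2\norm{\aa_1-\aa_2}_1\log(n+E)}{\mu^2(n+E)}\bigr)$, so your argument establishes the claimed bound only up to an additive correction vanishing as $n\to\infty$. That weaker form does suffice for the way Theorem~\ref{thm:claim1} uses the lemma (the bound there is applied along $n\to\infty$), as you note, but it is not the statement as written, which is uniform in $n$.

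The paper avoids this by never leaving the first moment, and this also shows your stated ``main obstacle'' is not actually binding. On the shared-sample event it bounds the \emph{conditional} second moment by $(1-\mu\eta_t)\delta_t^2$ (smoothness of each $f(\cdot;z)$ plus strong convexity of $F$, via the coercivity inequality of Lemma 3.11 in \cite{bubeck15}) and then applies Jensen conditionally to get $\EE_{Z_t}[\delta_{t+1}]\leq(1-\mu\eta_t/2)\delta_t$; on the mismatch event it pays an additive $2\sigma\eta_t$ using the same gradient-norm bounds on $\mathcal{X}$ you use. This yields $\EE[\delta_{t+1}]\leq(1-\mu\eta_t/2)\EE[\delta_t]+\sigma\eta_t\norm{\aa_1-\aa_2}_1$, whose forcing is proportional to $\norm{\aa_1-\aa_2}_1$ with no $\eta_t^2$ remainder, and with $\eta_t=2/(\mu(t+E))$ it telescopes exactly to $\tfrac{2\sigma}{\mu}\norm{\aa_1-\aa_2}_1$ for every $n$. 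So the $\mu$-contraction coming only from the averaged loss does not force a second-moment analysis: it can be harvested in conditional expectation and pulled back to the first moment by conditional Jensen, which is precisely the paper's generalization of Theorem 3.9 of \cite{hardt2015train}. To repair your route you would either have to restate the lemma with the vanishing additive term (and track it through Theorem~\ref{thm:claim1}), or switch to the first-moment recursion, since the mismatch variance term is inherently linear in $\norm{\aa_1-\aa_2}_1$ and cannot be absorbed into a fixed point quadratic in it.
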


\begin{proof}
We closely follow the proof of Theorem 3.9 in~\cite{hardt2015train}. Let $\bb_{t+1}(\aa_i) = \Pi_\mathcal{X}(\bb_t - \eta_t \nabla f(\bb_t; Z^{(i)}_t))$ denote the SGD update while processing the $t$-th example from $\aa_i$ for $i\in\{1,2\}$. Let $I,J$ be two random variables whose joint distribution follows the variational coupling between $\aa_1$ and $\aa_2$. Thus the marginals of $I$ and $J$ are $\aa_1$ and $\aa_2$ respectively, while $\PP(I \neq J) = d_{TV}(\aa_1,\aa_2)$. At each time $I_t \sim I$ and $J_t \sim J$ are drawn. If $I_t = J_t$, then we draw a data sample $Z_t$ from $D_{I_t}$ and set $Z^{(1)}_t = Z^{(2)}_t = Z_t$. Otherwise, we draw $Z^{(1)}_t$ from $D_{I_t}$ and $Z^{(2)}_t$ from $D_{J_t}$ independently.

Therefore, following the analysis in~\cite{hardt2015train}, if $I_t = J_t$, then, by 
Lemma 3.7.3 in \cite{hardt2015train}, by our choice of step size, and since Euclidean projection does not increase the distance between projected points (see for example Lemma 3.1 in \cite{bubeck15}),
\begin{align*}
\delta_{t+1}^2 &= \| \bb_{t+1}(\aa_1) - \bb_{t+1}(\aa_2) \|^2\\
&= \| \Pi_\mathcal{X}(\bb_t(\aa_1) - \eta_t \nabla f(\bb_t(\aa_1); Z_t)) - \Pi_\mathcal{X}(\bb_t(\aa_2) - \eta_t \nabla f(\bb_t(\aa_2); Z_t)) \|^2\\
&\leq \| \bb_t(\aa_1) - \eta_t \nabla f(\bb_t(\aa_1); Z_t) - \bb_t(\aa_2) + \eta_t \nabla f(\bb_t(\aa_2; Z_t) \|^2\\
&= \delta_t^2 + \eta_t^2 \| \nabla f(\bb_t(\aa_1); Z_t) - \nabla f(\bb_t(\aa_2; Z_t) \|^2\\
&~~~~- 2 \eta_t \<\nabla f(\bb_t(\aa_1); Z_t) - \nabla f(\bb_t(\aa_2); Z_t), \bb_t(\aa_1) - \bb_t(\aa_2) \> \\
\end{align*}
Now, taking expectations with respect to $Z_t$, we get the following:
\begin{align*}
\EE_{Z_t}[\delta_{t+1}^2] &\leq
 \delta_t^2 + \eta_t^2 \EE_{Z_t}\| \nabla f(\bb_t(\aa_1); Z_t) - \nabla f(\bb_t(\aa_2); Z_t) \|^2\\
&~~~~- 2 \eta_t \<\nabla F(\bb_t(\aa_1)) - \nabla F(\bb_t(\aa_2)), \bb_t(\aa_1) - \bb_t(\aa_2) \> \\
&\leq \delta_t^2 + \eta_t^2 \underbrace{\| \nabla f(\bb_t(\aa_1); Z_t) - \nabla f(\bb_t(\aa_2; Z_t) \|^2}_{\leq \beta^2 \delta_t^2 \text{ by smoothness of $f$}}\\
&~~~~- 2 \eta_t \underbrace{\left(\frac{\mu\beta}{\mu + \beta} \|\bb_t(\aa_1) - \bb_t(\aa_2) \|^2 + \frac{1}{\mu + \beta} \underbrace{\|\nabla F(\bb_t(\aa_1)) - \nabla F(\bb_t(\aa_2)) \|^2}_{\leq \mu^2\delta_t^2 \text{by strong convexity of $F$}} \right)}_\text{bound holds by Lemma 3.11 in \cite{bubeck15}}\\
&= \left(1 - 2\eta_t \frac{\mu \beta}{\mu + \beta}\right)\delta_t^2 - \eta_t\underbrace{\left(\frac{2\mu^2}{\mu + \beta} - \beta^2\eta_t\right)}_{\geq 0 \text{ by choice of $\eta_t$}} \delta_t^2\\
&\leq (1 - \mu \eta_t)\delta_t^2
\end{align*}
where the last inequality follows since $\eta_t < \frac{\mu + \beta}{2\mu\beta}$,
$\eta_t \leq \frac{2}{\kappa^2(\mu + \beta)} \leq \frac{1}{\mu \kappa^2},$ and $\beta \geq \mu.$
Thus, when $I_t = J_t,$ we have that
\begin{align*}
    \EE_{Z_t}[\delta_{t+1}] 
    &\leq \sqrt{\EE_{Z_t}[\delta_{t+1}^2]} && \text{Using concavity of $\sqrt{.}$, and applying Jensen's inequality}\\
    &\leq \sqrt{(1 - \mu \eta_t)\delta_t^2} && \text{using our bound above}\\
    &\leq (1 - \mu \eta_t/2) \delta_t && \text{since $\sqrt{1 - \mu\eta_t} \leq 1 - \frac{\mu \eta_t}{2}$.}
\end{align*}

On the other hand, when $I_t\neq J_t$, we have that 
\begin{align*}
    \delta_{t+1} &\leq \| \bb_t(\aa_1) - \eta_t \nabla f(\bb_t(\aa_1; Z^{(1)}_t) - \bb_t(\aa_2) + \eta_t \nabla f(\bb_t(\aa_2; Z^{(2)}_t)) \|\\
    &\leq \| \bb_t(\aa_1) - \eta_t \nabla f(\bb_t(\aa_1; Z^{(1)}_t) - \bb_t(\aa_2) + \eta_t \nabla f(\bb_t(\aa_2; Z^{(1)}_t) \|\\
    &~~~~+ \eta_t(\| \nabla f(\bb(\aa_2); Z^{(1)})\| + \|\nabla f(\bb(\aa_2); Z^{(2)}) \|)\\
    &\leq (1 - \mu \eta_t/2)\delta_t + \eta_t (\| \nabla f(\bb(\aa_2); Z^{(1)})\| + \|\nabla f(\bb(\aa_2); Z^{(2)}) \|) && \text{By the previous bound}\\
    &\leq (1 - \mu \eta_t/2)\delta_t + \eta_t \sqrt{2\beta^2 \|\bb_t(\aa_2) - \bb^*(\aa_1)\|^2 + 2 \GG(\aa_1)}\\
    &~~~~+ \eta_t\sqrt{2\beta^2 \|\bb_t(\aa_2) - \bb^*(\aa_2)\|^2 + 2 \GG(\aa_2)}) && \text{by Lemma \ref{lem:samplePathLem2}}\\
    &\leq(1 - \mu \eta_t/2)\delta_t + 2 \sigma \eta_t
\end{align*}
where $\sigma^2 = \sup_{\bb, \bb'\in\mathcal{X}}\sup_{\aa\in\mathcal{A}} \beta^2 \|\bb - \bb'\|^2 + \GG(\aa).$

Thus, by combining both of these results, we obtain:
\begin{align*}
\EE[\delta_{t+1}] &\leq (1 - \mu\eta_t/2) \EE[\delta_{t}] + 2\sigma \eta_t \PP\{I_t \neq J_t\} \\
& = (1 - \mu\eta_t/2) \EE[\delta_{t}] +  \sigma\eta_t \norm{\aa_1 - \aa_2}_1.
\end{align*}

Assuming that $\delta_{t_0} = 0$, we get the following result from the recursion,
\begin{align*}
\EE[\delta_n] &\leq \sum_{t = t_0}^{n} \left\{\prod_{s = t+1}^{n} \left(1 - \frac{1}{s + E}\right)\right\} \frac{2 \sigma}{\mu (t + E)} \norm{\aa_1 - \aa_2}_1 \\
&= \sum_{t = t_0}^{n} \frac{t + E}{n + E} \frac{2 \sigma}{\mu (t + E)} \norm{\aa_1 - \aa_2}_1 \\
&\leq \frac{n - t_0 + 1}{n + E} \frac{2 \sigma}{\mu } \norm{\aa_1 - \aa_2}_1 \\
& \leq \frac{2 \sigma}{\mu } \norm{\aa_1 - \aa_2}_1.
\end{align*}

\end{proof}

\begin{proof}[Proof of Theorem~\ref{thm:claim1}]
First, note that by definition $\bb^*(\aa)$ is not a random variable i.e it is the optimal weight with respect to the distribution corresponding to $\aa$. On the other hand, $\bb_n(\cdot)$
is a random variable,
where the randomness is coming from the randomness in SGD sampling. By the triangle inequality, we have the following:
\begin{align}
\norm{\bb^*(\aa_1) - \bb^*(\aa_2)}  &\leq \norm{\bb^*(\aa_1) - {\bb}_n(\aa_1)} + \norm{{\bb}_n(\aa_1) - {\bb}_n(\aa_2)} + \nonumber\\
&\norm{\bb^*(\aa_2) - {\bb}_n(\aa_2)} \nonumber\\
\implies \norm{\bb^*(\aa_1) - \bb^*(\aa_2)} &= \EE[\norm{\bb^*(\aa_1) - \bb^*(\aa_2)}] \nonumber\\
&\leq \EE[\norm{\bb^*(\aa_1) - {\bb}_n(\aa_1)}] + \EE[\norm{{\bb}_n(\aa_1) - {\bb}_n(\aa_2)}] \nonumber\\
&+ \EE[\norm{\bb^*(\aa_2) - {\bb}_n(\aa_2)}].\label{eq:thm1ineq}
\end{align}
The expectation in the middle of the r.h.s.\ is bounded as in Lemma \ref{lem:alphaOmegaRelation}. 
We can use Theorem 2 in \cite{sgdHogwild} and Jensen's inequality to bound the other two 
terms on the r.h.s. as
\begin{align*}
    \EE[\| {\bb}_n(\aa_1) - \bb^*(\aa_1) \|_2 ] &\leq \sqrt{\EE[\| {\bb}_n(\aa_1) - \bb^*(\aa_1) \|_2^2 ]} && \text{by concavity of $\sqrt{.}$}\\
    &\leq \sqrt{\frac{2 \GG G}{\mu^2(n + E)}} && \text{by Theorem 2 in \cite{sgdHogwild}\footnotemark}
\end{align*}
\footnotetext{Note that here, we are considering \textit{projected} SGD, while the analysis in \cite{sgdHogwild} is done without projection. Note that the proof of Theorem 2 trivially continues to hold under projection, as a result of the inequality $\| \Pi_{\mathcal{X}}(\tilde{\bb}_{t+1}) - \bb^* \|^2 \leq \| \tilde{\bb}_{t+1} - \bb^* \|^2$ (see Lemma 3.1 in \cite{bubeck15}), for example.}
where we take $\GG = \max\{\GG(\aa_1),\GG(\aa_2)\}$, $E$ is chosen as in Theorem \ref{thm:SGD_improved}, and $G = \max\left\{ \frac{E\mu^2}{2\GG}, 4 \right\}.$
Now, noting that the inequality (\ref{eq:thm1ineq}) holds for all $n$, we have the bound claimed in
Theorem \ref{thm:claim1}.
\end{proof}

\begin{proof}[Proof of Corollary~\ref{cor:alphaGeometricDecay}]
This proof is a straightforward consequence of Theorem 3.1 in \cite{simplexBisection} and Theorem \ref{thm:claim1}. In 
particular, Theorem 3.1 in \cite{simplexBisection} tells us that under the method of bisection of the simplex which they describe, 
\begin{align*}
    \| \aa_1 - \aa_2 \|_2 \leq \left(\frac{\sqrt{3}}{2}\right)^{\lfloor \frac{h}{K-1} \rfloor} \text{diam}(\AA),
\end{align*}
where $\text{diam}(\AA) = \sup\{ \| \aa - \aa' \|_2 \mid \aa, \aa' \in \AA\},$ and $K-1 = \mathrm{dim}(\AA).$ As noted
in Remark 2.5 in \cite{simplexBisection}, $\text{diam}(\AA) = \sqrt{2}$ since $\AA$ is
the unit simplex. Thus, by the Cauchy-Schwartz inequality, and since $\left\lfloor \frac{h}{K-1} \right\rfloor > \frac{h}{K-1} - 1$, we have the following:
\begin{align*}
    \| \aa_1 - \aa_2 \|_1 &\leq \sqrt{K} \| \aa_1 - \aa_2 \|_2\\
    &\leq \sqrt{2 K} \left(\frac{\sqrt{3}}{2} \right)^{\lfloor \frac{h}{K-1} \rfloor}\\
    &\leq \sqrt{2 K} \left(\frac{\sqrt{3}}{2} \right)^{\frac{h}{K-1} - 1}.
\end{align*}
Now we may use this result along with our assumption that $F$ is Lipschitz and Theorem \ref{thm:claim1} to obtain:
\begin{align*}
    |G(\aa_1) - G(\aa_2)| &= | F^{(te)}(\bb^*(\aa_1)) - F^{(te)}(\bb^*(\aa_2)) |\\
    &= | \Fas(\bb^*(\aa_1)) - \Fas(\bb^*(\aa_2)) |\\
    &\leq L \| \bb^*(\aa_1) - \bb^*(\aa_2) \|_2\\
    &\leq \frac{2 L \sigma \| \aa_1 - \aa_2 \|_1}{\mu}\\
    &\leq \frac{4 L \sigma \sqrt{2 K}}{\sqrt{3} \mu} \left( \frac{\sqrt{3}}{2} \right)^{\frac{h}{K-1}},
\end{align*}
which is the desired result.
\end{proof} \section{New High-Probability Bounds on SGD without a Constant Gradient Bound}
\label{sec:sgd-updated}

In this section, we will prove a high-probability bound on any
iterate of SGD evolving over the time interval $t = 1, 2, \ldots, T,$ without assuming a uniform bound on the stochastic gradient over the domain. Instead, this bound introduces a tunable parameter $\Lambda > (T+1)$ that controls the trade-off between a bound on the SGD iterate $d_t^2,$ and the probability with which the bound holds. As we discuss in Remark~\ref{remark:unifDiamBound}, this parameter can be set to provide tighter high-probability guarantees on the SGD iterates in settings where the diameter of the domain is large and/or cannot be controlled.

\begin{reptheorem}{thm:SGD_improved}[Restated from main text]
Consider a sequence of random samples $z_0, z_1,\ldots, z_T$ drawn from a distribution $p(z).$
Define the filtration $\mathcal{F}_t$ generated by $\sigma\{z_0, z_1,\ldots, z_t\}.$ Let us define a sequence
of random variables by the gradient descent update: $\bb_{t+1} = \bb_t - \eta_t \nabla f(\bb_t; z_t), t =1, \ldots, T$, and $\bb_0$ is a fixed vector in $\RR^d.$

If we use the step size schedule $\eta_t = \frac{2}{\mu(t + E)},$ where 
$E = 4096 \kappa^2 \log \Lambda^8,$
then, under Assumptions \ref{assump1} and \ref{assumGrad}, and taking $\Lambda \geq t + 1$, we have the following high probability bound on the final iterate of the SGD procedure after $t$ time steps for
\textbf{any} $k\geq 0$:
\begin{align}
    \mathrm{Pr}\left(d_{t+1}^2 > \frac{G(d_0^2, \GG)}{t + E + 1} + \frac{8 (t+1)\tilde{C}(D^2, D\sqrt{\GG})}{\mu(t + 1 + E)\Lambda^7} +  \frac{4 \sqrt{2 \log(\Lambda^8)}\sqrt{\hat{C}(k)}}{\mu(t + E + 1)^{\alpha_{k+1}}}
 \right) &\leq \frac{t + 1}{\Lambda^8}
\end{align}
where
\begin{align*}
    G(d_0^2, \GG) &= \max\left\{ E d_0^2, \frac{8\GG}{\mu^2} \right\}\\
    \tilde{C}(D^2, D\sqrt{\GG}) &= D \sqrt{8 \beta^2 D^2 + 2\GG}\\
    \hat{C}(k) &= O(\log \Lambda^8) & \text{See Corollary \ref{cor:VtBoundTighter} for definition and Remark \ref{remark:C-scaling} for discussion.}\\
    \alpha_{k+1} &= \sum_{i=1}^{k+1} \frac{1}{2^i}.
\end{align*}
In particular, when we choose $k=0$, the above expression becomes
\begin{align}
    \mathrm{Pr}\left(d_{t+1}^2 > \frac{G(d_0^2, \GG)}{t + E + 1} + \frac{8 (t+1)\tilde{C}(D^2, D\sqrt{\GG})}{\mu(t + 1 + E)\Lambda^7} +  \frac{4 \sqrt{2\check{C}} \log(\Lambda^8)}{\mu\sqrt{t + E + 1}}
 \right) &\leq \frac{t + 1}{\Lambda^8}
\end{align}
where 
\begin{align*}
    \check{C} &= \max\left\{ \frac{8 d_0^2(4\beta^2 d_0^2 + \GG)}{(1+E)\log\Lambda^8}, \left(\frac{32\sqrt{2}\GG}{\mu} + \frac{2}{E}\right)^2, \left( \frac{64 \beta^2 c_1^2}{(1+E)\log \Lambda^8} + \frac{8 \GG c_1}{\log \Lambda^8}\right)^2 \right\}\\
    c_1 &= G(d_0^2, \GG) + \frac{8 \tilde{C}(D^2, D\sqrt{\GG})}{\mu \Lambda^6}
\end{align*}
\end{reptheorem}

\begin{remark}
\label{remark:terms-in-theorem}
This result essentially states that the distance of $\bb_t$ to $\bb^*$ is at most the sum of three terms with high probability. Recall from the first step of the proof of Theorem 2 in \cite{sgdHogwild} that $\EE[d_t^2] \leq (1-\mu\eta_t)\EE[d_t^2] + 2\eta_t^2\GG + \EE[M_t],$ where $M_t = \< \nabla F(\bb_t) - \nabla f(\bb_t; z_t), \bb_t - \bb^*\>$ is a martingale difference sequence with respect to the filtration generated by samples $\bb_0,\ldots,\bb_t$ (in particular, note that $\EE[M_t]=0$). We obtain a similar inequality in the high probability analysis without the expectations,
so bounding the $M_t$ term is the main difficulty in proving the high probability convergence guarantee.
Indeed, the first term in our high-probability guarantee corresponds to a bound on the $(1-\mu\eta_t)d_t + 2\eta_t \GG$ term. Thus, as in the expected value analysis from \cite{sgdHogwild}, this term decreases linearly
in the number of steps $t$, with the scaling constant depending only on the initial distance $d_0$
and a uniform bound on the stochastic gradient at the optimum model parameter ($\bb^*$).

The latter two terms correspond to a bound on a normalized version of the martingale $\sum_{i}M_i$, which appears after unrolling the aforementioned recursion. Due to our
more relaxed assumption on the bound on the norm of the stochastic gradient, we employ
different techniques in bounding this term than were used in \cite{harvey2018tight}.
The second term is a bias term that depends on the worst case diameter bound $D$ (or if no diameter bound exists, then $D$  represents the worst case distance between $\bb_t$ and $\bb^*,$ see Remark~\ref{remark:unifDiamBound}), and appears as a result of applying Azuma-Hoeffding with conditioning. Our bound exhibits a trade-off between the bias term which is $O(D^2/\mathrm{poly}(\Lambda)),$ and the probability of the bad event which is $\frac{t+1}{\Lambda^8}.$ This trade-off can be achieved by tuning the parameter $\Lambda.$ 
Notice that while the probability of the bad event decays polynomially in $\Lambda,$
the bias only increases as $\mathrm{poly}(\log \Lambda).$ 

The third term represents the deviation of the martingale,
which decreases nearly linearly in $t$ (i.e. $t^\gamma$ for any $\gamma$ close to $1$). The scaling constant, however, depends on $\gamma$. By choosing $\Lambda$ appropriately (in the second term), this third term decays the slowest of the three, for large values of $t,$ and is thus the most important one from a scaling-in-time perspective.

\end{remark}

\begin{remark}\label{remark:unifDiamBound}
In typical SDG analysis (e.g. \cite{duchi2010composite, harvey2018tight}), a uniform bound on the stochastic gradient
is assumed. Note that if we assume a uniform bound on $d_t$, i.e. $d_t \leq D ~\forall~ t\in[1,T],$ then
under Assumption \ref{assump1}, we immediately obtain a uniform bound on the stochastic gradient, since:
\begin{align}
    \| \nabla f(\bb_t; z) \| &\leq \| \nabla f(\bb_t; z) - \nabla f(\bb^*; z) \| + \| \nabla f(\bb^*; z) \| \nonumber\\
    &\leq \beta d_t + \sqrt{\GG}\nonumber\\
    &\leq \beta D + \sqrt{\GG} := \sqrt{\Bar{\mathcal{G}}}
\end{align}
If we do not have access to a projection operator on our feasible set of $\bb,$ or otherwise choose
not to run projected gradient descent, then we obtain a worst-case upper bound of
$D = O\left( t^{u} \right)$ where $u = 2\sqrt{2}\kappa^{3/2}$, since:
\begin{align*}
    d_{t+1} &\leq d_t + \eta_t \| \nabla f(\bb_t; z_t) \| && \text{by triangle inequality and definition of the SGD step}\\
    &\leq d_t + \eta_t \sqrt{2 \beta^2 \kappa d_t^2 + 2\GG} && \text{by Lemma \ref{lem:samplePathLem2}}\\
    &\leq \left(1 + \frac{\alpha\sqrt{2\kappa}\beta}{\mu(t + E)}\right)d_t + \frac{\alpha\sqrt{2\GG}}{\mu(t + E)} && \text{by choice of $\eta_t = \frac{\alpha}{\mu(t+E)}$, where $\alpha > 1$ must hold}\\
    &= O\left(t^{\alpha\sqrt{2}\kappa^{3/2}}\right) && \text{we take $\alpha=2$ throughout this paper}
\end{align*}
Thus, when we do not assume access to the feasible set of $\bb$ and do not run projected gradient
descent, a convergence guarantee of the form $\tilde{O}\left( \frac{\Bar{\mathcal{G}}}{t} \right)$ that follows from a uniform bound on the stochastic gradient
does not suffice in our setting because $\Bar{\mathcal{G}}$ scales polynomially in $t$. We further note that even if we do have access to a projection operator, $\Bar{\mathcal{G}}$ scales quadratically in the radius of the projection set, and thus can be very large. 

Instead, we wish to construct a high probability guarantee on the final SGD iterate in a fashion similar to
the expected value guarantee given in \cite{sgdHogwild}.
Now under our construction, we have an additional parameter, $\Lambda,$ which we may use to our advantage 
to obtain meaningful convergence results \textit{even when $D$ scales polynomially}. Indeed, we observe that each occurrence of $\tilde{C}$ in our construction is normalized by at least $\Lambda^2.$
Thus, since $\tilde{C} = O(D^2),$ 
by replacing $\Lambda \leftarrow \Lambda^{2u + 1}$ in our analysis, and assuming $\Lambda$ is polynomial in $t,$
we can obtain (ignoring polylog factors) $\tilde{O}\left(\frac{1}{t^{\gamma}}\right)$ convergence of the 
final iterate of SGD, for any $\gamma < 1$.
Note that this change simply modifies the definition of $r_t$ by a constant factor. Thus, our
convergence guarantee continues to hold with minor modifications to the choice of constants in our analysis.

\end{remark}

A direct consequence of Theorem \ref{thm:SGD_improved} and the fact that $\| \bb_0 - \bb_{h+1, 2i}^*\|_2^2 \leq 2 \| \bb_0 - \bb_{h,i}\|_2^2 + 2 \| \bb_{h,i}^* - \bb_{h+1,2i}\|_2^2 \leq 4 \nu_1 \rho^h$ by Theorem \ref{thm:claim1} is the following Corollary, which guides our SGD budget allocation strategy.

\begin{corollary}
\label{cor:step}
Consider a tree node $(h,i)$ with mixture weights $\aa_{h,i}$ and optimal learning parameter $\bb^*_{h,i}$. Assuming we start at a initial point $\bb_0$ such that $\norm{\bb_0 - \bb^*_{h,i}}_2^2 \leq \nu_1\rho^h$ and take $t = \lambda(h+1)$ SGD steps using the child node distribution $p^{(\aa^*_{h+1,2i})}$ where, $\lambda(h+1)$ is chosen to satisfy
\begin{align}
    \frac{G(4\nu_1 \rho^h, \GG)}{\lambda(h+1) + E} + \frac{8 \lambda(h+1) \tilde{C}(D^2, D\sqrt{\GG})}{\mu(\lambda(h+1) + E)\Lambda^7} + \frac{4\sqrt{2 \log(\Lambda^8)} \sqrt{\hat{C}(k)}}{\mu(\lambda(h+1) + E)^{\alpha_{k+1}}} \leq \nu_1 \rho^{h+1},
\end{align}
then by Theorem \ref{thm:SGD_improved}, 
with probability at least $1 - \frac{1}{\Lambda^7}$ we have $||\bb_t - \bb^*_{h+1,2i}||_2^2 \leq \nu_1\rho^{h+1}.$ 

In particular, if we assume that $D^2 = K(t) d_0^2$ for some $K(t)$ such that $K(t)/\Lambda^6 = \hat{K} = O(1)$ 
(refer to Remark \ref{remark:unifDiamBound} for why this particular assumption is reasonable), 
then when $G = E d_{0}^2$ (i.e. $E d_0^2 \geq \frac{8\GG}{\mu}$) and 
$\hat{C}(0) = \frac{8 d_0^2(4 \beta^2 d_0^2 + \GG)}{1 + E}$ (note that a similar statement can be made if the third term inside the max in $\check{C}$ from Theorem \ref{thm:SGD_improved}, instead of the first term, is maximal), 
taking $k=0$, we may choose $\lambda(h)$ \textbf{independently of $h$}:
\begin{align}
\lambda(h+1) = \lambda &= \Bigg( \frac{1}{\rho \sqrt{1 + E}}\bigg( 4E + 64\sqrt{2}\kappa \hat{K} + \frac{16 \sqrt{E \hat{K}}}{\sqrt{\mu} \Lambda^3} \nonumber\\
    &~~~~~~~~~~~~~~~~~~~~~~~~+ 128 \kappa \sqrt{\log(\Lambda^8)} + \frac{16 \sqrt{2E \log(\Lambda^8)}}{\sqrt{\mu}} \bigg) \Bigg)^2 - E.\label{eq:budgetSchedule}
\end{align}
\end{corollary}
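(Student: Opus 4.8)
The plan is to prove the two assertions separately. The first --- the generic high-probability statement, for a general $\lambda(h+1)$ and general $k$ --- is essentially one invocation of Theorem~\ref{thm:SGD_improved} once the effective initial distance at the child node is controlled; the second --- the explicit, height-independent schedule~(\ref{eq:budgetSchedule}) --- amounts to checking that the $\lambda$ defined there satisfies the sufficient condition from the first part, by unwinding the three error terms of Theorem~\ref{thm:SGD_improved} under the stated simplifying assumptions.

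For the generic statement, I would first control the distance from the initial iterate $\bb_0 = \hat{\bb}(\aa_{h,i})$ to the \emph{child} optimum $\bb^*_{h+1,2i}$. Using $\|\bb_0 - \bb^*_{h+1,2i}\|_2^2 \leq 2\|\bb_0 - \bb^*_{h,i}\|_2^2 + 2\|\bb^*_{h,i} - \bb^*_{h+1,2i}\|_2^2$, the first term is at most $2\nu_1\rho^h$ by hypothesis; and since the child mixture $\aa_{h+1,2i}$ lies in the parent cell $\mathcal{P}_{h,i}$ together with $\aa_{h,i}$, Corollary~\ref{cor:alphaGeometricDecay} (a consequence of Theorem~\ref{thm:claim1}) bounds the second term by $\nu_1\rho^h$, so the effective initial distance satisfies $d_0^2 \leq 4\nu_1\rho^h$. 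I would then invoke Theorem~\ref{thm:SGD_improved} with this value of $d_0^2$, with $t = \lambda(h+1)$, and with $\Lambda \geq t + 1$ (which holds because \match{} never expends more than $\Lambda$ SGD steps); since the right-hand side of Theorem~\ref{thm:SGD_improved} is nondecreasing in its initial-distance argument --- the dependence on $d_0$ enters only through $G(\cdot, \GG)$ and through $\hat{C}(k)$ --- substituting the upper bound $4\nu_1\rho^h$ for $d_0^2$ leaves the bound valid. The failure probability is then $\frac{\lambda(h+1)+1}{\Lambda^8} \leq \frac{1}{\Lambda^7}$, and the displayed hypothesis on $\lambda(h+1)$ says exactly that the resulting right-hand side is at most $\nu_1\rho^{h+1}$, giving $\|\bb_t - \bb^*_{h+1,2i}\|_2^2 \leq \nu_1\rho^{h+1}$ on the good event.

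For the explicit choice~(\ref{eq:budgetSchedule}), I would substitute into the sufficient condition on $\lambda(h+1)$ the quantities $d_0^2 = 4\nu_1\rho^h$, $k = 0$ (so $\alpha_1 = \frac{1}{2}$), $G(d_0^2, \GG) = E d_0^2$, $\hat{C}(0) = \frac{8 d_0^2(4\beta^2 d_0^2 + \GG)}{1+E}$, and $D^2 = K(t) d_0^2 = \hat{K}\Lambda^6 d_0^2$, and bound the three resulting terms. Applying $\sqrt{a+b} \leq \sqrt{a} + \sqrt{b}$ to split $\tilde{C}(D^2, D\sqrt{\GG})$ and $\sqrt{\hat{C}(0)}$ into a ``$\beta D^2$''-part and a ``$D\sqrt{\GG}$''-part, the first parts contribute the constants proportional to $\kappa\hat{K}$ and $\kappa\sqrt{\log\Lambda^8}$. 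The $\GG$-parts are where the noise-floor hypothesis enters: $G = E d_0^2$ means $\GG \leq \frac{E\mu d_0^2}{8}$, so $\sqrt{\GG} \leq \sqrt{E\mu/8}\,d_0$, i.e.\ $\sqrt{\GG}$ is at most a constant times $\sqrt{E}\,\rho^{h/2}$, which turns every ostensibly-$\rho^{h/2}$ quantity into one of order $\rho^h$. Using also $\frac{\lambda}{\lambda+E}\leq 1$, $\lambda + E \geq 1 + E$, and $\sqrt{(1+E)(\lambda+E)} \leq \Lambda$ --- which by~(\ref{eq:budgetSchedule}) is just $\Lambda \geq S/\rho$, where $S$ denotes the sum of the five explicit constants in~(\ref{eq:budgetSchedule}), and hence holds for $\Lambda$ large relative to the polylogarithmic quantity $E$ --- the three terms together are at most $\frac{\nu_1\rho^h}{\sqrt{(1+E)(\lambda+E)}}\,S$. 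Requiring this to be $\leq \nu_1\rho^{h+1} = \rho\cdot\nu_1\rho^h$ is equivalent to $\lambda + E \geq \frac{S^2}{\rho^2(1+E)}$, and solving for $\lambda$ yields exactly~(\ref{eq:budgetSchedule}); since every $h$-dependence cancelled, the same $\lambda$ works at all heights, and one checks directly that $\lambda \geq 7E \geq 1$.

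The main obstacle I anticipate is the bookkeeping in the second part, and especially the role of the noise floor: the $\GG$-dependent parts of $\tilde{C}$ and $\hat{C}(0)$ a priori scale like $\rho^{h/2}$, not $\rho^h$, which would make a height-independent budget impossible, and it is precisely the hypothesis $E d_0^2 \geq \frac{8\GG}{\mu}$ --- the ``above the noise floor'' regime mentioned in the discussion following Corollary~\ref{cor:step} --- that restores a uniform-in-$h$ bound. Getting the five summands in~(\ref{eq:budgetSchedule}) to match exactly requires care with the $\sqrt{a+b}\leq\sqrt{a}+\sqrt{b}$ splits, with the powers of $\Lambda$ from $D^2 = \hat{K}\Lambda^6 d_0^2$ cancelling the $\Lambda^7$ and $\Lambda^3$ in the denominators of the second and third terms, and with the repeated use of $\sqrt{(1+E)(\lambda+E)} \leq \Lambda$ to fold the leftover $1/\Lambda$ and $1/\Lambda^4$ factors into the common $1/\sqrt{(1+E)(\lambda+E)}$ scaling.
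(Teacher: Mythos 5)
Your proposal is correct and follows essentially the same route as the paper: bound the child node's effective initial distance by $\|\bb_0 - \bb^*_{h+1,2i}\|_2^2 \leq 2\|\bb_0-\bb^*_{h,i}\|_2^2 + 2\|\bb^*_{h,i}-\bb^*_{h+1,2i}\|_2^2 \leq 4\nu_1\rho^h$ via Theorem~\ref{thm:claim1}/Corollary~\ref{cor:alphaGeometricDecay}, then invoke Theorem~\ref{thm:SGD_improved} with failure probability $\frac{t+1}{\Lambda^8}\leq \frac{1}{\Lambda^7}$. Your algebraic verification of the explicit schedule~(\ref{eq:budgetSchedule}) — splitting $\tilde{C}$ and $\sqrt{\hat{C}(0)}$, using the noise-floor bound $\GG \leq \frac{E\mu d_0^2}{8}$ and $\sqrt{(1+E)(\lambda+E)}\leq\Lambda$ to recover the five summands — is detail the paper simply asserts, and it checks out.
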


We will proceed in bounding the final iterate of SGD as follows:
\begin{itemize}
    \item One main difficulty in analyzing the final iterate of SGD in our setting
    is our relaxed assumption on the norm of the gradient -- namely, we assume that the
    norm of the gradient is bounded \textit{only} at the optimal $\bb^*.$ We thus will
    rely on Lemmas \ref{lem:samplePathLem2} and \ref{lem:centeredLem2} to proceed with 
    our analysis.
    \item In Lemmas \ref{lem:origRecursion} and \ref{lem:rolledOutRecursion}, we will derive a bound on the distance from
    the optimal solution which takes a form similar to that in the expected value analysis
    of \cite{sgdHogwild, bottou2018optimization}.
    \item Afterwards, we will define a sequence of random variables $r_t$ and $V_t$, in order to prove a
    high-probability result for $d_t^2 > r_t$ in Lemma \ref{lem:highProb}.
    \item Given this high probability result, it is then sufficient to obtain an almost sure
    bound on $r_t.$ We will proceed with bounding this quantity in several stages:
    \begin{itemize}
        \item First, we obtain a useful bound on $r_t$ in Lemma \ref{lem:rtBound} which normalizes the global diameter term $D$ by a term which is polynomial in our tunable parameter $\Lambda$. Note that this step is crucial to our analysis, as $D$ can potentially grow \textit{polynomially} in the number of SGD steps $T$ under our assumptions, as we note in Remark \ref{remark:unifDiamBound}.
\item Given this bound, we are left only to bound the $V_t$ term. We first obtain a crude bound on this term in Lemma \ref{lem:VtBound}, which would allow us to achieve a $\tilde{O}(1/\sqrt{t})$ converge guarantee. We then refine this bound in Corollary \ref{cor:VtBoundTighter}, which allows us to give a convergence guarantee of $\tilde{O}(K(\gamma)/t^{\gamma})$ for any $\gamma < 1$ and for some constant $K(\gamma)$. We discuss how this refinement affects constant and $\log \Lambda$ factors in our convergence guarantee in Remark \ref{remark:C-scaling}.
        \item Finally, we collect our results to obtain our final bound on $r_{t+1}$ in
        Corollary \ref{cor:rtCollected}. 
    \end{itemize}
    \item With a bound on $r_{t+1}$ and a high probability guarantee of $d_{t+1}$ exceeding 
    $r_{t+1}$, we can finally obtain our high probability guarantee on error the final SGD
    iterate in Theorem \ref{thm:SGD_improved}.
\end{itemize}

Since quite a lot of notation will be introduced in this section, we provide a summary
of parameters used here:
\begin{center}
\begin{tabular}{@{} |p{2cm}|p{3.5cm}|p{7cm}| @{}}
\hline
Parameter & Value & Description\\
\hline
$g(\bb_t; z_t),~ g_t$ & $\nabla f(\bb_t; z_t)$ & Interchangeable notation for stochastic gradient\\
$\kappa$ & $\frac{\beta}{\mu}$ & The condition number\\
$d_{t}$ & $\| \bb_t - \bb^* \|_2^2$ & The distance of the $t$th iterate of SGD\\
$\eta_t$ & $\frac{2}{\mu(t + E)}$ & The step size of SGD\\
$E$ & $2048 \kappa^2 \log \Lambda^4$ & \\
$T$ & & The number of SGD iterations\\
$\Lambda$ & $\geq T + 1$ & Tunable parameter to control high probability bound\\
$M_t$ & $\<\nabla F(\bb_t) - g_t, \bb_t - \bb^*\>$ & \\
$\varrho_t$ & $2 d_t \sqrt{8 \beta^2 d_t^2 + 2 \GG}$ & Upper bound on the martingale difference sequence\\
$D$ & $\sup_{t=0,\ldots,T} d_t$ & The uniform diameter bound (discussed in Remark \ref{remark:unifDiamBound})\\
\hline
\end{tabular}
\end{center}

We begin by noting that crucial to our analysis is deriving bounds on our stochastic gradient, since we assume the norm of the
stochastic gradient is bounded \textit{only} at the origin. The following results are the versions of Lemma 2 from \cite{sgdHogwild}
restated as almost sure bounds.
\begin{lemma}[Sample path version of Lemma 2 from \cite{sgdHogwild}]\label{lem:samplePathLem2}
Under Assumptions \ref{assump1} and \ref{assumGrad}, the following bound on the norm of the stochastic gradient holds almost surely.
\begin{align}
    \| g(\bb_t, Z_t) \|^2 \leq 4 \beta \kappa (F(\bb_{t}) - F(\bb^*)) + 2 \GG
\end{align}
\end{lemma}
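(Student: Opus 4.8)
The plan is to establish a pointwise (sample-path) inequality connecting the squared norm of the stochastic gradient $\|\nabla f(\bb_t; z_t)\|^2$ to the suboptimality of the averaged loss $F(\bb_t) - F(\bb^*)$, using only smoothness and convexity of the per-sample loss $f(\cdot; z)$ together with the single-point gradient bound from Assumption \ref{assumGrad}. First I would exploit $\beta$-smoothness of $f(\cdot; z_t)$: for a $\beta$-smooth convex function, one has the standard ``co-coercivity''-type inequality $\|\nabla f(\bb_t; z_t) - \nabla f(\bb^*; z_t)\|^2 \leq 2\beta\bigl(f(\bb_t; z_t) - f(\bb^*; z_t) - \langle \nabla f(\bb^*; z_t), \bb_t - \bb^*\rangle\bigr)$. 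Then by the elementary bound $\|a+b\|^2 \leq 2\|a\|^2 + 2\|b\|^2$, write $\|\nabla f(\bb_t; z_t)\|^2 \leq 2\|\nabla f(\bb_t; z_t) - \nabla f(\bb^*; z_t)\|^2 + 2\|\nabla f(\bb^*; z_t)\|^2$ and bound the second term by $2\GG$ via Assumption \ref{assumGrad} (which holds along every sample path).

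Next I would deal with the first term, which is controlled by $4\beta\bigl(f(\bb_t; z_t) - f(\bb^*; z_t) - \langle \nabla f(\bb^*; z_t), \bb_t - \bb^*\rangle\bigr)$. The issue is that this is a \emph{per-sample} suboptimality, not the averaged one appearing in the claim. The way \cite{sgdHogwild} handles this is to observe that, since $\bb^*$ minimizes $F$, we have $\nabla F(\bb^*) = 0$, and by convexity of each $f(\cdot; z)$ we can only relate the per-sample quantity to the averaged one \emph{in expectation}, not pointwise. To get a sample-path statement, the trick is presumably to drop the inner-product term using convexity of $f(\cdot; z_t)$ — namely $f(\bb^*; z_t) \geq f(\bb_t; z_t) + \langle \nabla f(\bb_t; z_t), \bb^* - \bb_t\rangle$ is the wrong direction; instead use that $f(\cdot;z_t)$ being convex gives $-\langle \nabla f(\bb^*;z_t), \bb_t - \bb^*\rangle \leq f(\bb^*;z_t) - f(\bb_t;z_t)$, which when substituted yields $f(\bb_t;z_t) - f(\bb^*;z_t) - \langle\nabla f(\bb^*;z_t),\bb_t-\bb^*\rangle \leq 2(f(\bb_t;z_t)-f(\bb^*;z_t))$ — still per-sample. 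I suspect the actual route, matching the constant $4\beta\kappa$, is: bound the smoothness term by $2\beta\langle \nabla f(\bb_t;z_t), \bb_t - \bb^*\rangle$ directly (another standard smooth-convex inequality, $\|\nabla g(x) - \nabla g(y)\|^2 \leq 2\beta\langle \nabla g(x) - \nabla g(y), x-y\rangle$), then handle the resulting inner product — but this reintroduces $\nabla f$ at $\bb_t$. The clean reconciliation with the claimed bound, which involves $F$ and $\kappa = \beta/\mu$, must come from $\mu$-strong convexity of $F$: since $F(\bb_t) - F(\bb^*) \geq \frac{\mu}{2}d_t^2$, i.e. $d_t^2 \leq \frac{2}{\mu}(F(\bb_t)-F(\bb^*))$, any bound of the form $\|\nabla f(\bb_t;z_t)\|^2 \leq 2\beta^2 d_t^2 + 2\GG$ (which follows immediately from $\beta$-Lipschitz gradient plus Assumption \ref{assumGrad}: $\|\nabla f(\bb_t;z_t)\| \leq \|\nabla f(\bb_t;z_t) - \nabla f(\bb^*;z_t)\| + \|\nabla f(\bb^*;z_t)\| \leq \beta d_t + \sqrt{\GG}$) converts to $\|\nabla f(\bb_t;z_t)\|^2 \leq \frac{4\beta^2}{\mu}(F(\bb_t)-F(\bb^*)) + 2\GG = 4\beta\kappa(F(\bb_t)-F(\bb^*)) + 2\GG$, which is exactly the claim.

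So the cleanest proof is short: apply the triangle inequality and $(a+b)^2 \leq 2a^2+2b^2$ to get $\|\nabla f(\bb_t;z_t)\|^2 \leq 2\beta^2 d_t^2 + 2\GG$ using $\beta$-smoothness of $f(\cdot;z_t)$ (hence $\beta$-Lipschitz gradient) and Assumption \ref{assumGrad}; then substitute $d_t^2 = \|\bb_t - \bb^*\|_2^2 \leq \frac{2}{\mu}(F(\bb_t) - F(\bb^*))$, which holds by $\mu$-strong convexity of $F$ together with $\nabla F(\bb^*) = 0$. The main obstacle — and the reason this deserves a lemma rather than a one-liner — is simply being careful that every inequality used is a genuine \emph{almost-sure / pointwise} statement: smoothness of the \emph{per-sample} $f$ is what makes $\|\nabla f(\bb_t;z_t) - \nabla f(\bb^*;z_t)\| \leq \beta d_t$ hold sample-by-sample, while strong convexity is only needed for the deterministic function $F$, so no expectation is ever taken. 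I would also double-check the numerical constant ($2\beta^2 \cdot \frac{2}{\mu} = \frac{4\beta^2}{\mu} = 4\beta\kappa$) lands on the stated $4\beta\kappa$ and $2\GG$ rather than, say, $4\GG$, which it does with the $2(a^2+b^2)$ split applied once.
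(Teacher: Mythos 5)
Your final argument is correct and is essentially the paper's proof: the paper starts from the inequality $\|a-b\|^2 \geq \tfrac{1}{2}\|a\|^2 - \|b\|^2$, which is just a rearrangement of your split $\|a\|^2 \leq 2\|a-b\|^2 + 2\|b\|^2$, then applies $\beta$-smoothness of the per-sample $f$ to get $\|\nabla f(\bb_t;z)-\nabla f(\bb^*;z)\|^2 \leq \beta^2 d_t^2$ and $\mu$-strong convexity of $F$ (with $\nabla F(\bb^*)=0$) to convert $d_t^2 \leq \tfrac{2}{\mu}(F(\bb_t)-F(\bb^*))$, exactly as in your closing paragraph, yielding $4\beta\kappa(F(\bb_t)-F(\bb^*)) + 2\GG$. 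The exploratory detours in your middle paragraph (co-coercivity, per-sample suboptimality) are unnecessary but harmless, since your clean route does not rely on them.
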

\begin{proof}
As in \cite{sgdHogwild}, we note that since
\begin{align}
    \| a - b \|^2 \geq \frac{1}{2} \| a \|^2 - \| b \|^2,
\end{align}
we may obtain the following bound:
\begin{align*}
    \frac{1}{2} \| \nabla f(\bb_t; z) \|^2 - \| \nabla f(\bb^*; z) \|^2 &\leq \| \nabla f(\bb_t; z) - \nabla f(\bb^*; z) \|^2\\
    &\leq \beta^2 \| \bb_t - \bb^* \|^2 && \text{by $\beta$-smoothness of $f$}\\
    &\leq \frac{2 \beta^2}{\mu} (F(\bb_t) - F(\bb^*)) && \text{by $\mu$-strong convexity of $F$}
\end{align*}
Rearranging, we have that
\begin{align}
    \| \nabla f(\bb_t; z) \|^2 \leq 4 \beta \kappa (F(\bb_t) - F(\bb^*)) + 2 \GG,
\end{align}
as desired.
\end{proof}

\begin{lemma}[Centered sample path version of Lemma 2 from \cite{sgdHogwild}]\label{lem:centeredLem2}
Under Assumptions \ref{assump1} and \ref{assumGrad}, for any random realization of $z$,
the following bound holds almost surely:
\begin{align}
    \| \nabla f(\bb_t; z) - \nabla F(\bb_t) \|^2 \leq 8 \beta^2 \|\bb_t - \bb^* \|^2 + 2 \GG
\end{align}
\end{lemma}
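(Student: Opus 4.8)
The plan is to decompose the centered stochastic gradient $\nabla f(\bb_t; z) - \nabla F(\bb_t)$ by inserting the gradients evaluated at $\bb^*$, so that two of the resulting pieces are controlled by $\beta$-smoothness while the leftover piece is exactly the stochastic gradient at $\bb^*$, which Assumption~\ref{assumGrad} bounds by $\GG$. Two preliminary observations make this work: since $F(\cdot)=\EE_z[f(\cdot;z)]$ is an average (expectation) of $\beta$-smooth functions, it is itself $\beta$-smooth by linearity of expectation and the triangle inequality; and since $\bb^*$ is the unconstrained minimizer of $F$, the first-order optimality condition gives $\nabla F(\bb^*)=\0$. With these in hand I would write
\begin{align*}
\nabla f(\bb_t; z) - \nabla F(\bb_t)
&= \big[\, \nabla f(\bb_t; z) - \nabla f(\bb^*; z) - \big(\nabla F(\bb_t) - \nabla F(\bb^*)\big) \,\big] \\
&\quad + \big[\, \nabla f(\bb^*; z) - \nabla F(\bb^*) \,\big].
\end{align*}

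Next I would apply Young's inequality $\|a+b\|^2\le 2\|a\|^2+2\|b\|^2$ to this split. The second bracket is just $\nabla f(\bb^*;z)$ (using $\nabla F(\bb^*)=\0$), so it contributes $2\|\nabla f(\bb^*;z)\|^2\le 2\GG$ by Assumption~\ref{assumGrad}. For the first bracket, apply Young's inequality once more to separate $\nabla f(\bb_t;z)-\nabla f(\bb^*;z)$ from $\nabla F(\bb_t)-\nabla F(\bb^*)$; the former is at most $\beta\|\bb_t-\bb^*\|$ by $\beta$-smoothness of $f(\cdot;z)$ (Assumption~\ref{assump1}), and the latter is at most $\beta\|\bb_t-\bb^*\|$ by $\beta$-smoothness of $F$. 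This gives $2\big(2\beta^2\|\bb_t-\bb^*\|^2+2\beta^2\|\bb_t-\bb^*\|^2\big)=8\beta^2\|\bb_t-\bb^*\|^2$ for the first bracket after the outer factor of $2$. Summing the two contributions yields exactly $8\beta^2\|\bb_t-\bb^*\|^2+2\GG$, and since every inequality used is pathwise, the bound holds almost surely (indeed for every realization of $z$).

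There is no substantive obstacle here; the only things worth a word of care are (i) that it is the optimality condition $\nabla F(\bb^*)=\0$ which lets us both ``center'' the per-sample gradient at $\bb^*$ and keep the coefficient of $\GG$ equal to $2$ rather than $4$ (if $\bb^*$ were a constrained minimizer one would have to carry an extra normal-cone/projection term), and (ii) that $\beta$-smoothness transfers from each $f(\cdot;z)$ to the average $F$, which is immediate. Everything else is two applications of the elementary inequality $\|a+b\|^2\le 2\|a\|^2+2\|b\|^2$.
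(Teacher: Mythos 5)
Your proof is correct and is essentially the paper's own argument: the paper's starting inequality $\|a-b\|^2 \geq \tfrac{1}{2}\|a\|^2 - \|b\|^2$ is just your Young's-inequality split of the centered gradient into the term $\bigl(\nabla f(\bb_t;z)-\nabla f(\bb^*;z)\bigr)-\bigl(\nabla F(\bb_t)-\nabla F(\bb^*)\bigr)$ plus $\nabla f(\bb^*;z)$, followed by the same second application of Young's inequality, smoothness of $f(\cdot;z)$ (with Jensen giving smoothness of $F$), $\nabla F(\bb^*)=\0$, and Assumption~\ref{assumGrad}. No gaps; the constants $8\beta^2$ and $2\GG$ come out exactly as in the paper.
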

\begin{proof}
The proof proceeds similarly to Lemma \ref{lem:samplePathLem2}, replacing the stochastic gradient with the mean-centered version to obtain:
\begin{align*}
    &\frac{1}{2} \| \nabla f(\bb_t; z) - \EE[\nabla f(\bb_t; z)]\|^2
    - \| \nabla f(\bb^*; z) - \EE[\nabla f(\bb^*; z)]\|^2\\
    &\leq \| \nabla f(\bb_t; z) - \nabla f(\bb^*; z) - \EE[\nabla f(\bb_t; z)] + \EE[\nabla f(\bb^*; z)] \|^2\\
    &\leq 2(\| \nabla f(\bb_t; z) - \nabla f(\bb^*; z) \|^2 + \| \EE[\nabla f(\bb_t; z)] - \EE[\nabla f(\bb^*; z)] \|^2)\\
    &\leq 2(\| \nabla f(\bb_t; z) - \nabla f(\bb^*; z) \|^2 + \EE[\| \nabla f(\bb_t; z) - \nabla f(\bb^*; z)] \|^2])\\
    &\leq 4\beta^2\| \bb_t - \bb^*\|^2
\end{align*}
Now, rearranging terms, and recalling that $\EE[\nabla f(\bb^*; z)] = \nabla F(\bb^*) = 0$, we have
\begin{align*}
    \| \nabla f(\bb_t; z) - \nabla F(\bb_t; z) \|^2 &= \| \nabla f(\bb_t; z) - \EE[\nabla f(\bb_t; z)] \|^2\\
    &\leq 8 \beta^2 \|\bb_t - \bb^* \|^2 + 2 \|\nabla f(\bb^*; z) \|^2\\
    &\leq 8 \beta^2 \|\bb_t - \bb^* \|^2 + 2 \GG
\end{align*}
as desired. 
\end{proof}

Given these bounds on the norm of the stochastic gradient, we are now prepared to 
begin deriving high probability bounds on the optimization error of the final iterate.

\begin{lemma}\label{lem:origRecursion}
Suppose $F$ and $f$ satisfy Assumptions \ref{assump1} and \ref{assumGrad}. Consider the stochastic gradient iteration $\bb_{t+1} = \bb_t - \eta_t \nabla f(\bb_t; z_t)$, where $z$ is sampled randomly from a distribution $p(z).$ Let $\bb^* = \arg\min_{\bb} F(\bb).$ Let $M_t = \< \nabla F(\bb_t) - g(\bb_t, Z_t), \bb_t - \bb^* \>,$ where $g(\bb, z) = \nabla f(\bb, z)$ . Additionally, let us adopt the notation $d_t = \| \bb_t - \bb^* \|_2$. Then the iterates satisfy the following inequality:
\begin{align}
    d_{t+1}^2 \leq (1 - \mu \eta_t) d_t^2 + 2 \GG \eta_t^2 + 2 \eta_t M_t
\end{align}
as long as $0 < \eta_t \leq \frac{1}{2 \beta \kappa}$, where $\kappa = \frac{\beta}{\mu}.$
\end{lemma}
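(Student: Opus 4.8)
The plan is to expand the squared distance $d_{t+1}^2 = \|\bb_{t+1} - \bb^*\|_2^2$ using the SGD update $\bb_{t+1} = \bb_t - \eta_t g(\bb_t; z_t)$, which gives
\begin{align*}
d_{t+1}^2 = d_t^2 - 2\eta_t \langle g(\bb_t; z_t), \bb_t - \bb^*\rangle + \eta_t^2 \|g(\bb_t; z_t)\|_2^2.
\end{align*}
First I would rewrite the inner-product term by adding and subtracting $\nabla F(\bb_t)$: writing $\langle g(\bb_t; z_t), \bb_t - \bb^*\rangle = \langle \nabla F(\bb_t), \bb_t - \bb^*\rangle - M_t$, where $M_t = \langle \nabla F(\bb_t) - g(\bb_t; z_t), \bb_t - \bb^*\rangle$ is exactly the martingale-difference quantity defined in the statement. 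Then the $+2\eta_t M_t$ term appears immediately, and we are left to control $-2\eta_t\langle \nabla F(\bb_t), \bb_t - \bb^*\rangle + \eta_t^2\|g(\bb_t; z_t)\|_2^2$ deterministically.

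Next I would bound the deterministic part using the convexity/strong-convexity structure together with Lemma \ref{lem:samplePathLem2}. Since $F$ is $\mu$-strongly convex and $\nabla F(\bb^*) = 0$, we have $\langle \nabla F(\bb_t), \bb_t - \bb^*\rangle \geq F(\bb_t) - F(\bb^*) + \frac{\mu}{2} d_t^2$. For the gradient-norm term, Lemma \ref{lem:samplePathLem2} gives $\|g(\bb_t; z_t)\|_2^2 \leq 4\beta\kappa(F(\bb_t) - F(\bb^*)) + 2\GG$. Substituting both in, the combined contribution of the function-gap terms $F(\bb_t) - F(\bb^*)$ is $(-2\eta_t + 4\beta\kappa \eta_t^2)(F(\bb_t) - F(\bb^*))$, which is nonpositive precisely when $\eta_t \leq \frac{1}{2\beta\kappa}$ (using $F(\bb_t) - F(\bb^*) \geq 0$). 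Dropping that nonpositive term, we are left with $d_{t+1}^2 \leq (1 - \mu\eta_t)d_t^2 + 2\GG\eta_t^2 + 2\eta_t M_t$, which is the claim.

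The one subtlety worth being careful about — and the main (though still minor) obstacle — is matching the exact coefficient $\frac{\mu}{2}$ from strong convexity with the $-2\eta_t$ prefactor so that the $d_t^2$ coefficient comes out as $(1 - \mu\eta_t)$ rather than $(1 - 2\mu\eta_t)$ or similar: the $-2\eta_t \cdot \frac{\mu}{2} d_t^2 = -\mu\eta_t d_t^2$ bookkeeping has to line up, and one must verify that the step-size condition $\eta_t \leq \frac{1}{2\beta\kappa}$ is exactly what is needed to kill the $F(\bb_t) - F(\bb^*)$ term (here $2\beta\kappa = 2\beta^2/\mu$). An alternative, and perhaps cleaner, route for the deterministic part is to invoke a standard "one-step SGD descent" inequality (e.g. the first display in the proof of Theorem 2 of \cite{sgdHogwild}) directly, since the statement explicitly advertises itself as recovering that recursion; but carrying out the elementary expansion as above keeps the argument self-contained. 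No concentration or filtration argument is needed at this stage — $M_t$ is simply carried along untouched — so this lemma is purely a deterministic per-step bound, and the real work is deferred to the later lemmas that unroll the recursion and control $\sum_t \eta_t M_t$.
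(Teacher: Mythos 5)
Your proposal is correct and follows essentially the same route as the paper's proof: expand the square, split the inner product to isolate $M_t$, bound $\|g(\bb_t;z_t)\|_2^2$ via Lemma \ref{lem:samplePathLem2}, apply $\mu$-strong convexity of $F$ to extract $F(\bb_t)-F(\bb^*)+\frac{\mu}{2}d_t^2$, and drop the nonpositive $(-2\eta_t+4\beta\kappa\eta_t^2)(F(\bb_t)-F(\bb^*))$ term under $\eta_t\leq\frac{1}{2\beta\kappa}$. Your coefficient bookkeeping (including the sign convention for $M_t$ and the $(1-\mu\eta_t)$ factor) matches the paper exactly.
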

\begin{proof}
The proof crucially relies on techniques employed in \cite{sgdHogwild}, and in particular, on Lemma \ref{lem:samplePathLem2},
We now apply this result to bound $d_{t+1}:$
\begin{align*}
    \| \bb_{t+1} - \bb^* \|^2 &= \| \bb_t - \eta_t g(\bb_t; z_t) - \bb^* \|^2 && \text{by definition of SGD}\\
    &= \| \bb_t - \bb^* \|^2 + \eta_t^2 \| g(\bb_t; z_t) \|^2\\
    &~~~~ - 2 \eta_t \< g(\bb_t; z_t), \bb_t - \bb^* \>\\
    &\leq \| \bb_t - \bb^* \|^2 + 2 \eta_t^2 (\GG + 2 \beta \kappa (F(\bb_t) - F(\bb^*)))\\
    &~~~~ - 2 \eta_t (\<\nabla F(\bb_t), \bb_t - \bb^* \>\\
    &~~~~~~~~~~~~ + \< g_t - \nabla F(\bb_t), \bb_t - \bb^* \>) && \text{by Lemma \ref{lem:samplePathLem2}}\\
    &\leq \| \bb_t - \bb^* \|^2 + 2 \eta_t^2 (\GG + 2 \beta \kappa (F(\bb_t) - F(\bb^*)))\\
    &~~~~ - 2 \eta_t (F(\bb_t) - F(\bb^*) + \frac{\mu}{2}\| \bb_t - \bb^* \|^2\\
    &~~~~~~~~~~~~ + \< g_t - \nabla F(\bb_t), \bb_t - \bb^* \>) && \text{by $\mu$-s.c. of $F$}\\
    &= (1 - \mu \eta_t)\| \bb_t - \bb^* \|^2\\
    &~~~~{\color{blue}-} 2 \eta_t (1 - 2 \beta\kappa \eta_t)(F(\bb_t) - F(\bb^*))\\
    &~~~~ - 2 \eta_t \< g_t - \nabla F(\bb_t), \bb_t - \bb^* \> + 2 \GG \eta_t^2\\
    &\leq (1 - \mu \eta_t) d_t^2 + 2 \GG \eta_t^2 + 2 \eta_t M_t && \text{assuming $\eta_t \leq \frac{1}{2\beta\kappa}$}
\end{align*}
which is the desired result.
\end{proof}

Now given this recursion, we may derive a bound on $d_{t+1}$ in a similar form as expected value results from Theorem 2 from \cite{sgdHogwild} and Theorem 4.7 in \cite{bottou2018optimization}. Namely,
\begin{lemma}\label{lem:rolledOutRecursion}
Using the same assumptions and notation as in Lemma \ref{lem:origRecursion}, by choosing
$\eta_t = \frac{2}{\mu(t + E)}$, where $E \geq 4 \kappa^2$
we have the following bound on the distance from the optimum:
\begin{align*}
    d_{t}^2 &\leq \frac{G(d_0^2, \GG)}{t + E} + \sum_{i=0}^{t-1} c(i, t-1) M_i\\
    &\leq \frac{G(d_0^2, \GG)}{t + E} + \frac{4}{\mu(t + E)} \sum_{i=0}^t M_i
\end{align*}
where
\begin{align*}
    G(d_0^2, \GG) = \max\{E d_0^2, \frac{8\GG}{\mu^2}\}, \text{ and } c(i,t) = 2 \eta_i \prod_{j = i+1}^t (1 - \mu \eta_j)
\end{align*}
\end{lemma}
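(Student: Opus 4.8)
The plan is to unroll the recursion from Lemma~\ref{lem:origRecursion}, namely $d_{t+1}^2 \leq (1-\mu\eta_t)d_t^2 + 2\GG\eta_t^2 + 2\eta_t M_t$, with the explicit step size $\eta_t = \frac{2}{\mu(t+E)}$. First I would verify that this step size is admissible for Lemma~\ref{lem:origRecursion}, i.e.\ $\eta_t \leq \frac{1}{2\beta\kappa} = \frac{\mu}{2\beta^2}$; since $\eta_0 = \frac{2}{\mu E}$ and $E \geq 4\kappa^2$, we get $\eta_0 = \frac{2}{\mu\cdot 4\kappa^2} = \frac{1}{2\mu\kappa^2} = \frac{\mu}{2\beta^2}\cdot\frac{1}{1}$... checking: $\frac{1}{2\mu\kappa^2} = \frac{1}{2\beta\kappa}$, so equality holds at $t=0$ and the bound is strict for $t\geq 1$. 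Good. Then, iterating the recursion from $t$ down to $0$ gives
\begin{align*}
d_t^2 \leq \Big(\prod_{j=0}^{t-1}(1-\mu\eta_j)\Big) d_0^2 + \sum_{i=0}^{t-1}\Big(\prod_{j=i+1}^{t-1}(1-\mu\eta_j)\Big)\big(2\GG\eta_i^2 + 2\eta_i M_i\big),
\end{align*}
which already isolates the martingale-difference contribution as $\sum_{i=0}^{t-1} c(i,t-1) M_i$ with $c(i,t-1) = 2\eta_i\prod_{j=i+1}^{t-1}(1-\mu\eta_j)$, matching the statement.

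The main work is the deterministic (non-martingale) part: bounding $\big(\prod_{j=0}^{t-1}(1-\mu\eta_j)\big) d_0^2 + 2\GG\sum_{i=0}^{t-1}\eta_i^2\prod_{j=i+1}^{t-1}(1-\mu\eta_j)$ by $\frac{G(d_0^2,\GG)}{t+E}$ where $G(d_0^2,\GG) = \max\{Ed_0^2, \frac{8\GG}{\mu^2}\}$. The key telescoping identity is that with $\mu\eta_j = \frac{2}{j+E}$, we have $1-\mu\eta_j = \frac{j+E-2}{j+E}$, so $\prod_{j=i+1}^{t-1}(1-\mu\eta_j) = \prod_{j=i+1}^{t-1}\frac{j+E-2}{j+E} = \frac{(i+E-1)(i+E)}{(t+E-2)(t+E-1)}$ by telescoping. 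In particular $\prod_{j=0}^{t-1}(1-\mu\eta_j) = \frac{(E-1)E}{(t+E-2)(t+E-1)} \leq \frac{E}{t+E}$ (up to the routine check that $\frac{(E-1)E}{(t+E-2)(t+E-1)} \leq \frac{E}{t+E}$, which holds for $E\geq 1$). For the noise sum, substituting $\eta_i^2 = \frac{4}{\mu^2(i+E)^2}$ gives $2\GG\sum_{i=0}^{t-1}\frac{4}{\mu^2(i+E)^2}\cdot\frac{(i+E-1)(i+E)}{(t+E-2)(t+E-1)} = \frac{8\GG}{\mu^2(t+E-2)(t+E-1)}\sum_{i=0}^{t-1}\frac{i+E-1}{i+E} \leq \frac{8\GG}{\mu^2(t+E-2)(t+E-1)}\cdot t \leq \frac{8\GG}{\mu^2(t+E)}$, where the last step uses $\frac{t}{(t+E-2)(t+E-1)} \leq \frac{1}{t+E}$ for $E$ large enough (e.g.\ $E\geq 2$). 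Adding the two pieces and using $a+b \leq 2\max\{a,b\}$ — or more carefully absorbing the factor into the definition of $G$ — yields $\frac{Ed_0^2}{t+E} + \frac{8\GG}{\mu^2(t+E)} \leq \frac{2\max\{Ed_0^2, 8\GG/\mu^2\}}{t+E}$; I expect the paper absorbs constants so that the clean form $\frac{G(d_0^2,\GG)}{t+E}$ appears, possibly by a slightly sharper telescoping bound that avoids the factor of $2$.

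Finally, for the second displayed inequality I would bound $c(i,t-1) = 2\eta_i\prod_{j=i+1}^{t-1}(1-\mu\eta_j) = \frac{4}{\mu(i+E)}\cdot\frac{(i+E-1)(i+E)}{(t+E-2)(t+E-1)} = \frac{4(i+E-1)}{\mu(t+E-2)(t+E-1)} \leq \frac{4}{\mu(t+E)}$ uniformly in $i$ (again a routine check comparing $\frac{i+E-1}{(t+E-2)(t+E-1)}$ with $\frac{1}{t+E}$ for $i \leq t-1$), which lets us replace $\sum_i c(i,t-1)M_i$ by $\frac{4}{\mu(t+E)}\sum_i M_i$ — though note this last substitution is not an inequality in general since the $M_i$ need not be nonnegative, so the intended reading is that this bound is used inside subsequent high-probability arguments where one controls $\big|\sum_i c(i,t-1)M_i\big|$ via the uniform coefficient bound. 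The main obstacle is purely bookkeeping: getting the telescoping products and the resulting rational-function inequalities to collapse cleanly into the stated constants, and being careful about whether $E \geq 4\kappa^2$ (as assumed here) versus the larger $E$ used in Theorem~\ref{thm:SGD_improved} is the relevant hypothesis at this stage.
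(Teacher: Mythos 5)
Your overall plan (unroll the recursion from Lemma~\ref{lem:origRecursion} with the explicit step size and telescope the products) is a legitimate alternative to what the paper does, and your step-size check and your remark about the second displayed inequality (the coefficients multiply possibly negative $M_i$, so it is really the uniform bound $c(i,t)\leq \frac{4}{\mu(t+E)}$ that gets used later, inside the Azuma--Hoeffding step) are both correct and match how the bound is actually deployed. However, there are two concrete problems with the proposal as written. First, an off-by-one in the telescoping: with $1-\mu\eta_j=\frac{j+E-2}{j+E}$ one gets $\prod_{j=0}^{t-1}(1-\mu\eta_j)=\frac{(E-2)(E-1)}{(t+E-2)(t+E-1)}$, not $\frac{(E-1)E}{(t+E-2)(t+E-1)}$; and the inequality you assert for your (larger) expression, $\frac{(E-1)E}{(t+E-2)(t+E-1)}\leq\frac{E}{t+E}$, is actually false (e.g.\ $t=1$, $E=10$ gives $1>\frac{10}{11}$). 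With the corrected product the bound $\leq\frac{E}{t+E}$ does hold for $E\geq 2$, so this is fixable, but the step as stated does not go through. Second, and more substantively, your route only yields $d_t^2\lesssim \frac{Ed_0^2+8\GG/\mu^2}{t+E}+\text{(martingale terms)}$, i.e.\ the \emph{sum} (equivalently $2\max$) in the numerator, whereas the lemma claims the \emph{max}, $G(d_0^2,\GG)=\max\{Ed_0^2,\frac{8\GG}{\mu^2}\}$; you flag this and hope for a "sharper telescoping," but in fact the crude bound $\frac{i+E-1}{i+E}\leq 1$ on the noise sum cannot recover the exact constant.

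The paper closes this gap by arguing inductively rather than by unrolling: assuming $d_l^2\leq\frac{G}{l+E}+\sum_{i=0}^{l-1}c(i,l-1)M_i$, one step of the recursion gives $d_{l+1}^2\leq G\frac{l+E-2}{(l+E)^2}+\frac{8\GG}{\mu^2(l+E)^2}+\sum_{i=0}^{l}c(i,l)M_i$, and the key observation is that $-\frac{G}{(l+E)^2}+\frac{8\GG}{\mu^2(l+E)^2}\leq 0$ precisely because $G\geq\frac{8\GG}{\mu^2}$, after which $G\frac{l+E-1}{(l+E)^2}=G\frac{(l+E)^2-1}{(l+E)^2}\cdot\frac{1}{l+E+1}\leq\frac{G}{l+E+1}$. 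In other words, the noise injected at each step is absorbed into the slack of the decay factor using the definition of $G$ as a max, which is exactly what a one-shot unrolling with termwise bounds cannot do. If you rewrite your argument in this inductive form (the martingale bookkeeping $c(i,l)=(1-\mu\eta_l)c(i,l-1)$, $c(l,l)=2\eta_l$ is identical to yours), you recover the statement with the stated constant; the final coefficient bound $c(i,t)=\frac{4(i+E-1)}{\mu(t+E-1)(t+E)}\leq\frac{4}{\mu(t+E)}$ then gives the second display, understood in the sense you already identified.
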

\begin{proof}
We first note that our choice of $\eta_t$ does indeed satisfy $\eta_t \leq \frac{1}{2 \beta \kappa},$ so we may apply Lemma \ref{lem:origRecursion}.

As in the aforementioned theorems, our proof will proceed inductively.

Note that the base case of $t=0$ holds trivially by construction. Now let us suppose the bound holds for some $l < t.$ Then, using the
recursion derived in Lemma \ref{lem:origRecursion}, we have that
\begin{align*}
    d_{l+1}^2 &\leq (1 - \mu \eta_l) d_l^2 + 2 \GG \eta_t^2 + 2 \eta_t M_t\\
    &\leq (1 - \mu \eta_l) \left(\frac{G(d_0^2, \GG)}{l + E} + \sum_{i=0}^{l-1} c(i,l-1)M_i \right) + 2 \GG \eta_l^2 + 2 \eta_l M_l\\
    &= (1 - \mu \eta_l) \frac{G(d_0^2, \GG)}{l + E} + 2 \GG \eta_l^2 + \sum_{i=0}^l c(i,l)M_i\\
    &= G(d_0^2, \GG)\frac{l + E - 2}{(l + E)^2} + \frac{8 \GG}{\mu^2 (l + E)^2} + \sum_{i=0}^l c(i,l)M_i\\
    &= G(d_0^2, \GG)\frac{l + E - 1}{(l + E)^2} - \frac{G(d_0^2, \GG)}{(l + E)^2} + \frac{8 \GG}{\mu^2 (l + E)^2} + \sum_{i=0}^l c(i,l)M_i\\
\end{align*}
Now note that, by definition of $G(d_0^2, \GG)$, we have that
\begin{align}
    - \frac{G(d_0^2, \GG)}{(l + E)^2} + \frac{8 \GG}{\mu^2(l + E)^2} \leq 0
\end{align}
Therefore, we find that
\begin{align*}
d_{l+1}^2 &\leq G(d_0^2, \GG) \frac{l + E - 1}{(l + E)^2} + \sum_{i=0}^l c(i,l)M_i\\
&= G(d_0^2, \GG) \frac{(l + E)^2 - 1}{(l + E)^2} \frac{1}{t + E + 1} + \sum_{i=0}^l c(i,l)M_i\\
&\leq \frac{G(d_0^2, \GG)}{(l + 1) + E} + \sum_{i=0}^l c(i,l)M_i\\
\end{align*}
Thus, the result holds for all $t.$

We now note that $c(i,t) \leq \frac{4}{\mu(t + E)}.$ Observe that
\begin{align*}
    c(i,t) &= 2 \eta_i \prod_{j = i+1}^t (1 - \mu\eta_j)\\
    &= \frac{4}{\mu(i + E)} \prod_{j = i+1}^t \frac{j + E - 2}{j + E}\\
    &= \frac{4}{\mu(i + E)} \frac{i + E - 1}{t + E}\\
    &\leq \frac{4}{\mu(t + E)}
\end{align*}
\end{proof}

Now, in order to obtain a high probability bound on the final iterate of SGD, we need to obtain a concentration result for $\sum_{i=0}^t M_i.$
We note that, from Lemma \ref{lem:centeredLem2}, we obtain an upper bound on the magnitude of $M_i:$
\begin{align*}
    | M_t | &\leq \| g(\bb_t; z_t) - \nabla F(\bb_t) \| \| \bb_t - \bb^* \|\\
    &\leq \sqrt{8 \beta^2 d_t^2 + 2 \GG} d_t.
\end{align*}

We consider the usual filtration $\mathcal{F}_t$ that is generated by $\{z_i\}_{i \leq t}$ and $\mathbf{w}_0$. Just for completeness of notation we set $z_0=0$ (no gradient at step $0$).

By this construction,  we observe that $M_t  $ is a martingale difference sequence with respect to the filtration $\mathcal{F}_t$. In other words, $S_t=\sum_{s=1}^t M_s $ is a martingale.

\begin{lemma}\label{lem:condexp}
  $\EE[M_t \mid \mathcal{F}_{t-1}]=0,~\forall t>0$.
\end{lemma}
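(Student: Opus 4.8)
The plan is to exploit the fact that, conditioned on $\mathcal{F}_{t-1}$, the iterate $\bb_t$ is already determined while $z_t$ is a fresh draw from $p(z)$; the only randomness in $M_t$ that remains to be averaged out is thus that of the stochastic gradient $g(\bb_t,z_t) = \nabla f(\bb_t;z_t)$, whose conditional mean is exactly $\nabla F(\bb_t)$, so the inner product collapses to zero.

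First I would observe that, unrolling the SGD recursion $\bb_{s+1} = \bb_s - \eta_s \nabla f(\bb_s;z_s)$ from the fixed point $\bb_0$, the iterate $\bb_t$ is a measurable function of $z_0,\ldots,z_{t-1}$, hence $\mathcal{F}_{t-1}$-measurable; consequently so are $\bb_t-\bb^*$ and $\nabla F(\bb_t)$ (the latter as the image of $\bb_t$ under the fixed measurable map $\nabla F$). These factors therefore pull out of the conditional expectation:
\begin{align*}
\EE[M_t \mid \mathcal{F}_{t-1}] = \<\nabla F(\bb_t) - \EE[\,g(\bb_t,z_t)\mid\mathcal{F}_{t-1}\,],\ \bb_t - \bb^*\>.
\end{align*}
Next I would identify the conditional mean of the stochastic gradient: since $z_t \sim p(z)$ is independent of $\mathcal{F}_{t-1}$ while $\bb_t$ is $\mathcal{F}_{t-1}$-measurable, $\EE[\nabla f(\bb_t;z_t)\mid\mathcal{F}_{t-1}]$ equals $\EE_{z\sim p}[\nabla f(\ww;z)]$ evaluated at $\ww=\bb_t$. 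Using $F(\ww) = \EE_{z\sim p}[f(\ww;z)]$ and interchanging gradient and expectation gives $\EE_{z\sim p}[\nabla f(\ww;z)] = \nabla F(\ww)$, so $\EE[g(\bb_t,z_t)\mid\mathcal{F}_{t-1}] = \nabla F(\bb_t)$. Substituting back, $\EE[M_t\mid\mathcal{F}_{t-1}] = \<\nabla F(\bb_t)-\nabla F(\bb_t),\ \bb_t-\bb^*\> = 0$.

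The only point requiring care — and hence the main (albeit mild) obstacle — is the justification of the gradient--expectation interchange. I would dispatch this by dominated convergence: $\beta$-smoothness of each $f(\cdot;z)$ (Assumption \ref{assump1}) together with the single-point bound $\|\nabla f(\bb^*;z)\|^2 \le \GG$ (Assumption \ref{assumGrad}) yields the envelope $\|\nabla f(\ww;z)\| \le \beta\|\ww-\bb^*\| + \sqrt{\GG}$, which is integrable and uniformly bounded on any bounded neighborhood, legitimizing differentiation under the integral. This establishes $\EE[M_t\mid\mathcal{F}_{t-1}]=0$, i.e. $\{M_t\}$ is a martingale difference sequence and $S_t = \sum_{s=1}^t M_s$ an $(\mathcal{F}_t)$-martingale, as claimed.
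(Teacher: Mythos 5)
Your proof is correct and follows essentially the same route as the paper's: condition on $\mathcal{F}_{t-1}$ so that $\bb_t$ (hence $\nabla F(\bb_t)$ and $\bb_t-\bb^*$) is fixed, note that $z_t\sim p(z)$ is a fresh draw, and conclude that the conditional mean of the stochastic gradient is $\nabla F(\bb_t)$, making the inner product vanish. Your additional dominated-convergence justification of the gradient--expectation interchange is a mild refinement the paper leaves implicit, not a different argument.
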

\begin{proof}
 Given the filtration, $\mathcal{F}_{t-1}$, $\mathbf{w}_0,z_1 \ldots z_{t-1}$ is fixed. This implies that $\mathbf{w}_t $ is fixed. However, conditioned on $\{z_i\}_{i <t}$, $z_t$ is randomly sampled from $p(z)$.
 Therefore,
  $\EE[ g(\mathbf{w}_t,z_t) - \nabla F(\mathbf{w}_t) \mid \mathcal{F}_{t-1}] = \EE_{z_t \mid \mathcal{F}_{t-1} }[ g(\mathbf{w}_t,z_t) - \nabla F(\mathbf{w}_t) \mid \mathbf{w}_t] =\EE_{z_t \sim p(z) }[ g(\mathbf{w}_t,z_t) - \nabla F(\mathbf{w}_t) \mid \mathbf{w}_t] =0 $. Hence,
  $\EE[M_t \mid \mathcal{F}_{t-1} ]=0$ 
\end{proof}

Recall that, $M_s$ is uniformly upper bounded by 
$\varrho_t =   d_t \sqrt{8 \beta^2 d_t^2 + 2 \GG}$.
Thus, we have that 
$\varrho_t^2 \leq d_t^2 (8 \beta^2 d_t^2 + 2 \GG).$

Let $D = \sup_{0 \leq t \leq T} d_t$. Then, $|M_t| \leq d_t \sqrt{8 \beta^2 d_t^2 + 2 \GG} \leq \tilde{C}(D^2, D\sqrt{\GG})=D \sqrt{8\beta^2 D^2 + 2\GG}$.

In order to obtain a high probability bound on the final SGD iterate, we will introduce
the following sequence of random variables and events, and additionally constants $c'(t)$ to be decided later. 
\begin{enumerate}
    \item \textbf{Initialization at $t=0$:} Let $V_0 =  \frac{8 d_0^2 (4 \beta^2 d_0^2 + \GG)}{1 + E},$ $r_0 = d_0^2$, and take $\mathcal{A}_0$ to be an event that is true with probability 1. Let $M_0=0$. $\mathrm{Pr}(\mathcal{E}_0)$=1, $\delta_0=0$. \item $r_{t} = \frac{G}{t +  E} + \frac{4}{\mu(t +  E)} (t-1)\delta_{t-1} \tilde{C}(D^2, D\sqrt{\GG}) + \frac{4}{\mu}\sqrt{\frac{2 \log (\Lambda^8/c'(t))}{t+E}}\sqrt{V_{t-1}}$
    
\item $V_{t} = \frac{1}{t+E+1}\sum_{i=0}^t 8 r_i (4 \beta^2 r_i + \GG)$.
    \item Event $\mathcal{A}_{t}$ is all sample paths satisfying the condition: $d_{t}^2 \leq r_{t}$.
    \item Let $\mathcal{E}_t = \bigcap_{i \leq t} \mathcal{A}_i$. Further, let $\mathrm{Pr}(\mathcal{E}^c_t)/\mathrm{Pr}(\mathcal{E}_t) = \delta_t.$
\end{enumerate}

We now state a conditional form of the classic Azuma-Hoeffding inequality that has been tailored to our setting, and provide a proof for completeness.

\begin{lemma}[Azuma-Hoeffding with conditioning] \label{lem:ah-cond}
Let $S_n=f(z_1 \ldots z_n)$ be a martingale sequence with respect to the filtration ${\cal F}_n$ generated by $z_1\ldots z_n$. Let $\psi_n=S_n-S_{n-1}$. Suppose $\lvert \psi_n \rvert \leq c_{n} (z_1 \ldots z_{n-1}) $ almost surely. Suppose $E[\psi_n \mid {\cal F}_{n-1}]=0$.

Let ${\cal A}_{n-1}$ 
 be the event that $c_n \leq d_{n},$ where ${\cal A}_{n-1}$ is defined on the filtration ${\cal F}_{n-1},$ and $d_n$ is a constant dependent only on the index $n$. Define ${\cal E}_n = \bigcap_{i \leq n} {\cal A}_i$. Further suppose that that $\exists \bar{R}$ large enough such that $\lvert \psi_n \rvert \leq \bar{R}$ almost surely. 
 Finally let
 $\mathrm{Pr}(\mathcal{E}^c_n)/\mathrm{Pr}(\mathcal{E}_n) = \delta_n.$
Then,
 \begin{align}
   \mathrm{Pr}\left( S_n \geq \gamma +  n \delta_n \bar{R} \ \lvert \ {\cal E}_n \right) \leq \exp \left( -\frac{\gamma^2}{2 \sum_{i=1}^n d_i^2 } \right)
 \end{align}
\end{lemma}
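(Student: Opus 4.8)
The plan is to reduce the statement to the classical Azuma--Hoeffding inequality applied to a \emph{truncated/stopped} version of the martingale, and then transfer the tail bound from the unconditional measure to the conditional measure $\mathrm{Pr}(\cdot \mid \mathcal{E}_n)$ by paying the price $n\delta_n\bar R$. First I would define, for each $i$, the modified increment $\tilde\psi_i = \psi_i \mathbf{1}_{\mathcal{E}_{i-1}}$ (equivalently, I stop the martingale the first time some $\mathcal{A}_i$ fails). Since $\mathcal{E}_{i-1}\in\mathcal{F}_{i-1}$ and $\EE[\psi_i\mid\mathcal{F}_{i-1}]=0$, the sequence $\tilde S_n = \sum_{i=1}^n \tilde\psi_i$ is still a martingale with respect to $\mathcal{F}_n$. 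Crucially, on the event $\mathcal{E}_{i-1}$ we have $c_i \le d_i$, so $|\tilde\psi_i| \le d_i$ \emph{almost surely} (and it is $0$ off $\mathcal{E}_{i-1}$), which gives deterministic increment bounds suitable for vanilla Azuma--Hoeffding.

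Next I would apply the standard Azuma--Hoeffding inequality to $\tilde S_n$ with the bounds $|\tilde\psi_i|\le d_i$, yielding
\begin{align*}
\mathrm{Pr}\left( \tilde S_n \ge \gamma \right) \le \exp\!\left( -\frac{\gamma^2}{2\sum_{i=1}^n d_i^2}\right).
\end{align*}
Then I would relate $\tilde S_n$ back to $S_n$ on the event $\mathcal{E}_n$: since $\mathcal{E}_n\subseteq\mathcal{E}_{i-1}$ for every $i\le n$, on $\mathcal{E}_n$ we have $\tilde\psi_i=\psi_i$ for all $i\le n$, hence $\tilde S_n = S_n$ on $\mathcal{E}_n$. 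Therefore $\mathrm{Pr}(S_n \ge \gamma,\ \mathcal{E}_n) = \mathrm{Pr}(\tilde S_n \ge \gamma,\ \mathcal{E}_n) \le \mathrm{Pr}(\tilde S_n \ge \gamma)$, and dividing by $\mathrm{Pr}(\mathcal{E}_n)$ gives $\mathrm{Pr}(S_n\ge\gamma \mid \mathcal{E}_n) \le \exp(-\gamma^2/(2\sum d_i^2))/\mathrm{Pr}(\mathcal{E}_n)$. This is almost the claim but has an extra $1/\mathrm{Pr}(\mathcal{E}_n)$ factor rather than the additive correction $n\delta_n\bar R$, so the final step is to convert between these.

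The main obstacle — and the step requiring care — is precisely this conversion from the multiplicative factor $1/\mathrm{Pr}(\mathcal{E}_n)$ to the additive shift $n\delta_n\bar R$ in the threshold. The idea is to write $\EE[S_n\mathbf{1}_{\mathcal{E}_n}] = \EE[S_n] - \EE[S_n\mathbf{1}_{\mathcal{E}_n^c}] = -\EE[S_n\mathbf{1}_{\mathcal{E}_n^c}]$ (using $\EE[S_n]=0$), and bound $|\EE[S_n\mathbf{1}_{\mathcal{E}_n^c}]| \le \EE[|S_n|\mathbf{1}_{\mathcal{E}_n^c}] \le n\bar R\,\mathrm{Pr}(\mathcal{E}_n^c)$ using $|S_n|\le \sum_{i=1}^n|\psi_i|\le n\bar R$ almost surely. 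Thus $\EE[S_n\mid\mathcal{E}_n] \ge -n\bar R\,\mathrm{Pr}(\mathcal{E}_n^c)/\mathrm{Pr}(\mathcal{E}_n) = -n\delta_n\bar R$; more generally, this type of accounting lets one shift the conditional tail event by $n\delta_n\bar R$ while absorbing the $1/\mathrm{Pr}(\mathcal{E}_n)$ into the comparison. I would apply the shift by considering the centered event $\{S_n \ge \gamma + n\delta_n\bar R\}\cap\mathcal{E}_n$: on this event, $\tilde S_n + n\delta_n\bar R \ge \gamma + n\delta_n\bar R$ is not quite the right bookkeeping, so instead I will track $\mathrm{Pr}(\tilde S_n - \EE[\tilde S_n]\ge \gamma)$, note $|\EE[\tilde S_n]| = |\EE[S_n\mathbf{1}_{\mathcal{E}_n}] - \EE[\text{tail terms}]|$ is controlled by $n\delta_n\bar R\,\mathrm{Pr}(\mathcal{E}_n)$, and conclude. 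Once this bookkeeping is pinned down, combining it with the Azuma bound on the centered truncated martingale and dividing through by $\mathrm{Pr}(\mathcal{E}_n)$ yields exactly
\begin{align*}
\mathrm{Pr}\left( S_n \ge \gamma + n\delta_n\bar R \ \big|\ \mathcal{E}_n \right) \le \exp\!\left(-\frac{\gamma^2}{2\sum_{i=1}^n d_i^2}\right),
\end{align*}
as desired. The only subtlety beyond routine manipulation is making sure the centering of $\tilde S_n$ and the comparison of events are done in the correct order so that the $d_i$-dependent Azuma bound (not a $\bar R$-dependent one) survives into the final inequality.
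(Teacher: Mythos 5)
Your stopped-martingale reduction is sound as far as it goes: with $\tilde\psi_i=\psi_i\mathbf{1}_{\mathcal{E}_{i-1}}$ you do get a martingale difference sequence (since $\mathcal{E}_{i-1}\in\mathcal{F}_{i-1}$) with $|\tilde\psi_i|\le d_i$ almost surely, vanilla Azuma--Hoeffding gives $\mathrm{Pr}(\tilde S_n\ge\gamma)\le\exp\left(-\gamma^2/(2\sum_{i=1}^n d_i^2)\right)$, and since $\tilde S_n=S_n$ on $\mathcal{E}_n$ this yields $\mathrm{Pr}(S_n\ge\gamma\mid\mathcal{E}_n)\le\exp\left(-\gamma^2/(2\sum_{i=1}^n d_i^2)\right)/\mathrm{Pr}(\mathcal{E}_n)$. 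But the step you yourself flag as the obstacle --- trading the multiplicative $1/\mathrm{Pr}(\mathcal{E}_n)$ for the additive shift $n\delta_n\bar R$ in the threshold --- is precisely where the content of the lemma lies, and your sketch does not close it. The device you propose cannot work as written: $\EE[\tilde S_n]=0$ identically (it is a mean-zero martingale started at zero), so centering $\tilde S_n$ by its unconditional mean extracts nothing; and a bound on the conditional mean such as $\EE[S_n\mid\mathcal{E}_n]\ge -n\delta_n\bar R$ does not let you convert a multiplicative inflation of a tail probability into an additive shift of the threshold. Concretely, your chain gives $\mathrm{Pr}(S_n\ge\gamma+n\delta_n\bar R\mid\mathcal{E}_n)\le\exp\left(-(\gamma+n\delta_n\bar R)^2/(2\sum_i d_i^2)\right)/\mathrm{Pr}(\mathcal{E}_n)$, which can exceed the claimed $\exp\left(-\gamma^2/(2\sum_i d_i^2)\right)$ whenever the $d_i$ are large relative to $\bar R$ and $\gamma$, so the stated inequality does not follow from your bound without extra relations among $\gamma$, $d_i$, $\bar R$, $\delta_n$.

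The missing idea is that $n\delta_n\bar R$ compensates for the drift that conditioning on the \emph{future} event $\mathcal{E}_n$ induces on each increment, and this must be handled inside the concentration argument, not after it. The paper runs the Chernoff/MGF argument directly under the conditional measure $\mathrm{Pr}(\cdot\mid\mathcal{E}_n)$: it re-centers each increment by $\EE[\psi_i\mid\mathcal{F}_{i-1},\mathcal{E}_n]$, shows this conditional mean is at most $\delta_n\bar R$ in absolute value (because $\EE[\psi_i\mid\mathcal{F}_{i-1}]=0$ and $|\psi_i|\le\bar R$, so only the $\mathcal{E}_n^c$ portion contributes, at rate $\mathrm{Pr}(\mathcal{E}_n^c)/\mathrm{Pr}(\mathcal{E}_n)$), notes that on $\mathcal{E}_n$ the re-centered increments are bounded by $2d_i$, and then sums the exponential-moment bounds under the conditional measure --- that is where the additive $n\delta_n\bar R$ shift comes from. (As an aside, this re-centering doubles the increment bound, which is why the paper's executed proof actually produces $8\sum_i d_i^2$ in the denominator rather than the stated $2\sum_i d_i^2$; your stopped-martingale route avoids that doubling but, as noted, proves a different-looking bound.) If you keep your route, the honest conclusion is the alternative inequality with the $1/\mathrm{Pr}(\mathcal{E}_n)$ factor --- which would in fact suffice for the downstream use in Lemma \ref{lem:highProb} with minor constant changes --- but it is not the statement of the lemma, and your proposed bridge between the two forms is a genuine gap.
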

\begin{proof}
We first observe that $\EE[\psi_i \mid {\cal F}_{i-1} ] =0$. Therefore, for $i \leq n$ we have:
\begin{align}\label{bound:bad}
    \lvert \EE[\psi_i \mid \mathcal{E}_n, {\cal F}_{i-1}] | &= \frac{\mathrm{Pr}(\mathcal{E}_n^c)}{\mathrm{Pr}( \mathcal{E}_n)} | \EE[\psi_i \mid \mathcal{E}_n^c, {\cal F}_{i-1}] | \nonumber \\
    &\leq \frac{\mathrm{Pr}(\mathcal{E}_n^c)}{\mathrm{Pr}( \mathcal{E}_n)} \bar{R}  \nonumber \\
    &\leq \delta_n \bar{R}
\end{align}

Consider the sequence $S'_i= S_i - \sum_{j=1}^i \EE[\psi_j \mid {\cal F}_{j-1},  \mathcal{E}_n ]$ for $i \leq n$.

\begin{align} \label{azumaravel}
\mathrm{Pr}(S'_i \geq \gamma \mid \mathcal{E}_n ) & \leq e^{-\theta \gamma} \EE [ e^{\theta S'_i} \mid \mathcal{E}_n ] \nonumber \\
\hfill &= e^{-\theta \gamma} \EE[ \EE [e^{\theta S'_i} \mid \mathcal{E}_n,{\cal F}_{i-1}   ] \mid \mathcal{E}_n] \nonumber \\
\hfill &= e^{-\theta \gamma} \EE[ e^{\theta S'_{i-1}} \EE [e^{\theta (\psi_i - \EE[\psi_i \mid {\cal F}_{i-1},  \mathcal{E}_n ] )} \mid \mathcal{E}_n,{\cal F}_{i-1}   ]  \mid \mathcal{E}_n ] \nonumber \\
\end{align}

Observe that $ \EE[ \psi_i - \EE[\psi_i \mid {\cal F}_{i-1},  \mathcal{E}_n ] \mid {\cal F}_{i-1},  \mathcal{E}_n ] =0$. i.e. $\psi_i - \EE[\psi_i \mid {\cal F}_{i-1},  \mathcal{E}_n ] $ is a mean $0$ random variable with respect to the conditioning events ${\cal F}_{i-1},{\mathcal E}_{n}$.

Further, for any sample path where ${\mathcal E}_{n}$ holds, we almost surely have $\lvert \psi_i - \EE[\psi_i \mid {\cal F}_{i-1},  \mathcal{E}_n ] \rvert \leq 2c_i (z_1,z_2 \ldots z_{i-1}) \leq 2 d_i.$

Therefore, $\EE [e^{\theta (\psi_i - \EE[\psi_i \mid {\cal F}_{i-1},  \mathcal{E}_n ] )} \mid \mathcal{E}_n,{\cal F}_{i-1}   ] \leq e^{4\theta d_i^2/2}$

Therefore, (\ref{azumaravel}) yields the following:
\begin{align}
\mathrm{Pr}(S'_i \geq \gamma \mid \mathcal{E}_n ) & \leq
    e^{-\theta \gamma} \EE[ e^{\theta S'_{i-1}} \mid \mathcal{E}_{n} ] [e^{\frac{4\theta d_i^2}{2}}] \nonumber \\
    \hfill &=  e^{-\theta \gamma} e^{\theta \sum_{j=1}^i 4d_j^2/2}
\end{align}
Let $\theta = \frac{\gamma}{\sum_{i=1}^n 4 d_i^2}$. Then, we have for $i=n$:
\begin{align}
 & \mathrm{Pr}\left( S_n \geq \gamma + \sum_{i=1}^n \EE[\psi_i \mid {\cal F}_{i-1},  \mathcal{E}_i ] ~\bigg|~ {\cal E}_n \right) \leq \exp \left( -\frac{\gamma^2}{8 \sum_{i=1}^n d_i^2 } \right) \nonumber \\
\overset{a}{\Rightarrow} &  \mathrm{Pr}\left( S_n \geq \gamma +  n\delta_n \bar{R} \ \lvert \ {\cal E}_n \right) \leq \exp \left( -\frac{\gamma^2}{8 \sum_{i=1}^n d_i^2 } \right)   
\end{align}
(a) - This is obtained by substituting the almost sure bound (\ref{bound:bad}) for all $i \leq n$.

\end{proof}

Using our iterative construction and the conditional Azuma-Hoeffding inequality,
we obtain the following high probability bound:
\begin{lemma}\label{lem:highProb}
Under the construction specified above, we have the following:
\begin{align}
    \mathrm{Pr}(d_{t+1}^2 > r_{t+1} \mid \mathcal{E}_t) &\leq \frac{c'(t+1)}{\Lambda^8}
\end{align}
When $c'(i)=1$, we have:
\begin{align}
    \mathrm{Pr}(\mathcal{E}_{t+1}^c) \leq \frac{t+1}{\Lambda^8}
\end{align}
\end{lemma}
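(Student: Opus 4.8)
The plan is to prove both statements together by an induction on $t$ whose hypothesis at stage $t$ is the crude bound $\mathrm{Pr}(\mathcal{E}_t^c) \le t/\Lambda^8$. Since $t < \Lambda \le \Lambda^8$, this hypothesis guarantees $\mathrm{Pr}(\mathcal{E}_t) > 0$, so $\delta_t = \mathrm{Pr}(\mathcal{E}_t^c)/\mathrm{Pr}(\mathcal{E}_t)$ is well-defined (and small), and hence the recursively-defined quantities $r_{t+1}, V_t$ make sense and $\delta_t \le \tfrac{t/\Lambda^8}{1-t/\Lambda^8}$. The core of the argument is to reduce the event $\{d_{t+1}^2 > r_{t+1}\}$, via the rolled-out recursion of Lemma~\ref{lem:rolledOutRecursion}, to a deviation event for the martingale $S_t := \sum_{i=1}^t M_i$ (which is a martingale by Lemma~\ref{lem:condexp} together with $M_0 = 0$), and then to control that deviation with the conditional Azuma--Hoeffding bound of Lemma~\ref{lem:ah-cond}.

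For the reduction, Lemma~\ref{lem:rolledOutRecursion} gives, almost surely, $d_{t+1}^2 \le \tfrac{G(d_0^2,\GG)}{t+1+E} + \tfrac{4}{\mu(t+1+E)}\,S_t$, so
\[
\{d_{t+1}^2 > r_{t+1}\} \ \subseteq \ \Bigl\{\, S_t > \tfrac{\mu(t+1+E)}{4}\bigl(r_{t+1} - \tfrac{G(d_0^2,\GG)}{t+1+E}\bigr) \,\Bigr\}.
\]
Substituting the definition of $r_{t+1}$ and simplifying (the one routine computation of the proof), the threshold on the right equals $A_{t+1} := t\,\delta_t\,\tilde{C}(D^2,D\sqrt{\GG}) + \sqrt{2(t+E+1)\,V_t\,\log(\Lambda^8/c'(t+1))}$; note the first summand is exactly the ``conditioning-bias'' correction that Lemma~\ref{lem:ah-cond} produces.

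Next I would set up Azuma--Hoeffding. By Lemma~\ref{lem:centeredLem2} and Cauchy--Schwarz, $|M_i| \le d_i\sqrt{8\beta^2 d_i^2 + 2\GG}$ almost surely, which in turn is at most $\tilde{C}(D^2,D\sqrt{\GG}) = D\sqrt{8\beta^2 D^2 + 2\GG}$ (with $D = \sup_{0\le s \le T} d_s$); this serves as the uniform bound $\bar R$. Moreover, since $x \mapsto x(8\beta^2 x + 2\GG)$ is nondecreasing for $x \ge 0$, on the event $\mathcal{A}_i = \{d_i^2 \le r_i\}$ we obtain the deterministic refinement $|M_i|^2 \le r_i(8\beta^2 r_i + 2\GG) \le 8 r_i(4\beta^2 r_i + \GG) =: c_i^2$. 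Thus conditioning on $\mathcal{E}_t = \bigcap_{i\le t}\mathcal{A}_i$ provides the per-step bounds $|M_i| \le c_i$ for $i \le t$ required by Lemma~\ref{lem:ah-cond}, and by the definition of $V_t$ we have $\sum_{i=1}^t c_i^2 \le \sum_{i=0}^t 8 r_i(4\beta^2 r_i + \GG) = (t+E+1)\,V_t$. Applying Lemma~\ref{lem:ah-cond} with deviation parameter $\gamma = \sqrt{2(t+E+1)\,V_t\,\log(\Lambda^8/c'(t+1))}$ --- chosen so that $\gamma + t\,\delta_t\,\bar R = A_{t+1}$ --- yields
\[
\mathrm{Pr}\bigl(S_t > A_{t+1} \mid \mathcal{E}_t\bigr) \ \le\ \exp\!\Bigl(-\tfrac{\gamma^2}{2(t+E+1)V_t}\Bigr) \ =\ \frac{c'(t+1)}{\Lambda^8},
\]
and the inclusion above then gives the first claim, $\mathrm{Pr}(d_{t+1}^2 > r_{t+1} \mid \mathcal{E}_t) \le c'(t+1)/\Lambda^8$.

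Finally, for the second claim I would specialize to $c'(\cdot)\equiv 1$ and close the induction: since $\mathcal{E}_{t+1} = \mathcal{E}_t \cap \mathcal{A}_{t+1}$,
\[
\mathrm{Pr}(\mathcal{E}_{t+1}^c) \ \le\ \mathrm{Pr}(\mathcal{E}_t^c) + \mathrm{Pr}(\mathcal{A}_{t+1}^c \mid \mathcal{E}_t)\,\mathrm{Pr}(\mathcal{E}_t) \ \le\ \mathrm{Pr}(\mathcal{E}_t^c) + \mathrm{Pr}(d_{t+1}^2 > r_{t+1}\mid \mathcal{E}_t) \ \le\ \tfrac{t}{\Lambda^8} + \tfrac{1}{\Lambda^8},
\]
using the induction hypothesis and the first claim; the base case $\mathrm{Pr}(\mathcal{E}_0^c) = 0$ is immediate from the construction. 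I expect the main obstacle to be purely in the bookkeeping: one must verify that the correction term $t\delta_t\bar R$ delivered by Lemma~\ref{lem:ah-cond} lines up exactly with the $\delta$-dependent (``global diameter'') term deliberately built into $r_{t+1}$, and confirm there is no true circularity in the mutually recursive definitions of $\delta_t$, $r_{t+1}$, $V_t$ --- there is not, because all of these depend only on quantities at stages $\le t$, and the only thing the induction propagates forward is the bound $\mathrm{Pr}(\mathcal{E}_t^c)\le t/\Lambda^8$, which is precisely what keeps $\delta_t$ (hence the diameter term) negligible.
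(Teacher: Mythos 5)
Your proposal is correct and follows essentially the same route as the paper: reduce $\{d_{t+1}^2 > r_{t+1}\}$ via the rolled-out recursion to a deviation of the martingale $\sum_i M_i$, apply the conditional Azuma--Hoeffding lemma with $\bar R = \tilde{C}(D^2,D\sqrt{\GG})$ and per-step bounds supplied by the events $\{d_i^2\le r_i\}$ (so that the squared bounds sum to $(t+E+1)V_t$), and then combine over time --- you do this last step by induction while the paper sums over the first failure time, which is the same argument. Your explicit use of the deterministic caps $c_i^2 = 8r_i(4\beta^2 r_i+\GG)$ and the check that $\Pr(\mathcal{E}_t)>0$ are in fact slightly cleaner than the paper's bookkeeping, but the substance is identical.
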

\begin{proof}
By the conditional Azuma-Hoeffding Inequality (Lemma~\ref{lem:ah-cond}), we have the following chain:
\begin{align*}
    \mathrm{Pr} ({\cal A}^c_{t+1} \lvert {\cal A}_i,~ i \leq t) &{= \mathrm{Pr} (d_{t+1}^2 > r_{t+1} \mid {\cal A}_i,~ i\leq t)} \nonumber\\
    &\leq \mathrm{Pr} \Bigg(\frac{4}{\mu(t + 1 + E)}\sum_{i=1}^t (M_i - \delta_{t} \tilde{C}(D^2, D\sqrt{\GG})) >\nonumber\\
    &~~~~~~~~~~~~\frac{4}{\mu(t + 1 + E)} \sqrt{\sum_{i = 0}^t \varrho_i^2} \sqrt{2 \log\left(\frac{\Lambda^8}{c'(t+1)}\right)} ~\bigg|~ {\cal A}_i,~ i \leq t\Bigg) \nonumber\\
    &{ \overset{a}{\leq} \exp\left( - \frac{(2\log(\frac{\Lambda^8}{c'(t+1)}))\sum_{i=0}^t  \varrho_i^2}{2\sum_{i=0}^t \varrho_i^2} \right)} \nonumber\\
    &= \frac{c'(t+1)}{\Lambda^8} 
\end{align*}

(a)- We set $\psi_i$ in Lemma \ref{lem:ah-cond} to be the variables $M_i$, filtrations ${\cal F}_t$ to be that generated by $z_t \sim p(z)$ (and $\mathbf{w}_0$) in the stochastic gradient descent steps. $c_t$ (in Lemma \ref{lem:ah-cond}) set to $\varrho_t$, $d_t$ (in Lemma \ref{lem:ah-cond}) is set to $r_t$ , $\bar{R}$ (in Lemma \ref{lem:ah-cond}) is set to $\tilde{C}(D^2, D\sqrt{\GG})$ and $\delta_t$ (in Lemma \ref{lem:ah-cond}) is set to $\mathrm{Pr}(\mathcal{E}^c_t)/\mathrm{Pr}(\mathcal{E}_t)$. Now, if we apply Lemma \ref{lem:ah-cond} to the sequence $M_i$ with the deviation $\gamma$ set to $\sqrt{\sum_{i = 0}^t \varrho_i^2} \sqrt{2 \log\left(\frac{\Lambda^8}{c'(t+1)}\right)}$, we obtain the inequality.

\begin{align}
 \mathrm{Pr}(\mathcal{E}_{t+1}^c) &\leq  \sum_{i=1}^{t+1} \mathrm{Pr}( \min \{j: d_j^2 > r_j\}=i) \nonumber\\
 & \leq \sum_{i=1}^{t+1} \mathrm{Pr}(\mathcal{A}_{i}^c \mid {\cal A}_j,~j < i) = \sum_{i=1}^{t+1} \frac{c'(i)}{\Lambda^8}
\end{align}
Choosing $c'(i)=1$, we thus obtain our desired result.

\end{proof}

From Lemma \ref{lem:highProb}, we have a high probability bound on the event that $d_t^2 > r_t.$
In order to translate this to a meaningful SGD convergence result, we will have to substitute for $\delta_t$. 
We thus upper bound $r_t$ as follows:

\begin{lemma}\label{lem:rtBound}
Under the above construction, where $c'(i)$ is chosen to be $1$, we have the following almost sure upper bound on $r_{t},~ \forall ~t \leq \Lambda$
\begin{align}
    r_{t} &\leq \frac{G(d_0^2, \GG)}{t + E } + \frac{8 t\tilde{C}(D^2, D\sqrt{\GG})}{\mu(t  + E)\Lambda^7} + \frac{4 \sqrt{2 \log(\Lambda^8)}\sqrt{V_{t-1}}}{\mu\sqrt{t + E }}
\end{align}
where $\tilde{C}(D^2, D\sqrt{\GG}) = D \sqrt{8\beta^2 D^2 + 2\GG}$, and $D$ is taken to be
a uniform diameter bound\footnote{See Remark \ref{remark:unifDiamBound} for a discussion on our reasoning for using a global diameter bound here.}.
\end{lemma}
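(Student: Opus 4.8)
The plan is to notice that the claimed bound coincides with the definition of $r_t$ (specialized to $c'(t)=1$, so that $\log(\Lambda^8/c'(t)) = \log\Lambda^8$) in its first and third terms, and differs only in the middle term, where one must replace the factor $(t-1)\delta_{t-1}$ by a quantity bounded by $\frac{2t}{\Lambda^7}$. Thus the whole lemma reduces to proving
\[
 (t-1)\,\delta_{t-1} \;\le\; \frac{2t}{\Lambda^7}, \qquad \text{for all } t \le \Lambda .
\]
First I would control the failure probability through Lemma~\ref{lem:highProb} applied with $c'(i)\equiv 1$, which gives $\mathrm{Pr}(\mathcal{E}_{t-1}^c) \le \frac{t-1}{\Lambda^8}$, and hence $\mathrm{Pr}(\mathcal{E}_{t-1}) \ge 1 - \frac{t-1}{\Lambda^8}$.

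The hypothesis $t \le \Lambda$ now drives the estimate: it forces $\frac{t-1}{\Lambda^8} \le \frac{1}{\Lambda^7} \le \frac{1}{2}$ (using $\Lambda \ge 2$), so $\mathrm{Pr}(\mathcal{E}_{t-1}) \ge \frac{1}{2}$, and therefore $\delta_{t-1} = \mathrm{Pr}(\mathcal{E}_{t-1}^c)/\mathrm{Pr}(\mathcal{E}_{t-1}) \le \frac{2(t-1)}{\Lambda^8}$. Multiplying by $t-1$ and invoking $t \le \Lambda$ once more (so that $(t-1)^2 \le t^2 \le t\Lambda$) gives $(t-1)\delta_{t-1} \le \frac{2(t-1)^2}{\Lambda^8} \le \frac{2t}{\Lambda^7}$, which is exactly the inequality above. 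The case $t=1$ is immediate, since $\delta_0 = 0$ by construction and the middle term vanishes.

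To finish, I would substitute this bound into the middle term $\frac{4(t-1)\delta_{t-1}\,\tilde{C}(D^2, D\sqrt{\GG})}{\mu(t+E)}$ of the definition of $r_t$, which then becomes $\frac{8t\,\tilde{C}(D^2, D\sqrt{\GG})}{\mu(t+E)\Lambda^7}$, while leaving the first term $\frac{G(d_0^2,\GG)}{t+E}$ and the third term $\frac{4\sqrt{2\log(\Lambda^8)}\sqrt{V_{t-1}}}{\mu\sqrt{t+E}}$ unchanged; this yields precisely the stated bound. I expect this proof to be essentially bookkeeping: the one step that requires any thought is the lower bound $\mathrm{Pr}(\mathcal{E}_{t-1}) \ge \frac{1}{2}$, which is the mechanism that converts the conditioning ratio $\delta_{t-1}$ into a clean inverse-polynomial-in-$\Lambda$ factor, and is precisely the reason the restriction $t \le \Lambda$ is imposed throughout this section.
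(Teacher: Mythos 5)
Your proposal is correct and follows essentially the same route as the paper: bound $\delta_{t-1}$ using the failure probability from Lemma~\ref{lem:highProb} together with $t\leq\Lambda$, then substitute into the definition of $r_t$ (with $c'(t)=1$) so that only the middle term changes. The only cosmetic difference is bookkeeping — you derive $\delta_{t-1}\leq \frac{2(t-1)}{\Lambda^8}$ via the lower bound $\mathrm{Pr}(\mathcal{E}_{t-1})\geq \frac{1}{2}$ and then use $t\leq\Lambda$ a second time, whereas the paper bounds $\delta_{t-1}\leq \frac{t-1}{\Lambda^8-(t-1)}\leq \frac{2}{\Lambda^7}$ directly; both yield the same $\frac{8t\tilde{C}(D^2,D\sqrt{\GG})}{\mu(t+E)\Lambda^7}$ term.
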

\begin{proof}
From Lemma \ref{lem:highProb}, we have:
  $\delta_t = \frac{\mathrm{Pr}(\mathcal{E}_t^c)}{\mathrm{Pr}(\mathcal{E}_t)} \leq \frac{t}{\Lambda^8- t} \leq \frac{2}{\Lambda^7}$. 
Here, we assume that $\Lambda >2$. Substituting in the expression for $r_t$, we have the result.  
\end{proof}

Given this bound from Lemma \ref{lem:rtBound}, we now must construct an upper bound on $V_t.$
We will proceed in two steps, first deriving a crude bound on $V_t$, and then by iteratively refining
this bound. We now derive the crude bound.
\begin{lemma}\label{lem:VtBound}
The following bound on $V_t$ holds almost surely:
\begin{align}
    V_t \leq \check{C} \log \Lambda^8
\end{align}
assuming that we choose
\begin{align*}
    E &\geq 128 \beta^2 c_2^2 \log \Lambda^8\\
    \check{C} &\geq \max\left\{ \frac{V_0}{\log\Lambda^8}, (8\GG c_2 + \min\{2/E, 1\})^2, \left( \frac{64 \beta^2 c_1^2}{(1+E)\log \Lambda^8} + \frac{8 \GG c_1}{\log \Lambda^8}\right)^2 \right\}\\
    c_1 &= G(d_0^2, \GG) + \frac{8\tilde{C}(D^2, D\sqrt{\GG})}{\mu \Lambda^6}  \\ 
c_2 &= \frac{4\sqrt{2}}{\mu}\\
\Lambda &\geq t + 1
\end{align*} 
\end{lemma}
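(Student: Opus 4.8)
The plan is to prove the bound by induction on $t$, exploiting the ``triangular'' structure of the definitions: $V_t$ is an average of the quantities $8r_i(4\beta^2 r_i+\GG)$ over $i\le t$, and by Lemma~\ref{lem:rtBound} each $r_i$ with $i\ge1$ is controlled by $V_{i-1}$, whose index is at most $t-1$. Hence the inductive hypothesis ``$V_s\le\check C\log\Lambda^8$ for all $s\le t-1$'' is precisely what is needed to bound every $r_i$ entering $V_t$. The base case $t=0$ is immediate: $V_0=\frac{8d_0^2(4\beta^2 d_0^2+\GG)}{1+E}\le\check C\log\Lambda^8$ since $\check C\ge V_0/\log\Lambda^8$.

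For the inductive step I would first bound the iterates $r_i$, $1\le i\le t$. Substituting $V_{i-1}\le\check C\log\Lambda^8$ into Lemma~\ref{lem:rtBound} and using $i\le\Lambda$ (so that $i/\Lambda^7\le1/\Lambda^6$) to absorb the $\tilde C$ term into the $1/(i+E)$ term gives
\[
r_i\;\le\;\frac{c_1}{i+E}+\frac{c_2\sqrt{\check C}\,\log\Lambda^8}{\sqrt{i+E}}\;=:\;a_i+b_i,\qquad c_1=G(d_0^2,\GG)+\frac{8\tilde C(D^2,D\sqrt{\GG})}{\mu\Lambda^6},\quad c_2=\frac{4\sqrt2}{\mu},
\]
while for $i=0$ one keeps $r_0=d_0^2$ directly, so that its contribution to $V_t$ is the isolated term $\frac{(1+E)V_0}{t+E+1}\le\frac{(1+E)\check C\log\Lambda^8}{t+E+1}$, which is largest (and exactly $\check C\log\Lambda^8$) at $t=0$.

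Next I would plug $r_i\le a_i+b_i$ into $V_t=\frac1{t+E+1}\sum_{i=0}^t 8r_i(4\beta^2 r_i+\GG)$, use $(a_i+b_i)^2\le2a_i^2+2b_i^2$ to get $8r_i(4\beta^2 r_i+\GG)\le 64\beta^2 a_i^2+64\beta^2 b_i^2+8\GG a_i+8\GG b_i$, and evaluate the sums with the elementary estimates $\sum_{i=1}^t\frac1{i+E}\le\frac{t}{1+E}$, $\sum_{i=0}^t\frac1{(i+E)^2}\le\frac2E$, $\sum_{i=0}^t\frac1{\sqrt{i+E}}\le2\sqrt{t+E}$, together with $\frac{\sqrt{t+E}}{t+E+1}\le\frac1{\sqrt E}$. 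Collecting, $V_t$ splits into a part that is \emph{linear} in $\check C$ — coming from $\sum b_i^2$ and bounded by $\frac{64\beta^2 c_2^2\log\Lambda^8}{1+E}\,\check C\log\Lambda^8\le\frac12\check C\log\Lambda^8$ exactly because $E\ge128\beta^2 c_2^2\log\Lambda^8$ (which is the same number as $E=4096\kappa^2\log\Lambda^8$, met with equality) — plus a remainder that grows only like $\sqrt{\check C}$ (from $\sum\GG b_i$) together with $\check C$-free terms (from $\sum a_i^2$, $\sum\GG a_i$, and the $i=0$ contribution). The lower bounds $\check C\ge(8\GG c_2+\min\{2/E,1\})^2$ and $\check C\ge\big(\frac{64\beta^2 c_1^2}{(1+E)\log\Lambda^8}+\frac{8\GG c_1}{\log\Lambda^8}\big)^2$ are calibrated so that this remainder is at most $\frac12\check C\log\Lambda^8$; adding the two halves yields $V_t\le\check C\log\Lambda^8$, closing the induction.

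The hard part will be the constant bookkeeping in this last collection step: one must check that for every $t\le\Lambda$ \emph{simultaneously} the $\check C$-free remainder and the $\sqrt{\check C}$ term never push the total past $\check C\log\Lambda^8$, and that the ``half from $\sum b_i^2$, half from the rest'' split is self-consistent. This is what forces the squared forms in the definition of $\check C$ and the threshold on $E$, and it is also the reason the sums over $i\ge1$ must be tracked against $1/(1+E)$ rather than $1/E$, so that the $\frac{t}{t+E+1}$ prefactor cancels cleanly against them.
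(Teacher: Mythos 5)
Your overall skeleton matches the paper's: both argue by induction, both feed the inductive hypothesis into Lemma~\ref{lem:rtBound} to get $r_i \le \frac{c_1}{i+E} + \frac{c_2\sqrt{\check C}\log\Lambda^8}{\sqrt{i+E}}$, and both use $E \ge 128\beta^2c_2^2\log\Lambda^8$ (which is indeed exactly $4096\kappa^2\log\Lambda^8$) to absorb the piece of $8r_i(4\beta^2 r_i+\GG)$ that is linear in $\check C$, with the squared lower bounds on $\check C$ handling the rest. The organizational difference is that the paper works with the one-step recursion $V_{t+1} = \frac{t+E+1}{t+E+2}V_t + \frac{8r_{t+1}(4\beta^2 r_{t+1}+\GG)}{t+E+2}$, so it only has to fit the \emph{single} new term into the slack $\frac{\check C\log\Lambda^8}{t+E+2}$ left by the contraction factor, whereas you re-bound every $r_i$, $i\le t$, and re-sum.

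The accounting you propose for the re-summed version, however, does not close. You place the $i=0$ contribution $\frac{(1+E)V_0}{t+E+1}$ into the ``remainder'' that is supposed to total at most $\frac12\check C\log\Lambda^8$. But when the first entry of the max defining $\check C$ is binding, i.e. $\check C = V_0/\log\Lambda^8$, this single term equals $\frac{1+E}{t+E+1}\check C\log\Lambda^8$, which exceeds $\frac12\check C\log\Lambda^8$ for all $t\le E$, and at $t=1$ is already $\approx\check C\log\Lambda^8$ since $E=4096\kappa^2\log\Lambda^8$ is large. Adding your separately derived $\frac12\check C\log\Lambda^8$ bound on the $\sum b_i^2$ part then overshoots the budget. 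The correct accounting must recognize that the old ($i=0$, and more generally previously accumulated) terms occupy a $\frac{1+E}{t+E+1}$ fraction of $\check C\log\Lambda^8$ -- potentially essentially all of it -- so the freshly bounded terms only have the leftover $\frac{t}{t+E+1}$ fraction to live in; this is exactly what the paper's recursive peeling provides automatically, and it is what the stated constants are calibrated for: the paper reduces the induction step to the single inequality $\frac{64\beta^2c_1^2}{(t+E+1)\log\Lambda^8}+\frac{8\GG c_1}{\log\Lambda^8}\le\sqrt{\check C}\bigl((\sqrt{\check C}-4\GG c_2)(t+1)+(\sqrt{\check C}-8\GG c_2)E/2\bigr)$, in which the squared forms of $\check C$ enter only through $Q\le\sqrt{\check C}$ against a right-hand side carrying another factor of $\sqrt{\check C}$, not through a ``remainder is at most half the budget'' claim. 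If you wish to keep your direct-sum formulation, you must carry the $\frac{t}{t+E+1}$ prefactor through every estimate (e.g. use $\sum_{i=1}^t\frac{1}{\sqrt{i+E}}\le\frac{t}{\sqrt{1+E}}$ rather than $2\sqrt{t+E}$) and compare the new terms against the leftover slack $\frac{t}{t+E+1}\check C\log\Lambda^8$; as written, the half/half split is a genuine gap, and the cleaner fix is to adopt the paper's one-step recursion.
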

\begin{proof}
We will prove the claim inductively.

We note that the base case when $t=0$ holds by construction, assuming that $\check{C} \geq \frac{V_0}{\log \Lambda^4}$.

Now let us suppose that our claim holds until some $t.$ Then by applying the bound on $r_t$ derived
in Lemma \ref{lem:rtBound}, we have the following bound:
\begin{align*}
r_{t+1} &\leq \frac{G(d_0^2, \GG)}{t + 1 + E } + \frac{8 (t+1)\tilde{C}(D^2, D\sqrt{\GG})}{\mu(t + 1  + E)\Lambda^7} + \frac{4\sqrt{2 \check{C}} \log \Lambda^8}{\mu \sqrt{t + E + 1}}\\
    &\leq \frac{c_1}{t + E + 1} + \frac{c_2 \sqrt{\check{C}} \log \Lambda^8}{\sqrt{t + E + 1}},
\end{align*}
where $c_1 = G(d_0^2, \GG) + \frac{8\tilde{C}(D^2, D\sqrt{\GG})}{\mu \Lambda^6}$
and $c_2 = \frac{4\sqrt{2}}{\mu}$.
Plugging in this bound to our definition of $V_{t+1},$ we obtain:
\begin{align*}
    V_{t+1} &= \frac{t + 1 + E}{t + 2 + E}V_t + 32 \beta^2 r_{t+1}^2 + 8 \GG r_{t+1}\\
    &\leq \frac{\check{C} \log\Lambda^8 }{t + E + 2} \left( (t+E+1) + \frac{64\beta^2c_2^2\log\Lambda^8}{t+E+1} + \frac{8 \GG c_2}{\sqrt{\check{C} (t+E+1)}} \right)\\
    &~~~~+ \frac{1}{(t+E+1)(t+E+2)}\left( \frac{64 \beta^2 c_1^2}{(t+E+1)} + 8 \GG c_1 \right)\\
    &\overset{\mathrm{shown~below}}{\leq} \check{C} \log \Lambda^8
\end{align*}
Rearranging, we find that we equivalently need:
\begin{align*}
\frac{64\beta^2 c_1^2}{(t+E+1)\log\Lambda^8} + \frac{8 \GG c_1}{\log\Lambda^8} &\leq \sqrt{\check{C}}(\sqrt{\check{C}}(t+E+1 - 64\beta^2 c_2^2 \log\Lambda^8)\\
&~~~~~~~~~~~~~~~~ - 8\GG c_2 \sqrt{t+E+1}).
\end{align*}
Now, setting $E = 2 * 64 \beta^2 c_2^2 \log\Lambda^8$, we find that
a sufficient condition to complete our induction hypothesis is:
\begin{align}
\frac{64\beta^2 c_1^2}{(t+E+1)\log\Lambda^8} + \frac{8 \GG c_1}{\log\Lambda^8} &\leq \sqrt{\check{C}}((\sqrt{\check{C}} - 4\GG c_2)(t+1) + (\sqrt{\check{C}} - 8\GG c_2)E/2).\label{eq:vtIndCond}
\end{align}
Now, observe that by choosing
\begin{align}
    \check{C} \geq \max \left\{ (8\GG c_2 + \min\{2/E, 1\})^2, \left( \frac{64 \beta^2 c_1^2}{(1+E)\log \Lambda^8} + \frac{8 \GG c_1}{\log \Lambda^8}\right)^2 \right\}
\end{align}
the sufficient condition (\ref{eq:vtIndCond}) is satisfied. Hence, our claim holds for all $t$.
\end{proof}

Now given this crude upper bound, we may repeatedly apply Lemma 8 from \cite{sgdHogwild} in order to obtain the following result:
\begin{corollary}[of Lemma \ref{lem:VtBound} + Lemma 8 in \cite{sgdHogwild}]\label{cor:VtBoundTighter}
    After $k\geq 0$ applications of Lemma 8 from \cite{sgdHogwild}, 
    under the same assumptions as in Lemma \ref{lem:VtBound}, we have the following bound on $V_t:$
    \begin{align}
        V_{t} \leq \frac{\hat{C}(k)}{(t + E + 1)^{\alpha_{k}}}
    \end{align}
    where
    \begin{align*}
        \hat{C}(k+1) &= 2^{k+1} C(k+1) + V_0 \frac{1 + E}{(2+E)^{1-\alpha_{k+1}}}\\
        C(k+1) &= \frac{64\beta^2 c_1^2}{(E+1)^{2-\alpha_{k+1}}} + \frac{64 \hat{C}(k) c_2^2}{\mu^2 (E+1)^{\alpha_{k+1}}}\\
        &~~~~+ \frac{8\GG c_1}{(E+1)^{1-\alpha_{k+1}}} + 8\frac{\GG}{\mu} \sqrt{\hat{C}(k)} c_2\\
        \alpha_{k+1} &= \sum_{i=1}^{k+1} \frac{1}{2^i}\\
        \hat{C}(0) &= \check{C} \log \Lambda^8\\
        \alpha_0 &= 0
    \end{align*}
    where $E, \check{C}, c_1, c_2$ are defined as in Lemma \ref{lem:VtBound}.
\end{corollary}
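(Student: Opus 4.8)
The plan is to prove the bound by induction on $k$, where the inductive step is a single application of the recursion‑bootstrapping lemma (Lemma~8 of \cite{sgdHogwild}) sandwiched between Lemma~\ref{lem:rtBound} and the defining recursion for $V_t$. For the base case $k=0$ we have $\alpha_0=0$, so the claim reads $V_t\le\hat{C}(0)=\check{C}\log\Lambda^8$, which is exactly Lemma~\ref{lem:VtBound}.

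For the inductive step, assume $V_t\le\hat{C}(k)/(t+E+1)^{\alpha_k}$ for all $t\le\Lambda$. The crucial arithmetic fact is $\alpha_{k+1}=\tfrac12(1+\alpha_k)$: feeding the inductive hypothesis into Lemma~\ref{lem:rtBound}, the martingale‑deviation term there already carries a $(t+E+1)^{-1/2}$, and $\sqrt{V_t}$ contributes a further $(t+E+1)^{-\alpha_k/2}$, for a combined decay of exactly $(t+E+1)^{-\alpha_{k+1}}$. Absorbing the two diameter‑bias terms of Lemma~\ref{lem:rtBound} into $c_1=G(d_0^2,\GG)+\tfrac{8\tilde{C}(D^2,D\sqrt{\GG})}{\mu\Lambda^6}$ (using $t+1\le\Lambda$) and writing $c_2=\tfrac{4\sqrt2}{\mu}$ as in Lemma~\ref{lem:VtBound}, this gives
\begin{align*}
 r_{t+1}\le\frac{c_1}{t+E+1}+\frac{c_2\sqrt{\hat{C}(k)}\,\log\Lambda^8}{(t+E+1)^{\alpha_{k+1}}}.
\end{align*}

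Next I would substitute this into the recursion $V_{t+1}=\frac{t+E+1}{t+E+2}V_t+\frac{32\beta^2 r_{t+1}^2+8\GG r_{t+1}}{t+E+2}$ (equivalently, into the defining sum $V_t=\frac{1}{t+E+1}\sum_{i=0}^t(32\beta^2 r_i^2+8\GG r_i)$). Expanding $32\beta^2 r_{t+1}^2+8\GG r_{t+1}$ via $(a+b)^2\le 2a^2+2b^2$ produces four forcing terms with decay rates $2,\,2\alpha_{k+1},\,1,\,\alpha_{k+1}$ in $(t+E+1)$; the $i=0$ contribution to the sum equals $8d_0^2(4\beta^2 d_0^2+\GG)=(1+E)V_0$ and, since $t+E+1\ge 2+E$, is at most $\frac{(1+E)V_0}{(2+E)^{1-\alpha_{k+1}}}(t+E+1)^{-\alpha_{k+1}}$, precisely the $V_0$‑term in $\hat{C}(k+1)$. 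Since $\alpha_{k+1}<1$ for every $k$, the slowest‑decaying forcing term is the rate‑$\alpha_{k+1}$ one; applying Lemma~8 of \cite{sgdHogwild} to the recursion — whose homogeneous coefficient is $1-\frac{1}{t+E+2}$, so the solution rate is governed by the forcing exponent — yields $V_t\le\hat{C}(k+1)(t+E+1)^{-\alpha_{k+1}}$. Reading off constants: the powers of $(E+1)$ in $C(k+1)$ come from evaluating each summand at its smallest index, the factor $2^{k+1}$ is the summation factor $\frac{1}{1-\alpha_{k+1}}=2^{k+1}$ for the slow term (used uniformly for all four pieces), and the lower bound on $E$ from Lemma~\ref{lem:VtBound} keeps the $\beta^2$‑dependent coefficients under control. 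A direct check at $t=0$ closes the induction.

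The main obstacle is the bookkeeping in this last step: verifying that the stated closed form for $\hat{C}(k+1)$ — in particular the $2^{k+1}$ blow‑up and the exact powers of $(E+1)$ in each of the four pieces of $C(k+1)$ — really dominates every contribution at every level, and that the single fixed choice of $E$ from Lemma~\ref{lem:VtBound} suffices for all $k$ simultaneously. The conceptual content is transparent: the iteration $\alpha\mapsto\tfrac12(1+\alpha)$ drives the exponent from $0$ toward $1$, so $k$ rounds give decay $(t+E+1)^{-(1-2^{-k})}$, which is exactly why \match{}'s SGD subroutine can be tuned via $k$ to converge at rate $\tilde O(t^{-\gamma})$ for any $\gamma<1$; the price of $\gamma\to 1$ is the $2^k$‑type growth of the constant.
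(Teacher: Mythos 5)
Your proposal is correct and follows essentially the same route as the paper: induction on $k$ with Lemma~\ref{lem:VtBound} as the base case, feeding the hypothesis into Lemma~\ref{lem:rtBound} via $\alpha_{k+1}=\tfrac12(1+\alpha_k)$, and then applying Lemma~8 of \cite{sgdHogwild} to the recursion $V_{t+1}=\tfrac{t+1+E}{t+2+E}V_t+\gamma_t$, with the $(E+1)$ powers obtained by evaluating the faster-decaying forcing pieces at the smallest index and the $2^{k+1}=\tfrac{1}{1-\alpha_{k+1}}$ factor coming from the summation/integral bound on the slow term. Your handling of the $V_0$ contribution through the $i=0$ term of the defining sum is just the homogeneous term $V_0\prod_i\beta_i$ of the paper's unrolled recursion in disguise, so there is no substantive difference.
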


\begin{proof}
We will construct this bound inductively. We begin by noting that, when $k=0$, the bound holds 
by Lemma \ref{lem:VtBound}.
Now let us assume the bound holds until some $k.$ Observe, then, that, by plugging into the bound
in Lemma \ref{lem:rtBound}, we may write
\begin{align}
    V_{t+1} \leq \beta_t V_t + \gamma_t
\end{align}
where 
\begin{align*}
    \beta_t &= \frac{t+1+E}{t+2+E}\\
    \gamma_t &= \frac{C(k+1)}{(t+E+1)^{\alpha_{k+1}}(t+E+2)}\\
    C(k+1) &= \frac{64\beta^2 c_1^2}{(E+1)^{2-\alpha_{k+1}}} + \frac{64 \hat{C}(k)c_2^2}{\mu^2 (E+1)^{\alpha_{k+1}}} + \frac{8\GG c_1}{(E+1)^{1-\alpha_{k+1}}} + 8\frac{\GG}{\mu} \sqrt{\hat{C}(k)}c_2\\
\end{align*}
Now, we may apply Lemma 8 in \cite{sgdHogwild} to obtain:
\begin{align*}
    V_{t+1} &\leq \sum_{i=0}^{t} \left( \prod_{j=i+1}^t \beta_j \right) \gamma_i + V_0\prod_{i=0}^{t}\beta_i\\ 
    &= \sum_{i=0}^t \frac{i+2+E}{t+2+E} \frac{C(k+1)}{(i + E + 1)^{\alpha_{k+1}}(i + E + 2)} + V_0\frac{1 + E}{t + 2 + E}\\
    &\leq \frac{C(k+1)}{t+2+E} \int_{E}^{t+1+E} \frac{1}{x^{\alpha_{k+1}}} dx + V_0 \frac{1+E}{t+2 +E}\\
    &\leq \frac{C(k+1)}{(1-\alpha_k)(t+E+2)^{\alpha_{k+1}}} + V_0 \frac{1 + E}{t + E + 2}\\
    &\leq \frac{\hat{C}(k+1)}{(t + E + 2)^{\alpha_{k+1}}},
\end{align*}
where $\hat{C}(k+1) = \frac{C(k+1)}{1 - \alpha_{k+1}} + V_0 \frac{1+E}{(2+E)^{1-\alpha_{k+1}}} = 2^{k+1} C(k+1) + V_0 \frac{1+E}{(2+E)^{1-\alpha_{k+1}}}$.

Thus, our claim holds for all $k.$
\end{proof}

\begin{remark}
\label{remark:C-scaling}
Note that while $\hat{C}(k)$ in Corollary \ref{cor:VtBoundTighter} has complicated dependencies on $\GG,$ $d_0,$
$\beta$, and $\mu$, it is straightforward to argue that $\hat{C}(k) \leq \rho_k \log \Lambda^8$ where $\rho_k$ is a constant
that is independent of $\Lambda$. Indeed, note that, from Corollary \ref{cor:VtBoundTighter}, we have that
\begin{align*}
    \hat{C}(k+1) &\leq 2^{k+1} \left[ \frac{e_1}{(E+1)^{2-\alpha_{k+1}}} + \frac{e_2}{(E+1)^{1-\alpha_{k+1}}} + \frac{e_3 }{(E+1)^{\alpha_{k+1}}}\hat{C}(k) + e_4 \sqrt{\hat{C}(k)} \right]\\
    &~~~~+ \frac{e_5}{(2+E)^{1-\alpha_{k+1}}}
\end{align*}
for some $e_1,\ldots,e_5$ which are independent of $\Lambda.$ 
Note that when $k=0$, the claimed bound on $\hat{C}(0)$ holds by definition, for proper choice of $\rho_0.$ Assuming the bound holds until $k,$
we may construct a bound of the desired form by choosing $\rho_{k+1}$ as a function of the $e_i$s. Note that $E +1 \geq 1,$ and that each $e_i$ is independent of
$\Lambda$, so $\rho_{k+1}$ is also independent of $\Lambda.$ We may thus conclude that $\hat{C}(k) = O(\log \Lambda^8)$.
\end{remark}

We may collect these results to obtain:
\begin{corollary}[of Lemma \ref{lem:rtBound} + Corollary \ref{cor:VtBoundTighter}]\label{cor:rtCollected}
Under the assumptions on $E$ in Lemma \ref{lem:VtBound} and the definition of $\hat{C}(k)$ from Corollary \ref{cor:VtBoundTighter}, the following bound holds almost surely, for any $k\geq 0$,
\begin{align}
    r_{t+1} &\leq \frac{G(d_0^2, \GG)}{t + E + 1} + \frac{8 (t+1)\tilde{C}(D^2, D\sqrt{\GG})}{\mu(t + 1 + E)\Lambda^7} +  \frac{4 \sqrt{2 \log(\Lambda^8)}\sqrt{\hat{C}(k)}}{\mu(t + E + 1)^{\sum_{i=1}^{k+1} 2^{-i}}}
\end{align}
\end{corollary}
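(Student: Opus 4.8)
The plan is to obtain Corollary~\ref{cor:rtCollected} by a direct substitution: plug the bound on $V_t$ from Corollary~\ref{cor:VtBoundTighter} into the bound on $r_{t+1}$ from Lemma~\ref{lem:rtBound}, and then reconcile the exponents. First I would invoke Lemma~\ref{lem:rtBound} with the index shifted from $t$ to $t+1$ (this is legitimate since that lemma holds for every index $\le \Lambda$ and we are assuming $\Lambda \ge t+1$ throughout), which gives
\begin{align*}
    r_{t+1} \leq \frac{G(d_0^2, \GG)}{t + 1 + E} + \frac{8 (t+1)\tilde{C}(D^2, D\sqrt{\GG})}{\mu(t + 1 + E)\Lambda^7} + \frac{4 \sqrt{2 \log(\Lambda^8)}\sqrt{V_{t}}}{\mu\sqrt{t + 1 + E}}.
\end{align*}
The first two terms already coincide with the target expression; only the last term, which carries the $V_t$ dependence, requires any work.

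Next I would apply Corollary~\ref{cor:VtBoundTighter}, which — after $k\ge 0$ applications of Lemma~8 of \cite{sgdHogwild} — yields the almost-sure bound $V_t \le \hat{C}(k)/(t+E+1)^{\alpha_k}$ with $\alpha_k = \sum_{i=1}^{k} 2^{-i}$ (the case $k=0$ reduces to $\alpha_0=0$ and $\hat{C}(0)=\check{C}\log\Lambda^8$, i.e.\ the crude bound of Lemma~\ref{lem:VtBound}). Taking square roots and combining with the $1/\sqrt{t+1+E}$ prefactor, the third term of $r_{t+1}$ is at most
\begin{align*}
    \frac{4\sqrt{2\log(\Lambda^8)}\,\sqrt{\hat{C}(k)}}{\mu\,(t+E+1)^{1/2 + \alpha_k/2}}.
\end{align*}

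Finally I would check the exponent identity $\tfrac12 + \tfrac12\alpha_k = \alpha_{k+1}$: since $\tfrac12\alpha_k = \sum_{i=1}^{k} 2^{-(i+1)} = \sum_{j=2}^{k+1} 2^{-j} = \alpha_{k+1} - \tfrac12$, we get $\tfrac12 + \tfrac12\alpha_k = \alpha_{k+1} = \sum_{i=1}^{k+1} 2^{-i}$, exactly the exponent appearing in the statement. Substituting back gives the claimed inequality, and since both ingredient bounds hold almost surely under the stated hypotheses on $E$ (those of Lemma~\ref{lem:VtBound}) and $\Lambda \ge t+1$, so does the conclusion. There is no substantive obstacle here; the only care needed is the index bookkeeping (tracking $t$ versus $t+1$ across the two results) and the half-power arithmetic above, and this corollary is essentially a packaging step that puts the preceding bounds into the exact form consumed by the proof of Theorem~\ref{thm:SGD_improved}.
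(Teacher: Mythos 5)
Your proposal is correct and matches the paper's own (implicit) argument: the paper obtains Corollary~\ref{cor:rtCollected} by exactly this substitution of the bound $V_t \le \hat{C}(k)/(t+E+1)^{\alpha_k}$ from Corollary~\ref{cor:VtBoundTighter} into the $r_{t+1}$ instance of Lemma~\ref{lem:rtBound}. Your index shift and the exponent identity $\tfrac12 + \tfrac12\alpha_k = \alpha_{k+1}$ are precisely the bookkeeping the paper leaves to the reader.
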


We are now prepared to state and prove our main SGD result.
\begin{proof}[Proof of Theorem \ref{thm:SGD_improved}]
The proof is an immediate consequence of Lemma \ref{lem:highProb} combined with Corollary \ref{cor:rtCollected}.
\end{proof}
 \section{Putting It Together: Tree-Search}
\label{sec:tree}

\begin{lemma}
\label{lem:union}
With probability at least $1 - \frac{1}{\Lambda^3}$, Algorithm~\ref{algo:mixnmatch} only expands nodes in the set $J :=  \cup_{h=1}^{\Lambda} J_h$, where $J_h$ is defined as follows,

\begin{align*}
J_{h} := \{ \text{nodes } (h,i) \text{ such that } G(\aa_{h,i}) - 3\nu_2\rho_2^h  \leq G(\aa^*) \}. 
\end{align*}

\end{lemma}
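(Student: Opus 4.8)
The plan is to run the standard optimistic tree-search argument (cf.\ \cite{munos2011optimistic,sen2018multi}), adapted to our multi-fidelity setting where the value $G(\aa_{h,i})$ of a node is only estimated up to an $h$-dependent error, and to carry out the whole argument on a single ``good event'' $\mathcal{G}$ on which every node that \match{} ever queries has an accurate model. Concretely, let $\mathcal{G}$ be the event that every queried node $(h,i)$ satisfies $\|\hat{\ww}(\aa_{h,i}) - \ww^*(\aa_{h,i})\|_2^2 \le \nu_1\rho^h$. On $\mathcal{G}$, since $F^{(te)} = \Fas$ is $L$-Lipschitz and $\nu_2 = L\sqrt{\nu_1}$, $\rho_2 = \sqrt{\rho}$, one gets
\begin{align*}
\left| F^{(te)}\big(\hat{\ww}(\aa_{h,i})\big) - G(\aa_{h,i}) \right| \le L\sqrt{\nu_1\rho^h} = \nu_2\rho_2^h
\end{align*}
for every queried node. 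To bound $\Pr(\mathcal{G}^c)$ I would induct on tree height: the child $(h+1,2i)$ is trained starting from the parent's final iterate $\hat{\ww}(\aa_{h,i})$, which, on the part of $\mathcal{G}$ involving heights at most $h$, satisfies $\|\hat{\ww}(\aa_{h,i}) - \ww^*(\aa_{h+1,2i})\|_2^2 \le 2\|\hat{\ww}(\aa_{h,i}) - \ww^*(\aa_{h,i})\|_2^2 + 2\|\ww^*(\aa_{h,i}) - \ww^*(\aa_{h+1,2i})\|_2^2 \le 4\nu_1\rho^h$ by Corollary~\ref{cor:alphaGeometricDecay}; hence Corollary~\ref{cor:step} gives $\|\hat{\ww}(\aa_{h+1,2i}) - \ww^*(\aa_{h+1,2i})\|_2^2 \le \nu_1\rho^{h+1}$ with probability at least $1 - \Lambda^{-7}$ (and the root's fixed initial model is covered by the $h=0$ instance of the same estimate). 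Since each node costs at least one SGD step out of a budget of $\Lambda$, at most $O(\Lambda)$ nodes are ever queried, so a union bound yields $\Pr(\mathcal{G}^c) = O(\Lambda^{-6}) \le \Lambda^{-3}$.

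It remains to argue deterministically that, on $\mathcal{G}$, every leaf that Algorithm~\ref{algo:mixnmatch} selects for expansion lies in $J$. Fix any iteration of the while loop with current tree $\mathcal{T}_t$. Because the leaf cells partition $\AA$, there is always an ``optimal leaf'' $(h^\star,i^\star)$ with $\aa^* \in \mathcal{P}_{h^\star,i^\star}$. On $\mathcal{G}$ we have $F^{(te)}(\hat{\ww}(\aa_{h^\star,i^\star})) \le G(\aa_{h^\star,i^\star}) + \nu_2\rho_2^{h^\star}$, and since $\aa^*$ and $\aa_{h^\star,i^\star}$ lie in the same cell, Corollary~\ref{cor:alphaGeometricDecay} gives $G(\aa_{h^\star,i^\star}) \le G(\aa^*) + \nu_2\rho_2^{h^\star}$; therefore
\begin{align*}
b_{h^\star,i^\star} = F^{(te)}\big(\hat{\ww}(\aa_{h^\star,i^\star})\big) - 2\nu_2\rho_2^{h^\star} \le G(\aa^*) .
\end{align*}
Hence the minimum $b$-value over the current leaves is at most $G(\aa^*)$. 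The leaf $(h,i)$ actually expanded is, by construction, the minimizer of this quantity, so $b_{h,i} \le G(\aa^*)$; and on $\mathcal{G}$, $b_{h,i} = F^{(te)}(\hat{\ww}(\aa_{h,i})) - 2\nu_2\rho_2^h \ge G(\aa_{h,i}) - 3\nu_2\rho_2^h$. Chaining these two inequalities gives $G(\aa_{h,i}) - 3\nu_2\rho_2^h \le G(\aa^*)$, i.e.\ $(h,i) \in J_h$. Since the tree never grows beyond height $\Lambda$, every expanded node belongs to $J = \cup_{h=1}^{\Lambda} J_h$.

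The deterministic optimistic-selection part is routine once the per-node estimation accuracy $\nu_2\rho_2^h$ is in hand, so I expect the main obstacle to be the probabilistic bookkeeping in the first step: correctly threading the conditioning in Corollary~\ref{cor:step} (whose guarantee holds \emph{conditionally} on the parent's model being accurate) through an induction over tree height, and carrying out the union bound over all $O(\Lambda)$ queried nodes so that the total failure probability is indeed at most $\Lambda^{-3}$ rather than merely $o(1)$.
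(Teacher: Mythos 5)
Your proposal is correct and follows essentially the same route as the paper's proof: a per-node accuracy guarantee from Corollary~\ref{cor:step} (using the warm start from the parent's final iterate) union-bounded over the at most $O(\Lambda)$ queried nodes, followed by the standard optimistic-selection argument that the cell containing $\aa^*$ always has $b$-value at most $G(\aa^*)$ while any expanded node satisfies $b_{h,i} \geq G(\aa_{h,i}) - 3\nu_2\rho_2^h$. The only differences are cosmetic bookkeeping (you use one global good event with an induction over height, the paper uses per-step events $A_t, B_t$ and sums over time and leaf count), and you make explicit the within-cell smoothness step $G(\aa_{h^\star,i^\star}) \leq G(\aa^*) + \nu_2\rho_2^{h^\star}$ that the paper leaves implicit.
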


\begin{proof}

Let $A_t$ be the event that the  leaf node that we decide to expand at time $t$ lies in the set $J$. Also let $\mathcal{L}_t$ be the set of leaf-nodes currently exposed at time $t$. Let $B_t = \bigcup_{(h,i) \in \mathcal{L}_t} \left\{ \norm{\bb_{h,i} - \bb^*_{h,i}}_2^2 \leq \nu_1 \rho^h \right\}.$ 

Now we have the following chain,
\begin{align*}
\PP(B_t^c) &=  \PP\left( \bigcup _{l = 1}^{t/2} ( \{B_t^c\} \cap \{|\mathcal{L}_t| = l\}) \right) \\
&\leq \sum_{l = 1}^{t/2} \PP \left(\{B_t^c\} \cap \{|\mathcal{L}_t| = l\} \right) \\
&\leq \sum_{l = 1}^{t/2} \PP \left(\{B_t^c\} \right) \\
& \stackrel{(a)}{\leq} \sum_{l = 1}^{t/2} \sum_{k = 1}^{l} \frac{1}{\Lambda^7} \\
& \leq \frac{1}{\Lambda^5}. 
\end{align*}
Here, $(a)$ is due to the h.p. result in Corollary~\ref{cor:step}. 
\end{proof}

Now note that due to the structure of the algorithm an optimal node (partition containing the optimal point) at a particular height has always been evaluated prior to any time $t$, for $t \geq 2$. Now we will show that if $B_t$ is true, then $A_t$ is also true. Let the $\aa^*_{(h,i)}$ be a optimal node that is exposed at time $t$. Let $b^*_{h,i}$ be the lower confidence bound we have for that node. Therefore, given $B_t$ we have that,
\begin{align*}
    b^*_{h,i} &= F^{(te)}(\bb_{h,i}) - 2\nu_2 \rho_2^h \\
    & \leq G(\aa_{h,i}) + L \norm{\bb_{h,i} - \bb^*(\aa_{h,i})}_2   - 2\nu_2 \rho_2^h \\
    & \leq G(\aa^*)
\end{align*}

So for a node at time $t$ to be expanded the lower confidence value of that node $b_{h,i}$ should be lower than $G(\aa^*)$. Now again given $B_t$ we have that, 
\begin{align*}
    b_{h,i} &= F^{(te)}(\bb_{h,i}) - 2\nu_2 \rho_2^h \\
            &\geq G(\aa_{h,i}) - 3\nu_2 \rho_2^h. 
\end{align*}

Therefore, we have that $\PP(A_t^c) \leq \PP(B_t^c)$. Now, let $A$ be the event that over the course of the algorithm, no node outside of $J$ is every expanded. Let $T$ be the random variable denoting the total number of evaluations given our budget. We now have the following chain. 

\begin{align*}
    \PP(A^c) &= \PP\left( \bigcup _{T = 1}^{\Lambda} \left\{ \bigcup_{t=1}^{T} \{A_t^c\}\right\} \cap \{T = l\} \right) \\
    & \leq \sum_{T = 1}^{\Lambda} \PP \left( \bigcup_{t=1}^{T} \{A_t^c\}\} \right) \\
    &\leq \frac{1}{\Lambda^3}
\end{align*}

\begin{lemma}
	\label{lem:height}
	Let $h'$ be the smallest number $h$ such that $ \sum_{l = 0}^{h}
    2C(\nu_2,\rho_2) \lambda(h) \rho_2^{-d(\nu_2, \rho_2)l} > \Lambda -
    2\lambda(h+1)$. The tree in Algorithm~\ref{algo:mixnmatch} grows to a
    height of at least $h(\Lambda) = h'+1$, with probability at least $1 -
    \frac{1}{\Lambda^3}$.  Here, $\lambda(h)$ is as defined in Corollary~\ref{cor:step}. 
\end{lemma}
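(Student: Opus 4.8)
The plan is to feed the structural guarantee of Lemma~\ref{lem:union} into a budget-accounting argument driven by the near-optimality dimension. First I would condition on the high-probability event of Lemma~\ref{lem:union} --- call it $A$ --- which occurs with probability at least $1-\frac{1}{\Lambda^3}$ and on which every node ever expanded by Algorithm~\ref{algo:mixnmatch} lies in $J=\cup_h J_h$. Next I would bound the size of each layer $J_h$: if $(h,i)\in J_h$ then $G(\aa_{h,i})\le G(\aa^*)+3\nu_2\rho_2^h$, and since $\inf_{\aa\in(h,i)}G(\aa)\le G(\aa_{h,i})$ we get $J_h\subseteq\{(h,i):\inf_{\aa\in(h,i)}G(\aa)\le G(\aa^*)+3\nu_2\rho_2^h\}$, so by the definition of the near-optimality dimension $|J_h|\le\mathcal{N}_h(3\nu_2\rho_2^h)\le C(\nu_2,\rho_2)\,\rho_2^{-d(\nu_2,\rho_2)h}$.

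The core is a proof by contradiction on the event $A$. Suppose the tree produced by Algorithm~\ref{algo:mixnmatch} has height $H\le h'$ at termination. Then no node at height $\ge h'$ was ever expanded, so every expanded node lies in $\cup_{l=0}^{h'-1}J_l$; since a node is expanded at most once, the number of expansions at height $l$ is at most $|J_l|$. Using the height-independent budget $\lambda(l)\equiv\lambda$ furnished by Corollary~\ref{cor:step}, so that each node expansion costs $2\lambda$ SGD iterations, the total cost $C$ accumulated when the algorithm halts satisfies
\begin{align*}
C &\;\le\; \sum_{l=0}^{h'-1} 2\lambda(l)\,|J_l|\\
&\;\le\; \sum_{l=0}^{h'-1} 2C(\nu_2,\rho_2)\,\lambda(h'-1)\,\rho_2^{-d(\nu_2,\rho_2)l}\\
&\;\le\; \Lambda-2\lambda(h') \;<\; \Lambda,
\end{align*}
where the penultimate inequality is exactly the failure of the defining inequality of $h'$ at $h=h'-1$, guaranteed by minimality of $h'$. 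But Algorithm~\ref{algo:mixnmatch} exits its \textbf{while} loop only after the cost counter exceeds $\Lambda$, so in fact $C>\Lambda$ at termination --- a contradiction. Hence $H\ge h'+1$ on $A$, and since $\mathrm{Pr}(A)\ge 1-\frac{1}{\Lambda^3}$ this proves the claim with $h(\Lambda)=h'+1$.

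There is essentially no probabilistic content beyond Lemma~\ref{lem:union}; the only points requiring care are bookkeeping. The algorithm charges $2\lambda(h+1)$ for expanding a height-$h$ node inside the \textbf{while} loop (and $2\lambda(0)$ for the root, while Algorithm~\ref{algo:query} runs $\lambda(h)$ SGD steps per child); in the height-independent regime of Corollary~\ref{cor:step} all of these coincide with $2\lambda$, so the per-expansion cost is constant and the sum above equals the quantity appearing in the statement at $h=h'-1$, whereas for a genuinely increasing $\lambda$ one instead bounds $\lambda(l+1)\le\lambda(h')$ inside the sum by monotonicity. One should also note that the loop guard ``$C\le\Lambda$'' is tested before each iteration and every iteration strictly increases $C$, so $C>\Lambda$ indeed holds at termination (assuming $\Lambda\ge 2\lambda(0)$, so that the loop runs at least once; the edge case $h'=0$ is trivial, since the root expansion on line~1 of Algorithm~\ref{algo:mixnmatch} already yields height $1$). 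The main ``obstacle,'' such as it is, is simply organizing this accounting cleanly --- no delicate estimate is involved.
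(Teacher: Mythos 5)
Your proposal is correct and follows essentially the same route as the paper's proof: condition on the event of Lemma~\ref{lem:union}, bound $|J_h|$ by $C(\nu_2,\rho_2)\rho_2^{-d(\nu_2,\rho_2)h}$ via the near-optimality dimension, and account for the SGD budget consumed per level to force the tree to height $h'+1$. The only difference is presentational — you argue by contradiction on the terminal height while the paper argues directly via the minimum-height strategy restricted to $J$ — and your bookkeeping (minimality of $h'$, constant $\lambda$ from Corollary~\ref{cor:step}, loop-exit condition $C>\Lambda$) matches the paper's reasoning.
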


\begin{proof}
	We have shown that only the nodes in $J = \cup_{h}J_h$ are expanded. Also, note that by definition $\lvert J_h\rvert \leq C(\nu_2,\rho_2) \rho_2^{-d(\nu_2, \rho_2)}$. 
	
	Conditioned on the event $A$ in Lemma~\ref{lem:union}, let us consider the strategy that only expands nodes in $J$, but expands the leaf among the current leaves with the least height. This strategy yields the tree with minimum height among strategies that only expand nodes in $J$. The number of s.g.d steps incurred by this strategy till height $h'$ is given by, 
	
	\begin{align*}
	\sum_{l = 0}^{h'}2 C(\nu_2,\rho_2) \lambda(l) \rho_2^{-d(\nu_2, \rho_2)l}.
	\end{align*}
 	
 Since the above number is greater than to $\Lambda -2\lambda(h'+1)$ another set of children at height $h'+1$ is expanded and then the algorithm terminates because of the check in the while loop in step 4 of Algorithm~\ref{algo:mixnmatch}. Therefore, the resultant tree has a height of at least $h'+1$.
\end{proof}

\begin{proof}[Proof of Theorem~\ref{thm:tree}]
Given that event $A$ in Lemma~\ref{lem:union} holds, Lemma~\ref{lem:height} shows that at least one node at height $h'$ (say $(h',i)$) is expanded and one of that node's children say $\aa_{h'+1,i'}$ is returned by the algorithm. Note that $(h',i)$ is in $J_h$ and therefore $G(\aa_{h',i}) - 3\nu_2\rho_2^{h'}  \geq G(\aa^*)$. Invoking the smoothness property in Corollary~\ref{cor:alphaGeometricDecay}, we get that
\begin{equation}
    G(\aa_{h'+1,i'}) \leq G(\aa^*) + 4\nu_2\rho_2^{h'}.  
\end{equation}
\end{proof}

 \section{Scaling of \texorpdfstring{$h(\Lambda)$}{} and
\texorpdfstring{$\lambda(h)$}{}}
\label{sec:lambdaHScaling}

In this section, we discuss how to interpret the scaling of the height function 
$h(\Lambda)$ from Theorem \ref{thm:tree} and the SGD budget allocation strategy
$\lambda(h)$ from Corollary \ref{cor:step}.

Let us take $k=0$ in Theorem \ref{thm:SGD_improved}, and assume the third term
in the high probability bound is dominant: that is, for some constant $K$ large
enough, taking $C=\frac{4\sqrt{2\check{C}}K\log\Lambda^8}{\mu}$, we want to choose
$\lambda(h)$ to satisfy:
\begin{align}
    \frac{C\log\Lambda}{\sqrt{\lambda(h) + E}} \leq \nu_1\rho^{h+1}.
\end{align}
Then, solving for $\lambda(h)$, we have that
\begin{align}
    \lambda(h) &= \left(\frac{C\log\Lambda}{\nu_1\rho^{h+1}}\right)^2 - E\\
    &= \tilde{O}\left(\frac{1}{\rho^{2h}}\right)
\end{align}
Thus, outside of the constant scaling regime discussed in Corollary \ref{cor:step},
we expect SGD to take an exponential (in height) number of SGD steps in order to obtain
a solution that is of distance $\nu_1\rho^{h+1}$ from the optimal solution w.h.p. (Recall
that $\rho\in(0,1)$)

In light of this, we may discuss now how the depth of the seach tree, $h(\Lambda)$, scales
as a function of the total SGD budget $\Lambda$. We will let 
\begin{align}
    \lambda(h) = \begin{cases}
        \lambda_{const} & \text{When $h$ is in constant step size regime}\\
        \frac{C'\log^2\Lambda}{\nu_1\rho^{2 h}} & \text{Outside of this regime,
        for $C'$ chosen large enough}
    \end{cases}
\end{align}
We may thus solve for $h'$ from Theorem \ref{thm:tree} as follows. Denote
$h_{const}$ as the maximum height of the tree for which $\lambda(h)=\lambda_{const}$ for all
$h\leq h_{const}$. Then:
\begin{align*}
    \sum_{i=0}^{h(\Lambda)-1} 2C(\nu_2,\rho_2)\lambda(l)\rho^{-d(\nu_2,\rho_2)l} &=
    2C(\nu_2,\rho_2)\lambda_{const}\sum_{i=0}^{h_{const}} \rho^{-d(\nu_2,\rho_2)l} + 2
    \tilde{C}\log^2\Lambda\sum_{l=h_{const}+1}^{h(\Lambda)-1}\rho^{-(d(\nu_2,\rho_2)+2)l}\\
    &= \underbrace{2C(\nu_2,\rho_2)\lambda_{const}
    \frac{\rho^{-d(\nu_2,\rho_2)(h_{const}+1)}-1}{\rho^{-d(\nu_2,\rho_2)}-1}}_{T1}\\
    &~~~~+ \underbrace{2\tilde{C}\log^2\Lambda\frac{\rho^{-(d(\nu_2,\rho_2)+2)h(\Lambda)}-\rho^{-(d(\nu_2,\rho_2)+2)(h_{const}+2)}}{\rho^{-(d(\nu_2,\rho_2)+2)}-1}}_{T2}\\
    &\overset{want}{>} \Lambda - 2\lambda(h(\Lambda))
\end{align*}

Now, observe that when $h_{const}=h(\Lambda)$, then $T2=0$, and we need that, solving for
$h_{const}$,
\begin{align*}
    h(\Lambda) > \frac{1}{d(\nu_2,\rho_2)}\log_{\frac{1}{\rho}}\left(\frac{\rho^{-d}-1}{2C(\nu_2,\rho_2)\lambda_{const}}(\Lambda - 2\lambda_{const}) + 1\right)
\end{align*}
and thus, $h(\Lambda)=h_{const}$ scales as $O(\log_{\frac{1}{\rho}}\Lambda)$ w.h.p.

When $\Lambda$ is sufficiently large so that $h(\Lambda) > h_{const}$ and $h_{const}$ 
can be taken as a constant, we need that, for a sufficiently large constant $\hat{C}$,
\begin{align*}
    \hat{C}\log^2\Lambda\frac{\rho^{-(d(\nu_2,\rho_2)+2)h(\Lambda)}-\rho^{-(d(\nu_2,\rho_2)+2)(h_{const}+2)}}{\rho^{-(d(\nu_2,\rho_2)+2)}-1}
    &\overset{want}{>} \Lambda - 2\frac{C\log^2\Lambda}{\nu_1\rho^{2h(\Lambda)}}
\end{align*}
Solving for $h(\Lambda)$, we find that, for some large enough constant $\hat{\hat{C}}$,
we must have that
\begin{align*}
    h(\Lambda) >
    \frac{1}{d(\nu_2,\rho_2)+2}\left(\log_{\frac{1}{\rho}}\frac{\Lambda}{\hat{\hat{C}}\log^2\Lambda}\right)
\end{align*}
and thus, in this case, $h(\Lambda)$ scales as
$O\left(\log_{\frac{1}{\rho}}\frac{\Lambda}{\log^2\Lambda}\right)$ w.h.p.

In the context of Theorem \ref{thm:tree}, this scaling shows that the
simple regret of our algorithm, $R(\Lambda)$, scales roughly
as $\tilde{O}\left(\frac{1}{\Lambda^c}\right)$ for some constant $c$.
Thus, in certain small validation set regimes
as discussed in Remark \ref{rem:smallValidationRegime}, \match{} gives an 
\textit{exponential improvement in simple regret} compared to an 
algorithm which trains only on the validation dataset.
 \section{Additional Experimental Details}\label{sec:expers-appendix}

\subsection{Details about the experimental setup}
 
All experiments were run 
in python:3.7.3 Docker containers (see \url{https://hub.docker.com/_/python}) managed by
\href{https://cloud.google.com/kubernetes-engine/}{Google Kubernetes Engine} running on Google
Cloud Platform on n1-standard-4 instances. Hyperparameter tuning is performed
using the Katib framework (\url{https://github.com/kubeflow/katib}) using the
validation error as the objective.
The code used to create the testing infrastructure can be found at
\url{https://github.com/matthewfaw/mixnmatch-infrastructure},
and the code used to run experiments can be found at
\url{https://github.com/matthewfaw/mixnmatch}.
 
\subsection{Details about the multiclass AUC metric}

We briefly discuss the AUC metric used throughout our experiments. We evaluate
each of our classification tasks using the multi-class generalization of area under the ROC curve (AUROC) proposed by \cite{Hand2001}. This metric considers each pair of classes (i,j),
and for each pair, computes an estimate for the probability that a random sample from class $j$ has lower probability of being labeled as class $i$ than a random sample from class $j$. The metric reported is the average of each of these pairwise estimates.
This AUC genenralization is implemented in the R pROC library
\url{https://rdrr.io/cran/pROC/man/multiclass.html},
and also in the upcoming release of sklearn 0.22.0 
\url{https://github.com/scikit-learn/scikit-learn/pull/12789}.
In our experiments, we use the sklearn implementation.

\subsection{Description of algorithms used}
In the sections that follow, we will reference the following algorithms considered in our experiments. We note that the algorithms discussed
in this section are a superset of those discussed in Section \ref{sec:sims}. 

\begin{table}[h!]
\caption{Description of the algorithms used in the experiments}
\label{tab:algoDescription}
\vskip 0.15in
\centering
\begin{tabular}{ |p{0.25\textwidth}|p{0.7\textwidth}| } 
\hline
Algorithm ID & Description \\
\hline
\texttt{Mix\&MatchCH} & The \match{} algorithm, where the simplex is partitioned using a random coordinate halving scheme \\
\texttt{Mix\&MatchDP} & The \match{} algoirhtm, where the simplex is partitioned using the Delaunay partitioning scheme\\
\texttt{Mix\&MatchCH+0.1Step} & Runs the \texttt{Mix\&MatchCH} algorithm for
the first half of the SGD budget, and runs SGD sampling according to the
mixture returned by \match{} for the second half of the SGD budget, using a
step size 0.1 times the size used by \match{} \\
\texttt{Mix\&MatchDP+0.1Step} & Runs the \texttt{Mix\&MatchDP} algorithm for
the first half of the SGD budget, and runs SGD sampling according to the
mixture returned by \match{} for the second half of the SGD budget, using a
step size 0.1 times the size used by \match{}\\
\hline
\texttt{Genie} & Runs SGD, sampling from the training set according to the test set mixture\\
\texttt{Validation} & Runs SGD, sampling only from the validation set according to the test set mixture\\
\texttt{Uniform} & Runs SGD, sampling uniformly from the training set\\
\texttt{OnlyX} & Runs SGD, sampling only from dataset X\\
\hline
\end{tabular}
\end{table}

\subsection{Allstate Purchase Prediction Challenge -- Correcting for shifted mixtures}
Here, we provide more details about the experiment on the Allstate dataset \cite{kaggle} discussed in Section \ref{sec:sims}.
Recall that in this experiment, we consider the mixture space over which
\match{} searches to be the set of mixtures of 
data from Florida (FL), Connecticut (CT), and Ohio (OH). We take $\aa^*$ to be the proportion of each state in the 
test set.
The breakdown of the training/validation/test
split for the Allstate experiment is shown in Table \ref{tab:allstate}.
\begin{table}[h!]
\caption{The proportions of data from each state used in training, validation, and testing for Figure \ref{fig:allstate3auc} and \ref{fig:allstate_auc_appendix}}
\label{tab:allstate}
\vskip 0.15in
\centering
\begin{tabular}{ |c|c|c|c|c|c| } 
\hline
State & Total Size & \% Train & \% Validate & \% Test & \% Discarded \\
\hline
FL & 14605 & 49.34 & 0.16 & 0.5 & 50\\
CT & 2836 & 50 & 7.5 & 42.5 & 0\\
OH & 6664 & 2.25 & 0.75 & 2.25 & 94.75\\
\hline
\end{tabular}
\end{table}

Here, each \match{} algorithm allocates a height-independent 500 samples for each tree search node on which SGD is run.
Each algorithm uses a batch size of 100 to compute stochastic gradients.

\subsubsection{Dataset transformations performed}

We note that in the dataset provided by Kaggle, the data for a single customer is spread across multiple rows of the dataset, since for each customer there some number (different for various customers) of intermediate transactions, followed by a row corresponding to the insurance plan the customer ultimately selected. We collapse the dataset so that each row corresponds to the information of a distinct customer. To do this, for each customer, we preserve the final insurance plan selected, the penultimate insurance plan selected in their history, the final and penultimate cost of the plan. Additionally, we create a column indicating the total number of days the customer spent before making their final transaction, as well as a column indicating whether or not a day elapsed between intermediate and final purchase, a column indicating whether the cost of the insurance plan changed, and a column containing the price amount the insurance plan changed between the penultimate and final purchase. For every other feature, we preserve only the value in the row corresponding to the purchase. We additionally one-hot encode the car\_value feature. Additionally, we note that we predict only one part of the insurance plan (the G category, which takes 4 possible values). We keep all other parts of the insurance plan as features.

\subsubsection{Experimental results}

Figure \ref{fig:allstate_auc_appendix} shows the results of the same experiment as discussed in Section \ref{sec:sims}. We note
that there are now several variants of the \match{} algorithm, whose implementations are described in Table \ref{tab:algoDescription}.
We observe that, in this experiment, the two simplex partitioning schemes result in algorithms that all have similar performance on the 
test set, and each instance of \match{} outperforms algorithms which train only on a single state's training set as well as the algorithm which
trains only on the validation dataset, which is able to sample using the same mixture as the test set, but with limited amounts of data.
Additionally, each instance of \match{} has either competitive or better performance than the \texttt{Uniform} algorithm, and has performance
competitive with the Genie algorithm.
\begin{figure}[ht]
    \centering
\centering
        \includegraphics[width=.5\linewidth]{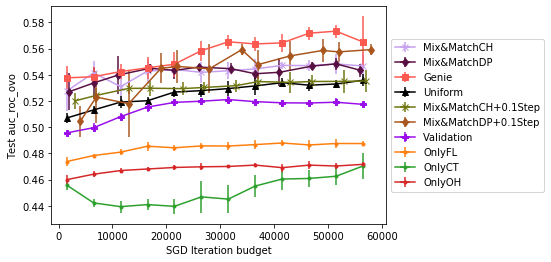}
\caption{Test One vs One AUROC for Mixture of FL, CT, and OH}
\label{fig:allstate_auc_appendix}
\end{figure}

\subsection{Wine Ratings}

We consider the effectiveness of using Algorithm 1 to make predictions on a new region by training on data from other, different regions. For this experiment, we use another Kaggle dataset \cite{wine}, in which we are provided binary labels indicating the presence of particular tasting notes of the wine, as well as a point score of the wine and the price quartile of the wine, for a number of wine-producing countries. 
We will consider several different experiments on this dataset.

We will consider again algorithms discussed in Table \ref{tab:algoDescription}.
Throughout these experiments, we will consider searching over the mixture space of proportions of datasets of wine from countries US, Italy, France, and Spain.
Note that the \texttt{Genie} experiment is
not run since there is no natural choice for $\aa^*,$ as we are aiming to predict on a new country.

\subsubsection{Dataset transformations performed}

The dataset provided through Kaggle consists of binary features describing the country of origin of each wine, as well as tasting notes, and additionally a numerical score for the wine, and the price. We split the dataset based on country of origin (and drop the country during training), and add as an additional target variable the price quartile. We keep all other features in the dataset. In the experiment predicting wine prices, we drop the price quartile column, and in the experiment predicting wine price quartiles, we drop the price column.

\subsubsection{Predict wine prices}

In this section, we consider the task of predicting wine prices in Chile and Australia by using training data from US, Italy, France, and Spain.
The train/validation/test set breakdown is described in Table \ref{tab:wine_npq}.
We use each considered algorithm to train a fully connected neural network with two hidden layers and sigmoid activations, 
similarly as considered in \cite{moew}. We plot the test mean absolute error of each considered algorithm.

Here, each \match{} algorithm allocates a height-independent 500 samples for each tree search node on which SGD is run.
Each algorithm uses a batch size of 25 to compute stochastic gradients.

\begin{table}[h!]
\caption{The proportions of data from each state used in training, validation,
and testing for Figure \ref{fig:wine_npq}}
\label{tab:wine_npq}
\vskip 0.15in
\centering
\begin{tabular}{ |c|c|c|c|c|c| } 
\hline
Country & Total Size & \% Train & \% Validate & \% Test & \% Discarded \\
\hline
US & 54265 & 100 & 0 & 0 & 0 \\
France & 17776 & 100 & 0 & 0 & 0 \\
Italy & 16914 & 100 & 0 & 0 & 0 \\
Spain & 6573 & 100 & 0 & 0 & 0 \\
Chile & 4416 & 0 & 5 & 95 & 0 \\
Australia & 2294 & 0 & 5 & 95 & 0 \\
\hline
\end{tabular}
\end{table}

The results of this experiment are shown in Figure \ref{fig:wine_npq}. There are several interesting takeaways from this experiment.
First is the sensitivity of \match{} to choice of partitioning scheme. While \texttt{Mix\&MatchCH} outperforms the \texttt{Uniform} algorithm
and each \texttt{OnlyX} algorithm, \texttt{Mix\&MatchDP} performs poorly. Note that each node in the search tree under Delaunay partitioning
can have $dim(\mathcal{A})$ ($=4$ in this experiment) children, each node in the coordinate halving scheme only has two children. Thus, it
seems that perhaps the Dealunay partitioning scheme is overly wasteful in its allocation of SGD budget. However, when considering the 
split budget \match{} algorithms which search for mixtures only for half of their SGD budget, and commit to a mixture for the remaining half,
the performance gap between the two partitioning schemes is much less noticeable.

The second interesting takeaway from this experiment is that, in contrast to the other experiments considered in this paper, in this experiment,
it seems that although \match{} outperforms both the \texttt{Uniform} algorithm and \texttt{OnlyX} algorithm, it only matches the performance
of the algorithm which trains only on the validation dataset. This highlights
an important point of the applicability of the \match{} algorithm.
Running \match{} makes sense only when there is insufficient validation data to train a good model.

\begin{figure}[ht]
    \centering
\includegraphics[width=.5\linewidth]{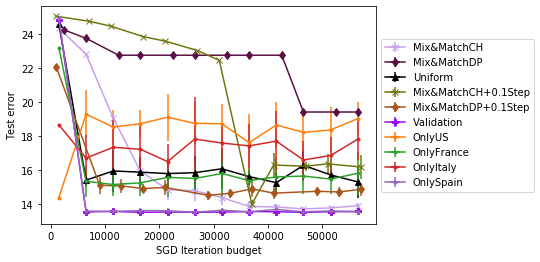}
\caption{Test Mean Absolute Error for Mixture of US, France, Italy, and Spain data, Predict in Chile and Australia}
\label{fig:wine_npq}
\end{figure}

\subsubsection{Predict wine price quartiles}

In this experiment, we consider a classification version of the regression problem considered in the last experiment.
In particular, we have access to training wine data from US, Italy, Spain, and France, and wish to predict the quartile of the wine
price for wines from Chile. The train/validate/test breakdown in given in Table \ref{tab:wine_ppq}.
We use each algorithm to train a fully connected neural network with 3 hidden layers and ReLU activations, and evaluate based on
the One vs One AUROC metric described in \cite{Hand2001}. The experimental results are shown in Figure \ref{fig:wine_ppq}

Here, each \match{} algorithm allocates a height-independent 1000 samples for each tree search node on which SGD is run.
Each algorithm uses a batch size of 25 to compute stochastic gradients.

\begin{table}[h!]
\caption{The proportions of data from each state used in training, validation, and testing for Figure \ref{fig:wine_ppq}}
\label{tab:wine_ppq}
\vskip 0.15in
\centering
\begin{tabular}{ |c|c|c|c|c|c| } 
\hline
Country & Total Size & \% Train & \% Validate & \% Test & \% Discarded \\
\hline
US & 54265 & 100 & 0 & 0 & 0 \\
France & 17776 & 100 & 0 & 0 & 0 \\
Italy & 16914 & 100 & 0 & 0 & 0 \\
Spain & 6573 & 100 & 0 & 0 & 0 \\
Chile & 4416 & 0 & 5 & 95 & 0 \\
\hline
\end{tabular}
\end{table}
\begin{figure}[ht]
    \centering
\centering
        \includegraphics[width=.5\linewidth]{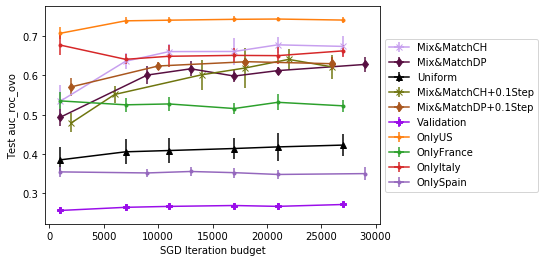}
\caption{Test One vs One AUROC for Mixture of US, France, Italy, and Spain data, Predict in Chile}
\label{fig:wine_ppq}
\end{figure}

We observe that each instance of \match{} outperforms both \texttt{Uniform} and \texttt{Validation} (which has quite poor performance in this experiment), and has competitive performance with the best \texttt{OnlyX} algorithm.

\subsection{Amazon Employee Access Challenge}

Here, we consider yet another Kaggle dataset, the Amazon Employee Access Challenge \cite{amazon}. In this problem, we are given
a dataset of features about employees from a variety of departments, and we wish to predict whether or not a employee is granted access to the system. We note that this dataset is extremely imbalanced, as most requests are approved.

We split this dataset according to department names, with splits shown in Table \ref{tab:amazon}. We use each algorithm to train a fully connected neural network with 3 hidden layers and ReLU activations, and evaluate based on the One vs One AUC metric described in \cite{Hand2001}. 
Here, each \match{} algorithm allocates a height-independent 1000 samples for each tree search node on which SGD is run.
Each algorithm uses a batch size of 50 to compute stochastic gradients.

\subsubsection{Dataset transformations performed}

The dataset provided by Kaggle contains only 10 categorical features.  We one-hot encode each of these features, except ROLE\_DEPTNAME, which we use to split the dataset, and drop during training. Since one-hot encoding these features produces approximately 15000 features, for simplicity, we 
drop from the dataset all features which have fewer than 50 1s.

\subsubsection{Experimental results}

\begin{table}[h!]
\caption{The proportions of data from each state used in training, validation,
and testing for Figure \ref{fig:amazon-appendix}}
\label{tab:amazon}
\vskip 0.15in
\centering
\begin{tabular}{ |c|c|c|c|c|c| } 
\hline
ROLE\_DEPTNAME & Total Size & \% Train & \% Validate & \% Test & \% Discarded \\
\hline
117878 & 1135 & 100 & 0 & 0 & 0 \\
117941 & 763 & 100 & 0 & 0 & 0 \\
117945 & 659 & 100 & 0 & 0 & 0 \\
117920 & 597 & 100 & 0 & 0 & 0 \\
120663 & 335 & 0 & 30 & 70 & 0 \\
\hline
\end{tabular}
\end{table}
\begin{figure}[ht]
    \centering
\centering
        \includegraphics[width=.5\linewidth]{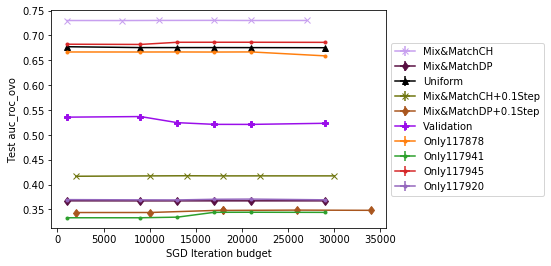}
\caption{Test One vs One AUROC for predicting employee access in a new department}
\label{fig:amazon-appendix}
\end{figure}

The experimental results are shown in Figure \ref{fig:amazon-appendix}.
Here, we observe that \texttt{Mix\&MatchCH} has superior performance to all baseline algorithms. We also observe that,
in this experiment, it seems that spending the entire budget searching over mixtures is more effective than spending only half 
of the SGD budget. Additionally, it seems that using the coordinate halving
partitioning strategy produces better models than \match{} run with the 
Delaunay partitioning scheme.

\end{document}